\newcommand{\R}{\mathbb R}
\newcommand{\x}\times
\newcommand{\V}{\mathbb V}
\newcommand{\E}{\mathbb E}
\newcommand{\PP}{\mathbb P}
\newcommand{\cN}{\mathcal N}
\DeclareMathOperator*{\diag}{diag}
\newcommand{\cR}{\mathcal{R}}
\newcommand{\orr}[1]{\overrightarrow{#1}}
\newcommand{\orl}[1]{\overleftarrow{#1}}
\definecolor{ForestGreen}{cmyk}{0.91,0,0.88,0.12}
\colorlet{pierrem}{ForestGreen}
\title[Large Learning Rates in Denoising Score Matching Prevent Memorization]{Taking a Big Step: Large Learning Rates in Denoising Score Matching Prevent Memorization}
\begin{document}
\maketitle

\begin{abstract}%
Denoising score matching plays a pivotal role in the performance of diffusion-based generative models.
However, the empirical optimal score--the exact solution to the denoising score matching--leads to memorization, where generated samples replicate the training data. Yet, in practice, only a moderate degree of memorization is observed, even without explicit regularization. In this paper, we investigate this phenomenon by uncovering an implicit regularization mechanism driven by large learning rates. Specifically, we show that in the small-noise regime, the empirical optimal score exhibits high irregularity. We then prove that, when trained by stochastic gradient descent with a large enough learning rate, neural networks cannot stably converge to a local minimum with arbitrarily small excess risk. Consequently, the learned score cannot be arbitrarily close to the empirical optimal score, thereby mitigating memorization. To make the analysis tractable, we consider one-dimensional data and two-layer neural networks. Experiments validate the crucial role of the learning rate in preventing memorization, even beyond the one-dimensional setting.
\end{abstract}

\begin{keywords}%
 diffusion models, denoising score matching, implicit regularization, neural networks
\end{keywords}

\section{Introduction}  \label{sec:intro}
Diffusion models have achieved remarkable success in generative modeling across a wide range of tasks, including computer vision~\citep{amit2021segdiff, baranchuk2021label}, temporal data modeling~\citep{chen2020wavegrad,alcaraz2022diffusion}, multimodal modeling~\citep{ramesh2022hierarchical, rombach2022high}, and natural language processing (\citealp[NLP,][]{austin2021structured, savinov2021step}). Using diffusion models for generative modeling was first proposed by \cite{sohl2015deep}. Subsequently, denoising diffusions reached state-of-the-art performance \citep{song2019generative,ho2020denoising} when trained efficiently with denoising score matching \citep{hyvarinen2005estimation,vincent2011connection}. This training objective consists in learning to denoise artificially perturbed images from the training sample, which is mathematically equivalent to learning the gradient of the log-density, or \textit{score}, of the noisy empirical data distribution. Once score matching has been performed, new observations can be generated by running a backward diffusion process that involves the learned score.
Beyond applications in diffusion, denoising score matching is widely used in various tasks~\citep[see, for example,][]{milanfar2024denoising}, including image restoration~\citep{venkatakrishnan2013plug,teodoro2016image} and nonlinear inverse problems~\citep{wu2019online}. 

However, despite the remarkable effectiveness of denoising score matching, the theoretical properties of this training procedure remain unclear. In particular, if the score matching objective were solved perfectly, meaning that the learned score is equal to the score of the empirical data distribution, then the distribution generated by the diffusion process would exactly coincide with the empirical distribution \citep{li2024good}, a failure mode known as \textit{memorization}. Avoiding this issue is crucial in terms of privacy, intellectual property rights \citep{vyas2023provable, zhang2023copyright}, and ability of models to effectively generate new, unseen data. 

Empirically, only a moderate amount of memorization is observed in practical settings \citep{carlini2023extracting,somepalli2023diffusion,gu2023memorization,kadkhodaie2023generalization}, even without explicitly regularizing the training objective (see Section \ref{sec:related-work} for details on explicit regularization). This suggests the existence of an \textit{implicit regularization} mechanism that prevents exact solving of the denoising score matching problem and thus full memorization of the training data. However, the nature of this regularization is an open problem, as already highlighted in \citet{biroli2024dynamical}.

\paragraph{Contributions.} In this paper, we approach the question through the lens of the (ir)regularity of the empirical optimal score, drawing a connection with the learning rate of stochastic gradient descent (SGD). 
We first show that the empirical optimal score is irregular in the sense that its derivative has a large (weighted) total variation (Section~\ref{sec:regularity-empirical-score}).
Building upon a recent literature on the impact of the learning rate on the regularity of stable minima for SGD \citep{mulayoff2021implicit,qiao2024stableminimaoverfitunivariate}, we prove that the empirical optimal score cannot be stably learned by SGD unless the learning rate becomes vanishingly small (Section~\ref{sec:main}).
Our main result can be informally stated as follows.
\begin{theorem}
\label{thm:intro}(informal)
Consider the denoising score matching objective $\mathcal{R}_n$ over the class of two-layer neural networks for one-dimensional data.  
Then, for a sufficiently small level of noise $\sigma$ and a learning rate $\eta \gtrsim \sigma^2$, the stochastic gradient descent on $\mathcal{R}_n$ cannot stably converge to a local minimum with arbitrarily small excess risk.
\end{theorem}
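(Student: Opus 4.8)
The strategy is to make two estimates meet: a lower bound on the irregularity of any score that is close in weighted $L^2$ to the empirical optimal score $s_n^\star = \nabla\log p_\sigma$, and an upper bound — coming from the linear stability of SGD with step size $\eta$ — on the irregularity of any function that can be a local minimum reached by the algorithm. First I would record the regression nature of the objective: using $\E[Z\mid X_\sigma]=-\sigma s_n^\star(X_\sigma)$, a bias--variance decomposition of the denoising objective gives
\[\cR_n(\theta)-\inf\cR_n \;=\; \sigma^2\,\|s_\theta-s_n^\star\|_{L^2(p_\sigma)}^2,\]
so ``arbitrarily small excess risk'' means ``$s_\theta\to s_n^\star$ in $L^2(p_\sigma)$''. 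By the analysis of Section~\ref{sec:regularity-empirical-score}, the derivative of $s_n^\star$ has weighted total variation that diverges, faster than $\sigma^{-2}$, as $\sigma\to0$; what I actually need is the quantitative upgrade ``any $s$ with $\|s-s_n^\star\|_{L^2(p_\sigma)}$ small has weighted total variation of $s'$ at least $L(\sigma,n)$'', with $L(\sigma,n)$ growing faster than $\sigma^{-2}$. Informally, $s_n^\star$ attracts towards each of the $n$ data points with strength $\sigma^{-2}$ on a window of width $\Theta(\sigma)$, so a near-optimal score must realize $n$ distinct steep features, and a piecewise-affine function that does so must pay a correspondingly large total variation of its slope.

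The second ingredient converts linear stability into a curvature bound. At a local minimum $\theta^\star$, the Hessian of $\cR_n$ is the Gauss--Newton operator $2\sigma^2\,\E_{p_\sigma}[\nabla_\theta s_{\theta^\star}\nabla_\theta s_{\theta^\star}^\top]$ plus a remainder which, again via $\E[Z\mid X_\sigma]=-\sigma s_n^\star(X_\sigma)$, is controlled by the square root of the excess risk; the SGD gradient-noise covariance, in turn, is governed by the conditional variance of the denoising residual, which by the second-order Tweedie formula equals $1+\sigma^2(s_n^\star)'(X_\sigma)$ and hence has expectation at most $1$ under $p_\sigma$. Linear stability — boundedness of the second moments of the linearized SGD recursion, a spectral condition coupling $\Id-\eta\hess$ with this covariance — forces the top eigenvalue of $\hess$ to be $O(1/\eta)$ up to the bounded correction. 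Invoking the function-space lower bound of \citet{mulayoff2021implicit,qiao2024stableminimaoverfitunivariate}, which for a univariate two-layer ReLU network controls the weighted total variation of $s_{\theta^\star}'$ — the appropriately weighted total mass of the atomic measure $s_{\theta^\star}'' = \sum_j a_j|w_j|\,\delta_{-b_j/w_j}$ — by the sharpness, this becomes: at a stable minimum the weighted total variation of $s_{\theta^\star}'$ is at most $C\sqrt n\,(\eta\sigma^2)^{-1/2}$. Under the hypothesis $\eta\gtrsim\sigma^2$ this is $O(\sqrt n\,\sigma^{-2})$, which is strictly below $L(\sigma,n)$ once $\sigma$ is small enough. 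Combining, a linearly stable local minimum with small enough excess risk would have weighted total variation of its derivative both $\ge L(\sigma,n)$ and $\lesssim\sqrt n\,\sigma^{-2}<L(\sigma,n)$, a contradiction; hence no such minimum exists.

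\emph{Main obstacle.} The crux is to make these two bounds meet with the right powers of $\sigma$. On the stability side this requires the function-space translation for \emph{this} Hessian — with its $\sigma^2$ prefactor and the remainder at a merely near-optimal minimum — and checking that the gradient-noise covariance, which Tweedie ties to $(s_n^\star)'$, does not weaken the $O(1/\eta)$ sharpness bound. On the approximation side, the quantitative lower bound $L(\sigma,n)$ cannot follow from generic bounded-variation arguments, since an $L^2(p_\sigma)$-small function may still oscillate; it must come from the fine geometry of $s_n^\star$ near the data, carefully trading the gain of placing breakpoints in low-density regions against the $L^2(p_\sigma)$ cost of failing to reproduce the slope $-1/\sigma^2$ on the windows around the data points. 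Pinning down the exponent in $L(\sigma,n)$ so that the threshold lands exactly at $\eta\asymp\sigma^2$ is where most of the work lies.
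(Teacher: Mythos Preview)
Your high-level strategy is exactly the paper's: bound the weighted total variation $\textnormal{TV}_\pi^{(1)}(s_{\theta^\star})$ from above via linear stability (through the sharpness bound on the Hessian) and from below via closeness to $s^\star$, then derive a contradiction when the excess risk is small. The gap is quantitative, and it is precisely the ``main obstacle'' you flag: your stated exponents do not meet, and the fix you propose is not available.

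Concretely: in the paper's normalization of $\cR_n$ (equation \eqref{eq:score-matching-objective}) the Gauss--Newton part of the Hessian is $\frac{2}{n}\sum_i \E[\nabla_\theta s_\theta \nabla_\theta s_\theta^\top]$ with \emph{no} $\sigma^2$ prefactor, and the excess risk is $\|s_\theta-s^\star\|^2_{L^2(p_\sigma)}$ with no $\sigma^2$ either; your extra $\sigma^2$ looks like a silent switch to the noise-prediction loss, which would shift the meaning of ``$\eta\gtrsim\sigma^2$''. The Mulayoff-type function-space bound is linear, not square-root: Proposition~\ref{prop:top-eigenvalue-second-derivative} gives $\lambda_{\max}$ of the Gauss--Newton term $\geq \tfrac{2}{m}\textnormal{TV}_\pi^{(1)}(s_\theta)$, whence stability yields $\textnormal{TV}_\pi^{(1)}(s_{\theta^\star})\leq \tfrac{1}{2\eta}$ plus remainders depending on $\cR_n(\theta^\star)$ (Proposition~\ref{prop:second-derivative-upper-bound})---not $C\sqrt{n}\,(\eta\sigma^2)^{-1/2}$. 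Most importantly, the lower bound for \emph{near-optimal} networks is only of order $\sigma^{-2}$, not faster: Proposition~\ref{prop:second-derivative-lower-bound} shows that any $\theta$ with $\cR_n(\theta)\leq\tfrac{1}{16n\sigma^2}$ satisfies $\textnormal{TV}_\pi^{(1)}(s_\theta)\geq \tfrac{\mu n^2\Delta}{2^{11}\sigma^2}$. (The $\sigma^{-4}$ rate of Theorem~\ref{thm:empirical-score-irrgularity} holds only for the exact $s^\star$ and does not survive passing to nearby piecewise-linear functions, so your plan to rely on $L(\sigma,n)$ growing strictly faster than $\sigma^{-2}$ cannot be carried out.) With the correct pair---$1/(2\eta)$ above and $c/\sigma^2$ below---the threshold $\eta\gtrsim\sigma^2$ drops out directly, without any need for your SGD gradient-noise covariance analysis; the paper uses only the deterministic sharpness condition $\lambda_{\max}(\nabla^2\cR_n(\theta^\star))\leq \tfrac{2}{m\eta}$ from \citet{mulayoff2021implicit}.
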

The take-home message is that for $\sigma$ small enough and $\eta$ large enough, the learned score cannot be arbitrarily close to the empirical optimal one.
To the best of our knowledge, this is the first demonstration of an implicit regularization mechanism in denoising score matching that prevents (full) memorization, thanks to the non-vanishing learning rate used in practice.
Focusing on small values of $\sigma$ is reasonable, as they correspond to the last steps of the backward diffusion, which are known to play a critical role in memorization \citep{raya2023spontaneous,biroli2024dynamical}.

We state our result with SGD because the objective $\mathcal{R}_n$ writes as an expectation, hence requiring stochastic approximation. However, our proof also carries over to GD on the population risk---see Appendix \ref{app:comments-gradient-descent} for further comments. Therefore, 
the non-convergence of SGD towards the global minimizer is not due to the lack of handling the variance of the gradient estimates, but rather to an (implicit) bias due to the large learning rate.

Our results are illustrated through experiments in Section~\ref{sec:experiments}, supporting the connection between the choice of learning rate and memorization, and suggesting that our findings extend beyond the one-dimensional case. %
Some open questions are discussed in Section \ref{sec:conclusion}.

\section{Related work}
\label{sec:related-work}
\paragraph{Implicit bias of large learning rates and minima stability.} 
Large learning rates are an essential implicit regularization mechanism in deep learning \citep[see, e.g.,][]{li2019towards,andriushchenko2023sgd}. In particular, the learning rate provides an upper bound on the maximal eigenvalue of the Hessian of the risk at a twice-differentiable stable minimum \citep{wu2018sgd}. This result can be extended to non-differentiable minima for underparameterized networks \citep{mulayoff2021implicit}. In addition, \citet{qiao2024stableminimaoverfitunivariate} prove a generalization bound for twice-differentiable stable minima in regression tasks, by relating the condition on the Hessian to the functions that can be represented by the neural network. In the present paper, we investigate the impact of large learning rates in the setting of denoising score matching. An important feature of our analysis is that the noise in the training objective of score matching has a regularizing effect, as highlighted by Lemma \ref{lem:twice-differentiability}. This regularizing effect guarantees the necessary assumption that SGD reaches a twice-differentiable local minimum. This contrasts with the standard regression case, where this hypothesis is hardly met (indeed, in general, SGD with ReLU networks tends to align the kinks with the data points; see, e.g., \citealt{boursier2023penalising}, resulting in lower regularity—$C^1$ instead of $C^2$). Importantly, our proof technique should be adaptable to other tasks with noise in the training objective, for example in robust learning or dropout regularization.

\paragraph{Memorization effect of diffusion models.} Diffusion models were found to generate replicas of their training data \citep[see, e.g.,][]{carlini2023extracting,somepalli2023diffusion, somepalli2023understanding}, raising privacy and security concerns. Following these initial observations, a series of papers quantified this memorization phenomenon \citep{gu2023memorization,yoon2023diffusion,kadkhodaie2023generalization}. These articles experimentally demonstrate a transition from memorization to generalization as the sample size increases, showing that with practical sample sizes, the extent of memorization is limited. Furthermore, \citet{kadkhodaie2023generalization} link the generalization ability of diffusion models to their adaptability to the underlying geometric structure of the data. Finally, \citet{gu2023memorization}, \citet{yi2023generalization}, and \citet{li2024good} show that diffusion models with the empirical optimal score exhibit full memorization.  

\paragraph{Regularization of denoising score matching.}
In practice, several methods can be used to mitigate memorization. 
Regularization techniques like weight decay, dropout, or data corruption, can help reduce the model’s dependency on specific data points \citep{daras2024ambient,gu2023memorization,baptista2025memorization}.  All these methods rely on explicitly regularizing the training process. On the contrary, the present paper studies the \textit{implicit} regularization effect of the learning rate in denoising score matching, in order to explain the moderate amount of memorization observed in practice even without explicit regularization. The work by \citet{zeno2024minimum} is more closely aligned with our approach. They derive a closed-form formula for the minimum-norm interpolator of the 1d denoising problem and analyze its generalization properties. We adopt a complementary approach, focusing on SGD stability rather than interpolation and minimum-norm representation. Finally, our analysis supports experimental evidence by \citet{li2024understanding}, who observe that diffusion models capable of generalization tend to learn near-linear scores. Indeed, we show that the learning rate constrains the learned score’s nonlinearity (via the total variation of its derivative), thus preventing full memorization.

\section{Denoising score matching at a glance}
\label{sec:denoising-glance}
In this section, we define the problem of denoising score matching and its connection with diffusion-based generative models. %
\paragraph{Diffusion-based generative models.} 
Let $p_{\textnormal{true}}$ be an unknown non-atomic distribution on $\mathbb{R}$ with finite variance.
Diffusion-based generative models aim to generate new observations following $p_{\textnormal{true}}$, given an i.i.d.~sample of $p_{\textnormal{true}}$. The principle is as follows. For $t \in [0, T]$, the forward diffusion
\begin{equation}
\label{eq:sde-forward}
d\orr{X}_t = -\orr X_tdt+\sqrt2d\orr B_t, \quad \orr X_0 \sim p_{\textnormal{true}},
\end{equation}
can be reversed in time using the backward diffusion
\begin{equation}
\label{eq:sde-backward}
d\orl X_t = (\orl X_t+2\nabla \log p_{T-t}(\orl X_t))dt+\sqrt2d\orl B_t, \quad \orl X_0 \sim  p_T,
\end{equation}
where $p_t$ is the probability density of $\orr X_t$, and $\orr B_t$ (resp. $\orl B_t$) is a Brownian motion. Note that $\orr{X}_t$ has a density since it is convolved with Gaussian noise, and, according to Tweedie's formula \citep{robbins1956empirical}, the log-density of $\orr{X}_t$, i.e. $\log p_t$, is differentiable. Here and in the following, in a slight abuse of notation, the same notation refers to the distribution and its density. The time reversal means that $\orl X_{T-t}$ has the same distribution as $\orr X_t$. Thus, assuming that sampling from $p_T$ is straightforward, the goal is to use the backward equation \eqref{eq:sde-backward} to generate new observations. However, this requires learning the unknown \textit{score function} $\nabla\log p_t$. To do so, a key observation is that
\begin{equation}
\label{eq:sde-sol}
\orr X_t \overset{\mathcal D}{=} \mu(t)\orr X_0+\sigma(t)\xi,\quad \xi\sim\cN(0, 1),
\end{equation}
where $\mu(t) = e^{-t}$ and $\sigma(t)=\sqrt{1-e^{-2t}}$. Therefore, learning the score $\nabla\log p_t$ for every~$t$ is equivalent to learning the score of the convolution $(\mu(t)p_{\textnormal{true}})*\cN(0, \sigma^2(t))$ for every~$t$. An efficient method to do so is denoising score matching, which we introduce next. 

\paragraph{Denoising score matching and empirical optimal score.} 
In the following, we drop the time index~$t$ to cast the problem in the more general context of denoising. Let $X$ be a real-valued random variable of unknown distribution, keeping in mind that in the diffusion model above, $X$ corresponds to $\orr X_0$. Let $\mu,\sigma \in (0, 1)$ be two real numbers, and $Y$ the random variable defined as $Y=\mu X+\sigma \xi$, where $\xi$ is standard Gaussian noise independent of $X$. In particular, in the diffusion context \eqref{eq:sde-sol}, we have, at any given time $t$, $Y=\orr X_t$, $\mu=e^{-t}$, and $\sigma=\sqrt{1-e^{-2t}}$. We let $p_{\mu, \sigma}$ be the density of~$Y$.

The key to connect denoising, i.e, learning the conditional expectation function $\mathbb{E}[X|Y=y]$, and the score function $\nabla\log p_{\mu, \sigma}(y)$, is that, as shown by \cite{robbins1956empirical} and \citet{miyasawa1961empirical}, $\E[X|Y] = \frac{1}{\mu}(Y+\sigma^2 \nabla\log p_{\mu, \sigma}(Y))$, and thus
\begin{equation}    \label{eq:theoretical-score-matching-objective}
\nabla\log p_{\mu, \sigma} %
\in \underset{s\in L_2}{\mathrm{argmin}}\
\mathbb{E}\big[ (s(Y)+\frac{1}{\sigma^2}{(Y-\mu X)})^2 \big].     
\end{equation}
This variational characterization, called \textit{denoising score matching} \citep{vincent2011connection}, is well-posed since $\nabla\log p_{\mu, \sigma}$ is in $L_2$ provided that $p_{\text{true} }$ has finite variance \citep[see, e.g.,][Lemma~6]{Benton2024Nearly}. Since the distributions of $X$ and $Y$ are unknown, this minimization problem is not directly solvable. Thus, given a sample $x_1, \ldots, x_n$  drawn from~$X$, we instead consider the risk 
\begin{equation}
\label{eq:score-matching-objective}
\mathcal{R}_n(s) =\frac1n\sum_{i=1}^n \E_{Y\sim \mathcal{N}( \mu x_i , \sigma^2)}\big[(s(Y)+\frac1{\sigma^2}(Y-\mu x_i ))^2\big],
\end{equation}
where, for clarity, we use lowercase $x_1, \hdots, x_n$ to indicate that the expression is conditional on the sample. We emphasize that this risk is semi-empirical, as it retains an expectation with respect to the noise. This is in line with practice, where fresh noise is introduced at each step, ensuring that stochastic gradient descent indeed minimizes \eqref{eq:score-matching-objective}. Exploiting the convexity of $\mathcal R_n$ with respect to $s$, one can show \citep{gu2023memorization,li2024good} that its minimizer over all measurable $L_2$ functions is
\begin{equation}
\label{eq:optimal-score-function}
    s^\star(y;\mu,\sigma) = \frac{\sum_{i=1}^n(\mu x_i -y)e^{-\frac{(y-\mu x_i )^2}{2\sigma^2}}}{\sigma^2\sum_{i=1}^n e^{-\frac{(y-\mu x_i )^2}{2\sigma^2}}}, \quad y \in \mathbb R.
\end{equation}
It is interesting to note the resemblance between $s^{\star}(y;\mu,\sigma)$ and a Nadaraya-Watson kernel estimator \citep[][Chapter 7]{gyorfi2006distribution}. Throughout, we refer to the function $s^\star$ as the \textit{empirical optimal score}. We emphasize again its dependence on the sample $x_1, \dots, x_n$, which justifies the terminology \textit{empirical}. This is not to be confused with the minimizer of the theoretical risk \eqref{eq:theoretical-score-matching-objective}.%

\paragraph{The memorization problem.}

The next logical step is to substitute the empirical optimal score,~$s^\star$, for the unknown theoretical score, $\nabla \log p_t$, to generate new data from $ p_{\textnormal{true}}$ by running the backward diffusion \eqref{eq:sde-backward}. However, \citet{li2024good} showed that this procedure is undesirable as it leads to full memorization of the training data. More precisely, consider the backward diffusion
\begin{equation*}    
d\orl X_t = (\orl X_t+2 s^\star(\orl X_t; \mu(T-t), \sigma(T-t)))dt+\sqrt2d\orl B_t, \quad \orl X_0 \sim  \mathcal{N}(0, 1),
\end{equation*}
run from time $t=0$ to time $t=T-\delta$. Then, according to \citet{li2024good}, the total variation distance between the distribution of $\orl X_{T-\delta}$ and a smoothed version of the empirical measure of the training sample can be made arbitrarily small as $T \to \infty$ and $\delta \to 0$. In practice, however, only a moderate degree of memorization is observed (see Section \ref{sec:related-work}). The solution to this puzzle is that practitioners do not use the explicit form of $s^\star$, but perform stochastic gradient descent (SGD) to minimize the risk~\eqref{eq:score-matching-objective}. Our goal is to substantiate this observation by showing that the score fitted with SGD with a large learning rate deviates from $s^\star$, thereby avoiding full memorization.

\section{Network class, training algorithm, and stability}
\label{sec:problem-setups}

\paragraph{Model.} In practice, the risk $\mathcal R_n$ is optimized over a  class of parameterized functions. We consider in the present paper two-layer ReLU networks with $m$ hidden neurons, i.e., a class $\mathcal S$ of the form
\[
\mathcal{S} = \Big\{s_\theta:\R\to\R : s_\theta(y) = \frac1m\sum_{\ell=1}^m w_\ell^{(2)}\phi(w_\ell^{(1)}y+ b_\ell), w_\ell^{(1)}\in\{\pm1\}, (w_\ell^{(2)},b_{\ell})\in [-A, A] \times \R %
\Big\},
\]
where $\theta=(w_{1:m}^{(2)}, b_{1:m})\in\R^{2m}$ and $\phi$ is the ReLU activation function.  At training time, we consider a random initialization of the inner weights $w_\ell^{(1)}$ with values $\pm1$, keeping them fixed during training. 
This simplification is introduced for technical reasons, specifically to avoid non-differentiability issues when an inner weight vanishes. Due to the homogeneity of ReLU, this constraint does not affect the expressivity of $\mathcal{S}$. We also constrain the outer weights $w_\ell^{(2)}$ within a ball. 
This is a mild requirement insofar as $A$ can be chosen arbitrarily large (provided it grows polynomially with~$1/\sigma$). In particular, one can ensure by taking $A$ large enough that the empirical optimal score~$s^\star$ is approximated by functions in $\mathcal{S}$. More precisely, by Lemma~\ref{lem:asymptotic-alpha} in Appendix \ref{sec:technical-lemma}, one has $\int_\R|s^\star{}''(y;\mu,\sigma)|dy \leqslant \frac{C_n}{2 \sigma^6}$ for some explicit $C_n \geqslant 1$ depending on $\mu$ and on the training sample. Accordingly, \citet[][Section 9.3.3]{bach2024learning} shows that~$s^\star$ may be approximated by $\mathcal{S}$ as soon as $A\geqslant \frac{C_n}{\sigma^6}$. To fix ideas, a safe choice is therefore $A= \frac{C_n}{\sigma^6}$.
In addition, for simplicity, we denote the risk for $s_\theta$ as $\mathcal{R}_n(\theta)$ instead of~ $\mathcal{R}_n(s_\theta)$.

\paragraph{Training.} Denoising score matching is performed by minimizing the objective $\mathcal{R}_n$ using SGD. 
Since $\mathcal{R}_n(\theta)$ is not directly computable, we use at each iteration $j$ an unbiased estimator, given by
\begin{equation*}
\hat{\mathcal R}_j(\theta) = \sum_{i\in\mathcal{B}_j}\big(s_\theta(Y_i)+\frac{1}{\sigma^2}(Y_i-\mu x_i)\big)^2,
\end{equation*}
where $\mathcal{B}_j$ is a random subset of $\{1, \hdots, n\}$ and $Y_i \sim \mathcal{N}(\mu x_i, \sigma^2)$. 
Then, SGD updates are obtained as $\theta_{j+1}=\theta_j-m\eta \nabla \hat{\mathcal R}_j(\theta_j)$, where $\eta > 0$ denotes the learning rate. Note that the network output in the definition of~$\mathcal{S}$ and the learning rate are rescaled depending on the width $m$. As shown in \citet{chizat2019lazy,yang2021tensor}, these are the correct normalization factors in the feature learning regime \citep{chizat2018global,mei2018mean,rotskoff2018neural}.

\paragraph{Linearly-stable minima.} Our analysis is based on studying the stability of the sequence $(\theta_j)$ around a local minimum of the empirical risk $\cR_n$, which allows us to link the Hessian of the risk with the learning rate. More precisely, following \citet{mulayoff2021implicit} and \citet{qiao2024stableminimaoverfitunivariate}, the second-order Taylor expansion of $\hat\cR_j$ around a twice-differentiable local minimum $\theta^\star$ of $\cR_n$ is
\[
\hat\cR_j(\theta) \approx \hat\cR_j(\theta^\star) + (\theta-\theta^\star)^\top\nabla\hat\cR_j(\theta^\star)+\frac12(\theta-\theta^\star)^\top\nabla^2\hat\cR_j(\theta^\star)(\theta-\theta^\star),
\]
Therefore, for $\theta_j$ close to $\theta^\star$, this motivates considering the linearized SGD updates
\begin{equation}
\label{eq:sgd-linearized}
\theta_{j+1} = \theta_j-m\eta(\nabla\hat\cR_j(\theta^\star)+\nabla^2\hat\cR_j(\theta^\star)(\theta_j-\theta^\star)).
\end{equation}
It is emphasized that this linearization is valid for $\theta^\star$ located in the interior of the constraint set, i.e., such that $w_\ell^{(2)}\in (-A, A)$ for $1 \leqslant \ell \leqslant m$, an assumption that is made throughout the paper. In this context, a local minimum $\theta^\star$ is said to be \textit{linearly stable} if there exists some $\varepsilon > 0$  such that, for any  $\theta_0$ in the $\varepsilon$-ball $\mathcal{B}_\varepsilon(\theta^\star)$, the following condition holds:
\[
\underset{j\to\infty}{\mathrm{limsup}}\ \E\|\theta_j-\theta^\star\|_2\leqslant \varepsilon,
\]
where $\theta_j$ follows the updates \eqref{eq:sgd-linearized}.
The key property \citep[][Lemma 1]{mulayoff2021implicit} we utilize is that if $\theta^\star$ is linearly stable, then
\begin{equation}    \label{eq:bound-lambdamax-eta}
\lambda_{\max}(\nabla^2 \mathcal{R}_n(\theta^\star))\leqslant\frac2{m\eta},    
\end{equation}
where $\lambda_{\max}$ denotes the largest eigenvalue. 
This result connects the learning rate to the regularity of~$\mathcal{R}_n$: the larger $\eta$, the flatter the empirical risk around stable minima for the linearized SGD~\eqref{eq:sgd-linearized}. 
\section{Regularity of the empirical score function}
\label{sec:regularity-empirical-score}
In this section, we show that the empirical optimal score $s^\star$, defined in \eqref{eq:optimal-score-function}, becomes irregular for small $\sigma$. This analysis represents the first important insight into the memorization phenomenon.

Without loss of generality, we assume that the sample is ordered, i.e., $x_1 \leqslant \dots \leqslant x_n$, and let $\Delta = \min_{2 \leqslant i \leqslant n}(x_i - x_{i-1})$ denote the minimum spacing between consecutive observations. Note that $\Delta$ is a random quantity depending on the sample. We further assume $\Delta > 0$, which is a.s.~the case since $p_{\textnormal{true}}$ is non-atomic. As a typical example, if the $x_i$ are sampled uniformly over an interval of length $a$, then  $\Delta$ is of the order of $a / n^2$ \citep[][]{Molchanov1983, nagaraja2015spacings}.
 
To quantify the regularity of $s^\star$, we first note by standard rules that this function is infinitely differentiable. %
This allows us to resort to the following weighted total variation of the derivative of~$s^\star$ (the superscript $(1)$ reminds us that this quantity is the TV of the \textit{first} derivative of~$s^\star$):
\[
\textnormal{TV}_{\pi}^{(1)}(s^\star) = \int_\R |s^\star{}''(y;\mu,\sigma)| \pi(y;\mu, \sigma)dy,
\]
which, in this context, is interpreted as a measure of the regularity of $s^\star$: the larger the total variation $\textnormal{TV}_{\pi}^{(1)}(s^\star)$, the more the derivative of $s^\star$ fluctuates. %
The weight function~$\pi$ is defined as
\begin{equation*}
\pi(y;\mu,\sigma) = \left\{\begin{array}{cl}
 \mathbb{E}_{\xi\sim\mathcal{N}(0, \sigma^2 )}\big[\min\{\pi^+(y-\xi;\mu,\sigma), \pi^-(y-\xi;\mu,\sigma)\}\big],& \textnormal{if }   \mu x_1  \leqslant y\leqslant\mu x_n  ,\\
 0, & \textnormal{otherwise},
\end{array}\right.\,
\end{equation*}
where, denoting by $U\sim\mathcal{U}(\{x_1, \dots, x_n\})$ a uniform draw from the dataset, for $y\in[ \mu x_1  ,  \mu x_n  ]$, 
\begin{align*}
 \pi^-(y;\mu,\sigma) = \PP(\mu U< y)^2\E[y- \mu U \,|\, \mu  U < y],
 \pi^+(y;\mu,\sigma) = \PP(\mu U> y)^2\E[  \mu U-y \,|\, \mu  U >y].%
\end{align*}
Note that a similar, though distinct, weighting scheme was proposed by \citet{mulayoff2021implicit}. As these authors highlight, $\pi$ puts more weight towards the center of the support of the training data.

The next step consists in rewriting the score $s^\star$ from \eqref{eq:optimal-score-function} into a more probabilistic manner. To do so, let, for $1 \leqslant i \leqslant n$,
$\alpha_i(y; \mu, \sigma) = e^{-\frac{(y-
\mu x_i)^2}{2\sigma^2}} / Z$, where $Z$ normalizes the $\alpha_i$'s to sum to $1$,
and denote by $W(y;\mu, \sigma)$ a random variable taking values in $\{x_1, \hdots, x_n\}$ such that the probability of picking $x_i$ is $\alpha_i(y; \mu, \sigma)$. We arrive at the following identities:
\begin{equation}    \label{eq:sstarprimer}
s^\star(y;\mu,\sigma)=\frac{1}{\sigma^2 }\big(-y+  \mu\mathbb{E}[W(y;\mu,\sigma)]\big) 
\quad \textnormal{and} \quad 
s^\star{}'(y;\mu,\sigma) = \frac1{\sigma^2}\big(-1+\frac{\mu^2}{\sigma^2}\V[W(y;\mu,\sigma)]\big).    
\end{equation}  
The proof of the second identity is given in Lemma \ref{lem:derivative-and-gradient} in the Appendix.
The appeal of this probabilistic formalism is that it connects the properties of $s^\star$ to the moments of $W$. The latter are the topic of the next proposition. All proofs are postponed to the Appendix (except for Theorem \ref{thm:empirical-score-irrgularity}).
\begin{proposition}
\label{prop:variance-bound}
Let $m_i=  \frac{\mu(x_i +x_{i+1})}{2}$, $1\leqslant i\leqslant n-1$. Then 
\[
\V[W(m_i;\mu, \sigma)]\geqslant \frac{1}{2n}(x_i -x_{i+1})^2.
\]
If, in addition, $|y-\mu x_i|\leqslant\frac{\mu}4\Delta$ and $\Delta\geqslant2\frac{\sigma}{\mu}$, then
\[
|\E[W(y;\mu, \sigma)]-x_i |\leqslant n\Delta e^{-\frac{\mu^2\Delta^2}{4\sigma^2}} \quad \textnormal{and} \quad \V[W( y;\mu,\sigma )] \leqslant 4n^2\Delta^2e^{-\frac{\mu^2\Delta^2}{4\sigma^2}}.
\]
\end{proposition}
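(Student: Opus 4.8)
The plan is to work entirely with the discrete representation of $W(y;\mu,\sigma)$: it takes the value $x_j$ with probability $\alpha_j(y;\mu,\sigma) = e^{-(y-\mu x_j)^2/(2\sigma^2)}/Z$, and the three assertions follow from elementary estimates on these weights, treated separately.

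For the lower bound at $y=m_i$, the first observation is that $(m_i-\mu x_i)^2 = (m_i-\mu x_{i+1})^2 = \mu^2(x_{i+1}-x_i)^2/4$, so the weights $\alpha_i(m_i)$ and $\alpha_{i+1}(m_i)$ coincide; moreover, since $m_i/\mu$ lies between $x_i$ and $x_{i+1}$ and the sample is ordered, these two equal weights are the \emph{largest} among the $n$ weights, hence each is at least the average value $1/n$. I would then invoke the law of total variance, conditioning on the event $E=\{W\in\{x_i,x_{i+1}\}\}$: dropping the nonnegative between-groups term gives $\V[W]\geq \PP(E)\,\V[W\mid E]$. On $E$ the conditional law is uniform on $\{x_i,x_{i+1}\}$ (the two weights being equal), so $\V[W\mid E]=(x_{i+1}-x_i)^2/4$, while $\PP(E)=2\alpha_i(m_i)\geq 2/n$; multiplying yields the claim.

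For the two upper bounds, the key ingredient is a pointwise bound on the off-diagonal weights. Writing $d_j=|x_j-x_i|\geq\Delta$ for $j\neq i$, bounding the normalization $Z$ from below by its $i$-th term gives $\alpha_j(y;\mu,\sigma)\leq \exp\big(-\frac{1}{2\sigma^2}\big[(y-\mu x_j)^2-(y-\mu x_i)^2\big]\big)$. Using the hypothesis $|y-\mu x_i|\leq \mu\Delta/4\leq \mu d_j/4$ together with the reverse triangle inequality $|y-\mu x_j|\geq \mu d_j-\mu d_j/4=3\mu d_j/4$, one obtains $(y-\mu x_j)^2-(y-\mu x_i)^2\geq \mu^2 d_j^2/2$, hence $\alpha_j\leq e^{-\mu^2 d_j^2/(4\sigma^2)}$. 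Consequently $|\E[W]-x_i|\leq \sum_{j\neq i}d_j\,e^{-\mu^2 d_j^2/(4\sigma^2)}$ and $\V[W]\leq \E[(W-x_i)^2]=\sum_{j\neq i}d_j^2\,e^{-\mu^2 d_j^2/(4\sigma^2)}$. Since $\Delta\geq 2\sigma/\mu$, the maps $t\mapsto t\,e^{-\mu^2 t^2/(4\sigma^2)}$ and $t\mapsto t^2\,e^{-\mu^2 t^2/(4\sigma^2)}$ are non-increasing on $[\Delta,\infty)$ (their derivatives vanish at $\sqrt2\,\sigma/\mu$ and $2\sigma/\mu$ respectively, which is exactly where the hypothesis $\Delta\geq 2\sigma/\mu$ is used), so every summand is bounded by its value at $t=\Delta$; summing over the at most $n$ indices $j\neq i$ gives $n\Delta\,e^{-\mu^2\Delta^2/(4\sigma^2)}$ and $n\Delta^2\,e^{-\mu^2\Delta^2/(4\sigma^2)}\leq 4n^2\Delta^2\,e^{-\mu^2\Delta^2/(4\sigma^2)}$.

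All steps are routine; the one place requiring a small amount of care is the variance lower bound, where the naive estimate $\V[W]\geq \alpha_i\alpha_{i+1}(x_i-x_{i+1})^2$ only produces a $1/n^2$ factor, whereas the target bound has $1/(2n)$. Recovering the sharper constant is precisely what the law-of-total-variance argument accomplishes, and it hinges on the exact equality of the two central weights at $y=m_i$, which makes the conditional distribution on $\{x_i,x_{i+1}\}$ uniform. Apart from that, the only bookkeeping is confirming the monotonicity windows of the two auxiliary functions and counting the $n-1$ off-diagonal terms.
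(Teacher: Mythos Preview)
Your proof is correct and follows the same broad strategy as the paper---both rest on showing $\alpha_i(m_i),\alpha_{i+1}(m_i)\geq 1/n$ for the lower bound and $\alpha_j(y)\leq e^{-\mu^2 d_j^2/(4\sigma^2)}$ for $j\neq i$ for the upper bounds---but your execution differs in a few places worth noting. For the lower bound, the paper keeps only the $i$-th and $(i+1)$-th terms in the variance sum and applies the elementary inequality $(a-c)^2+(b-c)^2\geq\frac12(a-b)^2$; this does not require $\alpha_i=\alpha_{i+1}$, only that both are $\geq 1/n$, whereas your law-of-total-variance route uses the equality to make the conditional law uniform. For the expectation bound, the paper factors $(y-\mu x_j)^2-(y-\mu x_i)^2$ algebraically rather than invoking the reverse triangle inequality; the two lead to the same exponent. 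For the variance upper bound, your use of $\V[W]\leq\E[(W-x_i)^2]$ is cleaner than the paper's route, which inserts $x_i$ via $(x_{i'}-\E W)^2\leq 2(x_{i'}-x_i)^2+2(\E W-x_i)^2$ and then re-uses the expectation bound; your way gives $n\Delta^2 e^{-\mu^2\Delta^2/(4\sigma^2)}$ directly, which you then relax to the stated $4n^2\Delta^2$ bound.
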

Note that the lower bound for $\V[W(m_i;\mu,\sigma)]$ is independent of $\sigma$, whereas the upper bound of $\V[W(y;\mu, \sigma)]$ decreases to $0$ as $\sigma \to 0$. This remark translates into a non-vacuous lower bound on the variation of $s^\star{}'$ in the following corollary.
\begin{corollary}
\label{cor:regularity-empirical-optimal-score}
If $\Delta\geqslant2\frac{\sigma}{\mu}$, we have, for $y\in\{x_i, x_{i+1}\}$,
\[
|s^\star{}'( y;\mu,\sigma) - s^\star{}'( m_i;\mu,\sigma)| \geqslant \frac{\mu^2}{\sigma^4}\Big(\frac{(x_i-x_{i+1})^2}{2n}-4n^2\Delta^2e^{-\frac{\mu^2\Delta^2}{4\sigma^2}}\Big).
\]
In addition, 
\begin{equation}
\label{eq:empirical-optimal-loss}
\mathcal{R}_n(s^\star) \leqslant \frac{4 \mu^2 (x_n-x_1)^2}{\sigma^4} e^{-\frac{\mu^2 \Delta^2}{32 \sigma^2}}.
\end{equation}
In particular, the upper bound of \eqref{eq:empirical-optimal-loss} converges to 0 as $\sigma\to0$.
\end{corollary}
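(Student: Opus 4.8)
My plan is to obtain both inequalities by substituting the probabilistic identities of~\eqref{eq:sstarprimer} into the moment estimates of Proposition~\ref{prop:variance-bound}; the first is essentially immediate, while the second needs a short truncation argument for the Gaussian expectation defining $\mathcal{R}_n$. For the first inequality, the second identity in~\eqref{eq:sstarprimer} gives $s^\star{}'(y;\mu,\sigma)-s^\star{}'(m_i;\mu,\sigma)=\frac{\mu^2}{\sigma^4}\big(\V[W(y;\mu,\sigma)]-\V[W(m_i;\mu,\sigma)]\big)$, so that $|s^\star{}'(y;\mu,\sigma)-s^\star{}'(m_i;\mu,\sigma)|\geqslant\frac{\mu^2}{\sigma^4}\big(\V[W(m_i;\mu,\sigma)]-\V[W(y;\mu,\sigma)]\big)$. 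I would then lower bound the first variance by $\frac{1}{2n}(x_i-x_{i+1})^2$ and upper bound the second by $4n^2\Delta^2 e^{-\mu^2\Delta^2/(4\sigma^2)}$, both furnished by Proposition~\ref{prop:variance-bound} (the variance upper bound applies at each (scaled) data point since its distance hypothesis $|y-\mu x_j|\leqslant\frac{\mu}{4}\Delta$ is satisfied trivially there, while the other hypothesis $\Delta\geqslant 2\sigma/\mu$ is assumed). Substituting yields the claimed bound.

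For the second inequality, the first identity in~\eqref{eq:sstarprimer} produces the exact cancellation, for $Y\sim\cN(\mu x_i,\sigma^2)$,
\[
s^\star(Y;\mu,\sigma)+\frac{1}{\sigma^2}(Y-\mu x_i)=\frac{\mu}{\sigma^2}\big(\E[W(Y;\mu,\sigma)]-x_i\big),
\]
so that $\mathcal{R}_n(s^\star)=\frac{\mu^2}{n\sigma^4}\sum_{i=1}^n\E_{Y\sim\cN(\mu x_i,\sigma^2)}\big[(\E[W(Y;\mu,\sigma)]-x_i)^2\big]$. I would bound each summand by splitting the expectation over $\{|Y-\mu x_i|\leqslant\frac{\mu}{4}\Delta\}$ and its complement: on this event the deviation bound $|\E[W(Y;\mu,\sigma)]-x_i|\leqslant n\Delta e^{-\mu^2\Delta^2/(4\sigma^2)}$ from Proposition~\ref{prop:variance-bound} applies, while on the complement one uses $\E[W(Y;\mu,\sigma)]\in[x_1,x_n]$ together with the Gaussian tail estimate $\PP(|Y-\mu x_i|>\frac{\mu}{4}\Delta)\leqslant 2e^{-\mu^2\Delta^2/(32\sigma^2)}$ (this threshold is exactly what yields the exponent in~\eqref{eq:empirical-optimal-loss}). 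It then remains to absorb constants using the elementary bounds $(n-1)\Delta\leqslant x_n-x_1$, whence $n^2\Delta^2\leqslant 4(x_n-x_1)^2$, and $\Delta\geqslant 2\sigma/\mu$, which lets $e^{-\mu^2\Delta^2/(4\sigma^2)}$ be dominated by $e^{-\mu^2\Delta^2/(32\sigma^2)}$; this bounds each summand by $4(x_n-x_1)^2 e^{-\mu^2\Delta^2/(32\sigma^2)}$, and summing over $i$ gives~\eqref{eq:empirical-optimal-loss}. The final limit follows since, for a fixed sample (hence fixed $\Delta>0$) and $\mu$ bounded away from $0$, one has $\sigma^{-4}e^{-\mu^2\Delta^2/(32\sigma^2)}\to 0$ as $\sigma\to 0$, the hypothesis $\Delta\geqslant 2\sigma/\mu$ being satisfied for all small enough $\sigma$.

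The only delicate point is the constant bookkeeping in the second inequality---checking that the ``near'' and ``far'' contributions combine into the single clean factor $4(x_n-x_1)^2 e^{-\mu^2\Delta^2/(32\sigma^2)}$---which rests entirely on the two inequalities $(n-1)\Delta\leqslant x_n-x_1$ and $\Delta\geqslant 2\sigma/\mu$; everything else is a direct substitution into Proposition~\ref{prop:variance-bound} and~\eqref{eq:sstarprimer}.
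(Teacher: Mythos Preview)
Your proposal is correct and follows essentially the same route as the paper: both parts are obtained by plugging the moment bounds of Proposition~\ref{prop:variance-bound} into the identities~\eqref{eq:sstarprimer}, with the risk bound handled via the same near/far splitting at distance $\frac{\mu}{4}\Delta$ and the same consolidation step $n\Delta\leqslant 2(x_n-x_1)$. One small remark: after squaring the near-region bound the exponent becomes $e^{-\mu^2\Delta^2/(2\sigma^2)}$ rather than $e^{-\mu^2\Delta^2/(4\sigma^2)}$, and absorbing the resulting near contribution into the constant $4$ does use the hypothesis $\Delta\geqslant 2\sigma/\mu$ (to ensure $4e^{-\mu^2\Delta^2/(2\sigma^2)}\leqslant 2e^{-\mu^2\Delta^2/(32\sigma^2)}$), exactly as in the paper.
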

In the diffusion-based generative models \eqref{eq:sde-forward}--\eqref{eq:sde-sol}, we have $\sigma = \sqrt{1 - e^{-2t}}$. Thus the condition $\Delta \geqslant 2\frac{\sigma}{\mu}$ is satisfied when the diffusion time $t$ is close to $0$, that is, at the last steps of the backward diffusion. As discussed in Section \ref{sec:intro}, this part of the diffusion plays a key role in memorization. The condition $\Delta \geqslant 2\frac{\sigma}{\mu}$ can be interpreted as the fact that the noisy data points should remain far from each other. A similar small-noise setting has been previously explored in related work \citep[][Assumption 1]{zeno2024minimum}.
In this context, the corollary implies that the loss tends to $0$ as~$t \to 0$. 

Equipped with this foundation, we are now ready to establish a lower bound on~$\textnormal{TV}_{\pi}^{(1)}(s^\star)$.
\begin{theorem}
\label{thm:empirical-score-irrgularity}
If $16n^3e^{-\frac{\mu^2\Delta^2}{4\sigma^2 }}\leqslant 1$, then
$\displaystyle
\,\textnormal{TV}_{\pi}^{(1)}(s^\star)\geqslant \frac{\mu^3n\Delta^3}{2^{12}\sigma^4}.
$
\end{theorem}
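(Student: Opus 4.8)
The plan is to lower bound the integral $\textnormal{TV}_{\pi}^{(1)}(s^\star) = \int_\R |s^\star{}''(y)| \pi(y)\,dy$ by restricting attention to a small neighborhood around each pair $(x_i, x_{i+1})$ and using the fact that $s^\star{}'$ must change substantially between the point $x_i$ (or $x_{i+1}$) and the midpoint $m_i$. Concretely, on each interval $I_i = [\mu x_i - \tfrac{\mu}{4}\Delta, \mu m_i]$ (or the symmetric one around $x_{i+1}$), the total variation of $s^\star{}'$ over $I_i$ is at least $|s^\star{}'(y;\mu,\sigma) - s^\star{}'(m_i;\mu,\sigma)|$ evaluated at the endpoint $y = x_i$, which Corollary~\ref{cor:regularity-empirical-optimal-score} bounds below by $\frac{\mu^2}{\sigma^4}\big(\frac{(x_i-x_{i+1})^2}{2n} - 4n^2\Delta^2 e^{-\mu^2\Delta^2/(4\sigma^2)}\big)$. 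Since $\int_{I_i} |s^\star{}''(y)|\,dy$ equals the unweighted total variation of $s^\star{}'$ on $I_i$, it dominates this endpoint difference.

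Next I would handle the weight $\pi$. Because $\pi$ is nonnegative, I can lower bound $\int_{I_i} |s^\star{}''|\pi \geqslant (\inf_{y \in I_i} \pi(y)) \int_{I_i} |s^\star{}''|$. The task is thus to show $\pi$ is not too small on $I_i$, i.e., bounded below by something like a constant times $\mu^3 \Delta^3 / n$ (the extra $\mu\Delta/n$-type factor beyond the $\mu^2\Delta^2/\sigma^4$ from the TV of $s^\star{}'$ is what produces the stated $\mu^3 n \Delta^3 / (2^{12}\sigma^4)$, after summing over the $\Theta(n)$ pairs and using $\sum_i (x_i - x_{i+1})^2 \geqslant n \Delta^2$ or just $(x_i-x_{i+1})^2 \geqslant \Delta^2$). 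To estimate $\pi$ from below near a data point: for $y$ close to $\mu x_i$, at least a constant fraction of the dataset lies on each side (assuming $i$ is in the "bulk", say $n/4 \leqslant i \leqslant 3n/4$), so $\PP(\mu U < y)$ and $\PP(\mu U > y)$ are each $\gtrsim 1/4$, and the conditional expectations $\E[y - \mu U \mid \mu U < y]$, $\E[\mu U - y \mid \mu U > y]$ are each at least of order $\mu\Delta$ (the nearest point on each side is at distance $\geqslant \mu\Delta$ minus the $\tfrac{\mu}{4}\Delta$ slack, so $\geqslant \tfrac{3\mu\Delta}{4}$, and with probability $\gtrsim 1/n$ we pick exactly that neighbor). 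Hence $\pi^{\pm}(y-\xi) \gtrsim \mu\Delta / n$ for $y - \xi$ in the bulk. Then I must push this through the Gaussian expectation defining $\pi(y)$: since $\Delta \geqslant 2\sigma/\mu$, the noise $\xi \sim \mathcal N(0,\sigma^2)$ has standard deviation $\leqslant \mu\Delta/2$, so with constant probability $y - \xi$ stays in a region where the same lower bound holds, giving $\pi(y) \gtrsim \mu\Delta/n$ on $I_i$. (I should double-check the hypothesis $16n^3 e^{-\mu^2\Delta^2/(4\sigma^2)} \leqslant 1$ is exactly what makes the subtracted term $4n^2\Delta^2 e^{-\mu^2\Delta^2/(4\sigma^2)}$ negligible compared to $\frac{(x_i-x_{i+1})^2}{2n} \geqslant \frac{\Delta^2}{2n}$, i.e., $4n^2 e^{-\cdots} \leqslant \frac{1}{2n}$ up to constants; indeed $16n^3 e^{-\cdots} \leqslant 1$ gives $4n^2 \Delta^2 e^{-\cdots} \leqslant \frac{\Delta^2}{4n} \leqslant \frac12 \cdot \frac{\Delta^2}{2n}$, leaving half of the main term.)

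Putting it together: on each bulk interval $I_i$, $\int_{I_i}|s^\star{}''|\pi \gtrsim \frac{\mu\Delta}{n} \cdot \frac{\mu^2}{\sigma^4}\cdot\frac{\Delta^2}{4n} = \frac{\mu^3\Delta^3}{4n^2\sigma^4}$; the intervals $I_i$ can be taken pairwise disjoint (they have length $\leqslant \mu\Delta/2$ and are separated), and there are $\gtrsim n$ of them in the bulk, so summing yields $\textnormal{TV}_\pi^{(1)}(s^\star) \gtrsim \frac{\mu^3\Delta^3}{n\sigma^4}$, matching the claimed order $\frac{\mu^3 n \Delta^3}{2^{12}\sigma^4}$ once I check whether the statement intends $n\Delta^3$ or $\Delta^3/n$ — I would re-derive the constant carefully, being mindful that $\sum_{i}(x_i - x_{i+1})^2$ could be as large as $(x_n - x_1)^2$ but is at least $(n-1)\Delta^2$, and it may be that the intended bound uses $\sum_i (x_i - x_{i+1})^2 / (2n) \cdot$ (weight) summed over \emph{all} $i$, not just the bulk, which would then scale like $n \cdot \Delta^2/n \cdot \mu\Delta/n = $ hmm; the precise bookkeeping of how many terms survive and with what weight is where I expect the main difficulty.

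\textbf{Main obstacle.} The delicate point is the lower bound on the weight $\pi$ near the data points, because $\pi$ is defined through a Gaussian-smoothed minimum of two quantities $\pi^{\pm}$ that both involve squared probabilities times conditional expectations; controlling $\min\{\pi^+, \pi^-\}$ from below uniformly on each $I_i$ requires that $y$ be genuinely in the interior of the data support (forcing the restriction to "bulk" indices and a careful count of how many such $i$ there are — at least $n/2$, say) and that the Gaussian smoothing does not wash out the bound, which is precisely where the small-noise hypothesis $\Delta \geqslant 2\sigma/\mu$ enters. Matching the explicit constant $2^{12}$ and the exact power of $n$ in the statement will require careful, non-routine bookkeeping of these factors.
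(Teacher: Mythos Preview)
Your overall strategy is exactly the paper's: split the integral over the intervals between consecutive data points, pull out $\min \pi$ on each interval, use the fundamental theorem of calculus together with Corollary~\ref{cor:regularity-empirical-optimal-score} to lower bound $\int |s^\star{}''|$, restrict to bulk indices $n/4 \lesssim i \lesssim 3n/4$, and sum. The hypothesis $16n^3 e^{-\mu^2\Delta^2/(4\sigma^2)} \leqslant 1$ is indeed used precisely as you guessed, to make the subtracted exponential term at most half of the main term.

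The gap is in your lower bound on $\pi$, and it is exactly the source of your confusion about the power of $n$. You estimate the conditional expectation $\E[y-\mu U \mid \mu U < y] \gtrsim \mu\Delta$ by looking only at the nearest neighbor, and then (with an arithmetic slip) conclude $\pi^\pm \gtrsim \mu\Delta/n$. But for a bulk index $i$, there are $\Theta(i)=\Theta(n)$ points below $y$, at distances at least $\mu\Delta, 2\mu\Delta, \dots, (i-1)\mu\Delta$, so the conditional expectation is actually $\gtrsim \mu i\Delta \gtrsim \mu n\Delta$, not $\mu\Delta$. Combined with $\PP(\mu U < y)^2 = (i/n)^2 \gtrsim 1/16$, this gives $\pi^\pm \gtrsim \mu n\Delta$ for bulk $i$ (this is the content of Proposition~\ref{prop:weight-function-lower-bound}, where the factor $\min(i^2(i-1)/2,(n-i)^2(n-i+1)/2)$ is $\gtrsim n^3$ in the bulk). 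With the correct weight bound $\pi \gtrsim \mu n\Delta$, each interval contributes $\gtrsim \mu n\Delta \cdot \tfrac{\mu^2}{\sigma^4}\cdot\tfrac{\Delta^2}{n} = \tfrac{\mu^3\Delta^3}{\sigma^4}$, and summing over $\Theta(n)$ bulk indices yields the stated $\tfrac{\mu^3 n\Delta^3}{2^{12}\sigma^4}$. So the answer really is $n\Delta^3$, not $\Delta^3/n$, and the missing factor of $n^2$ in your bookkeeping comes entirely from underestimating the conditional expectation in $\pi^\pm$.
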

\begin{proof}
The lower bound on $\textnormal{TV}_{\pi}^{(1)}(s^{\star})$ is obtained by decomposing the integral over half-intervals between successive datapoints. Using the triangular inequality, we have
\begin{align*} 
\textnormal{TV}_{\pi}^{(1)}(s^\star)
&= \int_{\mu x_1}^{\mu x_n} |s^\star{}''(y; \mu, \sigma)| \pi(y;\mu,\sigma)dy \\
&= \sum_{i=1}^{n-1} \int_{\mu x_i}^{m_i} |s^\star{}''(y; \mu, \sigma)| \pi(y;\mu,\sigma)dy + \int_{m_i}^{\mu x_{i+1}} |s^\star{}''(y; \mu, \sigma)| \pi(y;\mu,\sigma)dy \\
&\geqslant \sum_{i=1}^{n-1} \Big(\min_{y\in[  \mu x_i , \mu x_{i+1}]} \pi(y;\mu,\sigma)\Big) \Big(\int_{\mu x_i}^{m_i} |s^\star{}''(y; \mu, \sigma)| dy + \int_{m_i}^{\mu x_{i+1}} |s^\star{}''(y; \mu, \sigma)| dy\Big) \\
&\geqslant \sum_{i=1}^{n-1} \Big(\min_{y\in[  \mu x_i , \mu x_{i+1}]} \pi(y;\mu,\sigma)\Big) \Big(\Big|\int_{\mu x_i}^{m_i} s^\star{}''(y; \mu, \sigma) dy\Big| + \Big|\int_{m_i}^{\mu x_{i+1}} s^\star{}''(y; \mu, \sigma)dy\Big|\Big) .
\end{align*}
Then, by the fundamental theorem of calculus,
\begin{align*}
\textnormal{TV}_{\pi}^{(1)}(s^\star) &\geqslant \sum_{i=1}^{n-1}\Big(\min_{y\in[  \mu x_i , \mu x_{i+1}]} \pi(y;\mu,\sigma)\Big) \Big(\big|s^\star{}'(m_i; \mu, \sigma)-s^\star{}'(\mu x_i; \mu, \sigma)\big| \\
&\qquad +\big|s^\star{}'(m_i; \mu, \sigma)-s^\star{}'(\mu x_{i+1}; \mu, \sigma)\big|\Big) \\
&\geqslant \sum_{i=1}^{n-1}\frac{\mu}{n^2}\Big(\frac12-e^{-\frac{  \mu^2\Delta^2}{2\sigma^2 }}\Big)\min\Big(\frac{i^2(i-1)}{2}, \frac{(n-i)^2(n-i+1)}{2}\Big)\Delta\\
&\quad\times 2 \frac{\mu^2}{\sigma^4}\Big(\frac{(x_i -x_{i+1})^2}{2 n} - 4n^2\Delta^2e^{-\frac{\mu^2\Delta^2}{4\sigma^2 }}\Big),
\end{align*}
where, in the last inequality, we used a lower bound on $\pi$ given in Proposition \ref{prop:weight-function-lower-bound} in Appendix~\ref{apx:auxiliary-props}, combined with Corollary \ref{cor:regularity-empirical-optimal-score} (notice that the condition $16n^3e^{-\frac{\mu^2\Delta^2}{4\sigma^2}}\leqslant1$ implies $\Delta\geqslant 2\frac{\sigma}{\mu}$).  
By considering only the $i$'s that are between $\lceil n/4 \rceil + 1$ and $\lfloor 3n/4 \rfloor - 1$, we see that
\[
\min\Big(\frac{i^2(i-1)}{2}, \frac{(n-i)^2(n-i+1)}{2}\Big) \geqslant \frac{n^3}{128}.
\]
Then, we obtain that
\begin{align*}
\textnormal{TV}_{\pi}^{(1)}(s^\star) &\geqslant \sum_{i=\lceil n/4 \rceil + 1}^{\lfloor 3n/4 \rfloor - 1} \frac{ \mu^3n\Delta}{64\sigma^4}\Big(\frac12-e^{-\frac{\mu^2\Delta^2}{2\sigma^2 }}\Big)\Big(\frac{\Delta^2}{2n} - 4n^2\Delta^2e^{-\frac{\mu^2\Delta^2}{4\sigma^2 }}\Big)\\
&\geqslant \frac{ \mu^3n\Delta^3}{256\sigma^4}\Big(\frac12-e^{-\frac{\mu^2\Delta^2}{2\sigma^2 }}\Big)\Big(\frac{1}2 - 4n^3 e^{-\frac{\mu^2\Delta^2}{4\sigma^2 }}\Big)\\
&\geqslant \frac{\mu^3 n\Delta^3}{256\sigma^4}\big(\frac12-\frac14\big)\big(\frac12-\frac14 \big),
\end{align*}
where the second inequality utilizes that there are at least $n/4$ points between $\lceil n/4 \rceil + 1$ and $\lfloor 3n/4 \rfloor - 1$ (for $n \geqslant 10$), and the last inequality unfolds from the assumption $16n^3e^{-\frac{\mu^2\Delta^2}{4\sigma^2 }}\leqslant 1$. This concludes the proof.
\end{proof}
The condition 
$16n^3e^{-\frac{\mu^2\Delta^2}{4\sigma^2}}\leqslant1$ 
is equivalent to $\Delta^2\geqslant 4 (\sigma^2/\mu^2) \ln(16 n^3)$. In other words, up to a log factor, the minimum spacing of sample points $\Delta$ is larger than the normalized standard deviation $\sigma/\mu$. Consequently, for a fixed $\mu$, in the small-noise regime $\sigma \to 0$, this condition is satisfied, and Theorem \ref{thm:empirical-score-irrgularity} shows that $s^\star$ becomes more and more irregular. This result can be recast in the diffusion framework \eqref{eq:sde-forward}--\eqref{eq:sde-sol}. Indeed, in this situation where $\mu=e^{-t}$ and $\sigma=\sqrt{1-e^{-2t}}$, the condition of the theorem is satisfied when $t$ is close to 0. For small $t$, we have $\frac{\mu^3}{\sigma^4}= \frac{e^{-3t}}{(1-e^{-2t})^2} \geqslant \frac{1}{8t^2}$, and thus
\[
\textnormal{TV}_{\pi}^{(1)}(s^\star)\geqslant \frac{Cn\Delta^3}{8t^2}.
\]
The conclusion of this section is that $s^\star$ is highly irregular (at least in the sense of the $\textnormal{TV}_{\pi}^{(1)}$ measure) when $\sigma$ is small. This suggests that gradient descent could struggle to learn it, as it is known to exhibit an inductive bias toward learning regular functions \citep[see, e.g.,][Section~12.1]{bach2024learning}. This provides a strong initial argument against the possibility of full memorization. We formalize this intuition in the next section within the context of two-layer neural networks.

\section{Implicit regularization and memorization}
\label{sec:main}
We show that the mechanism of SGD protects the two-layer neural networks defined in Section \ref{sec:problem-setups} from memorization. This goal is achieved in Theorem~\ref{thm:main-without-d2} below by proving that SGD cannot converge to a local minimum $\theta^\star$ with low risk $\mathcal{R}_n(\theta^\star)$, unless the learning rate is small. To establish this result, we study the regularity of $s_{\theta^\star}$ as measured by $\textnormal{TV}_{\pi}^{(1)}(s_{\theta^\star})$ within the linear stability framework outlined in Section \ref{sec:problem-setups}.
This approach requires the risk $\cR_n$ to be twice differentiable at~$\theta^\star$. We start by showing that, in fact, it is twice differentiable everywhere.
\begin{lemma}
\label{lem:twice-differentiability}
For all $\theta = (w_{1:m}^{(2)}, b_{1:m})\in\R^{2m}$, the risk $\cR_n(\theta)$ is twice differentiable with respect to~$\theta$.
\end{lemma}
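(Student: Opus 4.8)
The plan is to expand $\cR_n$ into a finite linear combination of elementary Gaussian integrals and prove that each of these is $C^2$ in $\theta$. Writing $\varphi_i$ for the density of $\cN(\mu x_i,\sigma^2)$ and $c_i(y)=\sigma^{-2}(y-\mu x_i)$, expanding the square yields
\[
\cR_n(\theta)=\frac1n\sum_{i=1}^n\int_\R s_\theta(y)^2\,\varphi_i(y)\,\ud y+\frac2n\sum_{i=1}^n\int_\R s_\theta(y)\,c_i(y)\,\varphi_i(y)\,\ud y+\mathrm{const},
\]
where the last term is independent of $\theta$. Substituting $s_\theta(y)=\frac1m\sum_{\ell=1}^m w_\ell^{(2)}\phi(w_\ell^{(1)}y+b_\ell)$, each of the first two sums becomes a finite linear combination, with coefficients polynomial (hence $C^\infty$) in the outer weights $w^{(2)}_{1:m}$, of integrals of the two forms
\[
G_i(b,b')=\int_\R\phi(wy+b)\,\phi(w'y+b')\,\varphi_i(y)\,\ud y,\qquad H_i(b)=\int_\R\phi(wy+b)\,c_i(y)\,\varphi_i(y)\,\ud y,
\]
with $w,w'\in\{\pm1\}$ the frozen inner weights. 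These integrals are finite since $\phi(wy+b)\leqslant|y|+|b|$ and $\varphi_i$ has finite moments of every order. Because a product of a $C^\infty$ function and a $C^2$ function is $C^2$, and finite sums of $C^2$ functions are $C^2$, it then suffices to show that $(b,b')\mapsto G_i(b,b')$ and $b\mapsto H_i(b)$ are $C^2$.

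\paragraph{Core computation.} Fix $i$, drop it from the notation, and let $h\in\{\varphi_i,\,c_i\varphi_i\}$, both smooth and rapidly decreasing. For $H$, the substitution $y\mapsto-y$ reduces to $w=1$, giving $H(b)=\int_{-b}^\infty h(y)(y+b)\,\ud y$; differentiating under the integral sign (legitimate because the difference quotients of $\phi$ are bounded by $1$ and $h$ is integrable) and using Leibniz's rule gives $H'(b)=\int_{-b}^\infty h(y)\,\ud y$ and $H''(b)=h(-b)$, the boundary term $h(-b)(-b+b)=0$ vanishing because $\phi$ is zero at its kink, so $H\in C^2$. For $G$, after $y\mapsto-y$ if needed we may assume $w=1$. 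If $w'=1$, the product $\phi(y+b)\phi(y+b')$ is supported on $\{y>-\min(b,b')\}$, so $G(b,b')=\int_{-\min(b,b')}^\infty h(y)(y+b)(y+b')\,\ud y$; on each open half-plane $\{b<b'\}$ and $\{b>b'\}$ the integrand and the endpoint are smooth, so $G$ is $C^\infty$ there, and a direct application of Leibniz's rule, in which every boundary term carries a vanishing factor $y+b$ or $y+b'$ evaluated at the kink, shows that $G$ and its first and second partials extend continuously across the diagonal $\{b=b'\}$, e.g.\ $\partial_b^2 G=h(-b)(b'-b)_+$ and $\partial_b\partial_{b'}G=\int_{-\min(b,b')}^\infty h(y)\,\ud y$, whence $G\in C^2$. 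If $w'=-1$, the product is supported on the bounded interval $\{-b<y<b'\}$, so $G(b,b')=\int_{-b}^{b'}h(y)(y+b)(b'-y)\,\ud y$ for $b+b'>0$ and $G(b,b')=0$ otherwise; the same argument applies with the hyperplane $\{b+b'=0\}$ replacing the diagonal, the boundary terms again vanishing because the integrand is zero at $y=-b$ and $y=b'$.

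\paragraph{Main obstacle.} The crux is that for a \emph{fixed} $y$ the map $\theta\mapsto s_\theta(y)$ is not twice differentiable, since the ReLU kinks obstruct it, so smoothness must be recovered only after integrating against the Gaussian densities $\varphi_i$. The technical heart is therefore to differentiate under the integral sign twice while controlling the boundary terms produced by Leibniz's rule; what makes this work is the structural fact that these boundary terms are evaluated exactly at the ReLU kinks, where the remaining ReLU factor (or, in the bounded-support case, the degenerate integration interval) forces them to vanish. We note in passing that $\cR_n$ is generically \emph{not} $C^3$: a third differentiation of $G$ across $\{b=b'\}$ produces a term proportional to $h(-b)$ with mismatched one-sided limits, so the regularity asserted by the lemma is sharp, which is fitting since only second-order information about $\cR_n$ enters the stability analysis.
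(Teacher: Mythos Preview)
Your proof is correct. You decompose $\cR_n$ into a finite sum of terms of the form $w_\ell^{(2)}w_{\ell'}^{(2)}G_i(b_\ell,b_{\ell'})$ and $w_\ell^{(2)}H_i(b_\ell)$ and verify that each building block is $C^2$ by direct Leibniz-rule computations, the crucial observation being that every boundary term carries a ReLU factor evaluated at its kink and hence vanishes. The paper proceeds differently: it differentiates $\cR_n$ once under the integral sign to obtain $\nabla_\theta\cR_n$ explicitly, then differentiates a second time block by block, handling the delicate term $\partial_{b_\ell}^2$ by rewriting the expectation as an integral over a half-line (depending on the sign of $w_\ell^{(1)}$) and applying Leibniz's rule there. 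The outcome is a closed-form expression for the full Hessian $\nabla_\theta^2\cR_n(\theta)$ as a sum of a neural-tangent-kernel term, a cross term involving $H(Y)$, and an explicit diagonal matrix. The paper needs this explicit formula immediately afterwards to prove Proposition~\ref{prop:second-derivative-upper-bound}; your structural argument is cleaner for establishing $C^2$-regularity per se, but does not directly yield that formula, so if you were writing the whole paper you would still have to do the paper's computation somewhere. Your side remark that $\cR_n$ is generically not $C^3$ (because a third differentiation of $G$ across the diagonal leaves a nonvanishing $h(-b)$) is correct and is not stated in the paper.
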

The following proposition gives an upper bound on $\textnormal{TV}_{\pi}^{(1)}(s_{\theta^\star})$ when $\theta^\star$ is a linearly stable minimum, expressed in terms of the loss and the inverse of the learning rate. Note that, since $s_\theta''$ is a sum of Diracs,  $\textnormal{TV}_{\pi}^{(1)}(s_{\theta})$ is computed in the sense of the theory of distributions (this operation is possible since $\pi(y; \mu,\sigma)$ is a smooth function). 
\begin{proposition}
\label{prop:second-derivative-upper-bound}
Let $\theta = (w_{1:m}^{(2)}, b_{1:m})\in\R^{2m}$. Then
\[
\textnormal{TV}_{\pi}^{(1)}(s_\theta) \leqslant \frac{\lambda_{\max}(\nabla_\theta^2\cR_n(\theta))m}{4} + \frac{\sqrt{\cR_n(\theta)}}{2} + \frac{A}{ \sqrt{2 \pi} \sigma} \max \Big(\sqrt{2 n \mathcal{R}_n(\theta)}, \big( \sqrt{2\pi e} A \sigma n \mathcal{R}_n(\theta)\big)^{\frac{1}{3}} \Big) .
\]
In particular, if $\theta^\star$ is a linearly stable local minimum of $\mathcal{R}_n$, one has
\[
\textnormal{TV}_{\pi}^{(1)}(s_{\theta^\star}) \leqslant \frac{1}{2\eta} + \frac{\sqrt{\cR_n(\theta^\star)}}{2} + \frac{A}{ \sqrt{2 \pi} \sigma} \max \Big(\sqrt{2 n \mathcal{R}_n(\theta^\star)}, \big( \sqrt{2\pi e} A \sigma n \mathcal{R}_n(\theta^\star)\big)^{\frac{1}{3}} \Big) .
\]
\end{proposition}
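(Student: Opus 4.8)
The strategy is to rewrite $\textnormal{TV}_{\pi}^{(1)}(s_\theta)$ as a sum over the knots of the network and then to lower-bound $\lambda_{\max}(\nabla_\theta^2\cR_n(\theta))$ by the curvature of $\cR_n$ along a well-chosen perturbation of these knots. Since $\phi''=\delta_0$ in the distributional sense and $w_\ell^{(1)}\in\{\pm1\}$, one has $s_\theta''=\frac1m\sum_{\ell=1}^m w_\ell^{(2)}\delta_{y_\ell}$ with knots $y_\ell=-w_\ell^{(1)}b_\ell$, so that, by the triangle inequality and the fact that $\pi(\cdot\,;\mu,\sigma)$ vanishes outside $[\mu x_1,\mu x_n]$,
\[
\textnormal{TV}_{\pi}^{(1)}(s_\theta)\ \leqslant\ \frac1m\sum_{\ell\,:\,y_\ell\in[\mu x_1,\mu x_n]}|w_\ell^{(2)}|\,\pi(y_\ell;\mu,\sigma).
\]
It will also be used that $s_\theta(y_\ell)$ does not depend on $w_\ell^{(2)}$, since the $\ell$-th ReLU vanishes at its own knot.

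Second, I would record an exact formula for the curvature. Let $\hat p=\frac1n\sum_{i=1}^n\varphi_i$ be the semi-empirical density of $Y$, with $\varphi_i$ the $\cN(\mu x_i,\sigma^2)$ density. The denoising identity $\hat p{}'=\hat p\,s^\star$ yields the decomposition $\cR_n(\theta)=\|s_\theta-s^\star\|_{L^2(\hat p)}^2+\cR_n(s^\star)$, so in particular $\|s_\theta-s^\star\|_{L^2(\hat p)}^2\leqslant\cR_n(\theta)$. Differentiating this twice (legitimate by Lemma~\ref{lem:twice-differentiability}, the Diracs being smoothed by the Gaussian densities) gives, for any $v\in\R^{2m}$,
\[
\frac12\,v^\top\nabla_\theta^2\cR_n(\theta)\,v\ =\ \big\|v^\top\nabla_\theta s_\theta\big\|_{L^2(\hat p)}^2\ +\ \int_\R\big(v^\top\nabla_\theta^2 s_\theta(y)\,v\big)\,\hat p(y)\big(s_\theta(y)-s^\star(y)\big)\,dy.
\]
The first (Gauss--Newton) term is nonnegative; in the second, only the $\partial^2/\partial b_\ell^2$ contributions of $\nabla_\theta^2 s_\theta$ survive as Diracs $\frac1m(v_\ell^b)^2w_\ell^{(2)}\delta_{y_\ell}$, which, integrated against the continuous function $\hat p\,(s_\theta-s^\star)$, produce $\frac1m\sum_\ell(v_\ell^b)^2w_\ell^{(2)}\hat p(y_\ell)(s_\theta(y_\ell)-s^\star(y_\ell))$, plus the cross terms coming from $\partial^2/\partial w_\ell^{(2)}\partial b_\ell$.

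The heart of the proof is the choice of $v$. Following the logic behind the weight $\pi$, I would perturb the biases so as to move each knot $y_\ell$ in whichever direction --- left or right --- makes the induced change of $s_\theta$ least costly in $L^2(\hat p)$ (this is precisely what the one-sided moments $\pi^{\pm}$ and their minimum encode), while simultaneously tuning the outer-weight components of $v$ so that the Gauss--Newton term stays under control and the sign-mismatched neurons, i.e.\ those with $w_\ell^{(2)}(s_\theta(y_\ell)-s^\star(y_\ell))<0$, whose Dirac masses have the wrong sign, do not spoil the estimate. Discarding the nonnegative Gauss--Newton term and absorbing the residual and mismatched contributions into $\|s_\theta-s^\star\|_{L^2(\hat p)}^2\leqslant\cR_n(\theta)$, one obtains $\textnormal{TV}_{\pi}^{(1)}(s_\theta)\leqslant\frac m4\lambda_{\max}(\nabla_\theta^2\cR_n(\theta))$ up to an error term, bounded using the interior bound $|w_\ell^{(2)}|\leqslant A$ and the pointwise bound $\hat p\leqslant\frac1{\sqrt{2\pi}\sigma}$. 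Splitting the knots according to whether the local error $|s_\theta(y_\ell)-s^\star(y_\ell)|$ exceeds a free threshold and optimizing this threshold by AM--GM then produces the maximum between $\sqrt{2n\cR_n(\theta)}$ and $(\sqrt{2\pi e}\,A\sigma n\cR_n(\theta))^{1/3}$ in the error. The second displayed inequality follows by inserting the stability bound $\lambda_{\max}(\nabla_\theta^2\cR_n(\theta^\star))\leqslant\frac2{m\eta}$ from~\eqref{eq:bound-lambdamax-eta}.

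I expect the main obstacle to be exactly this sign bookkeeping in the construction of $v$: the Dirac masses in $v^\top\nabla_\theta^2 s_\theta\,v$ carry the sign of $w_\ell^{(2)}$ rather than of $w_\ell^{(2)}(s_\theta(y_\ell)-s^\star(y_\ell))$, so one cannot merely drop the Gauss--Newton and outer-weight cross terms but must use them --- together with the specific min-of-two-one-sided-moments shape of $\pi$ --- to neutralize the mismatched neurons, all while carefully tracking the Gaussian constants $\frac1{\sqrt{2\pi}\sigma}$ and $\sqrt{2\pi e}$ that appear in the final error term. A secondary technical point, handled by Lemma~\ref{lem:twice-differentiability} and the smoothness of $\pi$, is the justification of the distributional manipulations of $s_\theta''$ and of the second-order expansion of $\cR_n$.
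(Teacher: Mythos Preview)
There is a genuine gap: you are trying to extract $\textnormal{TV}_{\pi}^{(1)}(s_\theta)$ from the wrong piece of the Hessian. You propose to discard the Gauss--Newton term $\|v^\top\nabla_\theta s_\theta\|^2_{L^2(\hat p)}$ as merely nonnegative and to harvest the TV contribution from the Dirac masses of $v^\top\nabla_\theta^2 s_\theta\,v$. But those masses are weighted by $\hat p(y_\ell)\big(s_\theta(y_\ell)-s^\star(y_\ell)\big)$, not by $\pi(y_\ell)$: the factor $|s_\theta(y_\ell)-s^\star(y_\ell)|$ can be arbitrarily small (it \emph{vanishes} when $s_\theta=s^\star$, where TV is large), so no unit vector $v$ can force this term to dominate $\frac1m\sum_\ell|w_\ell^{(2)}|\pi(y_\ell)$ uniformly in~$\theta$. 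The cross $\partial^2/\partial w_\ell^{(2)}\partial b_\ell$ block is likewise an integrated residual and is bounded by $\sqrt{\cR_n(\theta)}$ via Cauchy--Schwarz; it cannot supply TV either. The ``sign bookkeeping'' you flag is therefore not the real obstacle: the mechanism itself fails.

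The paper does the opposite. The main contribution comes from the Gauss--Newton term through Proposition~\ref{prop:top-eigenvalue-second-derivative}, which proves $\lambda_{\max}\big(\frac1n\sum_i\E[(\nabla_\theta s_\theta)(\nabla_\theta s_\theta)^\top]\big)\geqslant\frac{2}{m}\textnormal{TV}_{\pi}^{(1)}(s_\theta)$ by testing against the constant vector in data space (via $\lambda_{\max}(\Phi\Phi^\top)=\lambda_{\max}(\Phi^\top\Phi)$) and then using $a^2+b^2\geqslant 2|ab|$ neuron by neuron; the product of the $w^{(2)}$- and $b$-gradient pieces yields exactly $|w_\ell^{(2)}|$ times a one-sided first moment that \emph{is} $\pi^{\pm}(\tau_\ell-\xi)$, whose minimum and noise expectation give $\pi$. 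In the proof of the proposition itself, $v$ is simply the top eigenvector of this Gauss--Newton matrix, so that $v^\top\nabla_\theta^2\cR_n(\theta)v$ decomposes into $\lambda_{\max}(\text{Gauss--Newton})$ plus two \emph{error} pieces: the $H(Y)$ cross block, bounded by $\frac{2}{m}\sqrt{\cR_n(\theta)}$ via Cauchy--Schwarz and $\lambda_{\max}(H)\leqslant 1$; and the diagonal Dirac block $D$, bounded by $\max_{\ell,i}|D_{\ell\ell}|\lesssim\frac{A}{\sigma}|f_\theta(\tau_\ell)|e^{-(\tau_\ell-\mu x_i)^2/2\sigma^2}$. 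The two-branch $\max(\sqrt{\cdot},(\cdot)^{1/3})$ with constants $\frac{A}{\sqrt{2\pi}\sigma}$ and $\sqrt{2\pi e}$ comes not from a threshold on $|s_\theta-s^\star|$, but from a Lipschitz localization (using $|s_\theta'|\leqslant A$ and $A\geqslant 1/\sigma^2$) that relates $|f_\theta(\tau_\ell)|$ to $n\cR_n(\theta)$ according to whether the localization window contains $[-\sigma,\sigma]$. Your intuition that the min-of-$\pi^\pm$ structure is the key is correct, but it lives in the Gauss--Newton term, not in the curvature of $s_\theta$.
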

The proof of the first statement consists in carefully assessing the magnitude of the terms in the Hessian of the risk. A key step is to lower bound the largest eigenvalue of the neural tangent kernel term by $\textnormal{TV}_{\pi}^{(1)}(s_{\theta})$. The second identity of the proposition then directly follows from \eqref{eq:bound-lambdamax-eta}.

Symmetrically, we next provide a lower bound on $\textnormal{TV}_{\pi}^{(1)}(s_{\theta})$ for low-enough values of the risk. 
\begin{proposition}     \label{prop:second-derivative-lower-bound}
If $\Delta\geqslant8\frac{\sigma}{\mu}$, then for any $\theta$ such that $\cR_n(\theta) \leqslant \frac{1}{16n\sigma^2}$, one has
$ 
\textnormal{TV}_{\pi}^{(1)}(s_{\theta}) \geqslant \frac{\mu n^2\Delta}{2^{11} \sigma^2} .
$
\end{proposition}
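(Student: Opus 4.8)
The plan is to transfer the irregularity of $s^\star$ (Section~\ref{sec:regularity-empirical-score}) to any $s_\theta$ with small risk, using that a small value of $\cR_n(\theta)$ forces $s_\theta$ to be close to $s^\star$ in $L_2$ against the mixture density $p=\frac1n\sum_{j=1}^n\varphi_\sigma(\cdot-\mu x_j)$, where $\varphi_\sigma$ is the $\mathcal{N}(0,\sigma^2)$ density. Indeed, by the bias--variance decomposition of the quadratic objective (the cross term vanishes because $s^\star(Y)=\E[-\frac1{\sigma^2}(Y-\mu X)\mid Y]$), one has $\cR_n(s_\theta)-\cR_n(s^\star)=\E_{Y\sim p}[(s_\theta(Y)-s^\star(Y))^2]$, hence $\int_\R(s_\theta-s^\star)^2\,p\leqslant\cR_n(\theta)\leqslant\frac1{16n\sigma^2}$. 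The first step is to record the local shape of $s^\star$ near each data point: on the window $W_i=[\mu x_i-2\sigma,\mu x_i+2\sigma]$, the hypothesis $\Delta\geqslant8\sigma/\mu$ is exactly what guarantees $|y-\mu x_i|\leqslant\frac\mu4\Delta$, so Proposition~\ref{prop:variance-bound} applies; combined with the identities \eqref{eq:sstarprimer}, it gives (in this small-noise regime) $s^\star{}'(y)\leqslant-\frac1{2\sigma^2}$ on $W_i$ and $|s^\star(\mu x_i)|$ negligible compared with $1/\sigma$. In words, near each data point $s^\star$ crosses (approximately) zero with a steep negative slope of order $-1/\sigma^2$.

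Next I would localize the $L_2(p)$ bound. Since $p(y)\geqslant\frac1n\varphi_\sigma(y-\mu x_i)\geqslant\frac{e^{-2}}{\sqrt{2\pi}\,n\sigma}$ on $W_i$, one gets $\int_{W_i}(s_\theta-s^\star)^2\lesssim n\sigma\,\cR_n(\theta)\lesssim\frac1\sigma$; and because the $W_i$ are pairwise disjoint (again by $\Delta\geqslant8\sigma/\mu$), the sum of these integrals is still controlled by $\cR_n(\theta)\lesssim\frac1{n\sigma^2}$. A Markov-type argument then shows that a positive fraction of the indices $i$ in the central block $\lceil n/4\rceil+1\leqslant i\leqslant\lfloor 3n/4\rfloor-1$ — call them \emph{good} — satisfy simultaneously $\int_{W_i}(s_\theta-s^\star)^2\lesssim\frac1{n\sigma}$ and $\int_{W_{i+1}}(s_\theta-s^\star)^2\lesssim\frac1{n\sigma}$. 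For a good $i$, a pigeonhole argument over sub-intervals of length $\Theta(\sigma)$ produces points $\mu x_i\leqslant y^-<y^+\leqslant\mu x_i+2\sigma$ with $y^+-y^-=\Theta(\sigma)$ and a point $z\in[\mu x_{i+1}-2\sigma,\mu x_{i+1}-\frac{4\sigma}3]$, at each of which $|s_\theta-s^\star|\lesssim\frac1{\sqrt n\,\sigma}$.

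Using the steep slope of $s^\star$ on $W_i$ and $W_{i+1}$: $s^\star(y^-)-s^\star(y^+)\geqslant\frac1{2\sigma^2}(y^+-y^-)\gtrsim\frac1\sigma$, while $s^\star(y^+)\lesssim-\frac1\sigma$ and $s^\star(z)\gtrsim\frac1\sigma$. Since the approximation error is $o(1/\sigma)$ for $n$ large enough, this yields: $s_\theta$ drops by $\Omega(1/\sigma)$ over $[y^-,y^+]\subseteq[\mu x_i,\mu x_{i+1}]$, an interval of length $O(\sigma)$, so $\inf_{[\mu x_i,\mu x_{i+1}]}s_\theta'\leqslant-\Omega(1/\sigma^2)$; and $s_\theta(z)\geqslant c/\sigma>-c/\sigma\geqslant s_\theta(y^+)$ with $y^+<z$, so $s_\theta$ must increase somewhere in $[\mu x_i,\mu x_{i+1}]$, whence $\sup_{[\mu x_i,\mu x_{i+1}]}s_\theta'>0$. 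As $s_\theta'$ is piecewise constant, the total variation of $s_\theta'$ on $[\mu x_i,\mu x_{i+1}]$ is at least its oscillation there, hence $\gtrsim1/\sigma^2$.

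Finally I would assemble: bounding $\pi$ from below on each central interval by $\min_{[\mu x_i,\mu x_{i+1}]}\pi\gtrsim\mu n\Delta$, exactly as in the proof of Theorem~\ref{thm:empirical-score-irrgularity} (via Proposition~\ref{prop:weight-function-lower-bound}), and summing over the $\Omega(n)$ good central indices gives
$\textnormal{TV}_\pi^{(1)}(s_\theta)\geqslant\sum_{i\text{ good}}\big(\min_{[\mu x_i,\mu x_{i+1}]}\pi\big)\,\textnormal{TV}_{[\mu x_i,\mu x_{i+1}]}(s_\theta')\gtrsim n\cdot\mu n\Delta\cdot\frac1{\sigma^2}\gtrsim\frac{\mu n^2\Delta}{\sigma^2}$,
which is the claimed bound (the constant $2^{11}$ being obtained by optimizing the numerical factors, which I have not tracked). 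The main obstacle is the third step: making the ``signal'' (how much $s^\star$, hence $s_\theta$, must move near a data point, $\Theta(1/\sigma)$) dominate the $L_2(p)\to$ pointwise approximation error. This is precisely why one must (i) work with windows of the critical width $2\sigma$, so that the hypothesis $\Delta\geqslant8\sigma/\mu$ can be used to keep $s^\star{}'\lesssim-1/\sigma^2$ uniformly on them via Proposition~\ref{prop:variance-bound}, and (ii) first pass to a positive fraction of good indices through the Markov step, which upgrades the crude per-window error $O(1/\sigma)$ to the usable $O(1/(\sqrt n\,\sigma))$.
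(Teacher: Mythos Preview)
Your strategy---transfer the known irregularity of $s^\star$ to $s_\theta$ via the $L_2(p)$ bound $\int(s_\theta-s^\star)^2p\leqslant\cR_n(\theta)$---is natural, but it hides a quantitative gap that the paper's proof sidesteps. The issue is your first step: from Proposition~\ref{prop:variance-bound} you want $s^\star{}'(y)\leqslant-\tfrac{1}{2\sigma^2}$ on each $W_i$. By \eqref{eq:sstarprimer} this is $\tfrac{\mu^2}{\sigma^2}\V[W(y)]\leqslant\tfrac12$, and Proposition~\ref{prop:variance-bound} only gives $\V[W(y)]\leqslant4n^2\Delta^2e^{-\mu^2\Delta^2/(4\sigma^2)}$. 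Under the \emph{minimal} hypothesis $\mu\Delta/\sigma=8$ this reads $\tfrac{\mu^2}{\sigma^2}\V[W(y)]\leqslant256\,n^2e^{-16}\approx2.9\times10^{-5}n^2$, which exceeds $\tfrac12$ already for $n\gtrsim130$. The same $n$-dependence contaminates your value estimates $|s^\star(\mu x_i)|\ll1/\sigma$, $s^\star(y^+)\lesssim-1/\sigma$, etc. So as written, the argument only goes through when $\mu\Delta/\sigma\gtrsim\sqrt{\log n}$, not under the stated hypothesis $\Delta\geqslant8\sigma/\mu$. (The $n$-factors in Proposition~\ref{prop:variance-bound} are in fact loose---exploiting $|x_{i'}-x_i|\geqslant|i'-i|\Delta$ makes the relevant sums essentially geometric---but you would have to prove that sharper version.) A secondary looseness is the Markov/pigeonhole chain: you need a positive fraction of indices $i$ for which \emph{both} $i$ and $i+1$ are good, and a plain ``half are good'' Markov bound does not guarantee any consecutive pairs; you would need to take a higher threshold (say four times the average) so that at most a quarter are bad.

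The paper's proof takes a different, more direct route that avoids $s^\star$ entirely. For each $i$ it works with the $i$-th summand of $\cR_n(\theta)$, whose target is the \emph{exact} linear function $y\mapsto\tfrac{1}{\sigma^2}(\mu x_i-y)$, not its approximation $s^\star$. Two short contradiction arguments then give, for \emph{every} $i$: (i) points $a_i<\mu x_i<b_i$ in $[\mu x_i\pm\mu\Delta/2]$ with $s_\theta(a_i)>0>s_\theta(b_i)$, hence a point $c_i\in[\mu x_i,\mu x_{i+1}]$ with $s_\theta'(c_i)\geqslant0$; and (ii) via a Gaussian second-moment lemma (Lemma~\ref{lem:bound-gaussian-derivative}), a point $y_i\in[\mu x_i\pm\mu\Delta/2]$ with $s_\theta'(y_i)\leqslant-\tfrac{1}{\sigma^2}+\sqrt{2n\cR_n(\theta)/\sigma^2}\leqslant-\tfrac{1}{2\sigma^2}$. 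This yields an oscillation of $s_\theta'$ of order $1/\sigma^2$ on each interval $[y_i,c_i]$, uniformly in $n$, with no Markov step. Summing over the central even indices with the $\pi$ lower bound of Proposition~\ref{prop:weight-function-lower-bound} gives the claim. Your argument can be repaired along exactly these lines: compare $s_\theta$ to the local linear targets $\tilde s_i(y)=\tfrac{1}{\sigma^2}(\mu x_i-y)$ on each window (using the $i$-th term of the risk) instead of to $s^\star$; then the problematic $n$-factors never appear.
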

This result is connected with Theorem \ref{thm:empirical-score-irrgularity}. Indeed, the theorem gives a lower bound on $\textnormal{TV}_{\pi}^{(1)}(s_\theta)$ in the case where $s_\theta = s^\star$. Then, Proposition \ref{prop:second-derivative-lower-bound} relaxes this bound to all neural networks with small enough risk.
Combining the above upper and lower bounds on the $\textnormal{TV}_{\pi}^{(1)}$ metric, we see that a low-risk \textit{and} linearly stable minimum of $\mathcal{R}_n$ imposes a lower bound on $1/\eta$ of the order of $1/\sigma^2$. This observation, combined with Corollary~\ref{cor:regularity-empirical-optimal-score} and elementary  computations, leads to our main result.
\begin{theorem}
\label{thm:main-without-d2}
Let $\theta^\star\in\R^{2m}$ be a linearly stable local minimum of $\cR_n$. %
Then there exists $\sigma_0 > 0$, depending on $\mu$ and the training sample, such that if $\sigma \leqslant \sigma_0$ and $\eta > \frac{2^{12} \sigma^2}{\mu n^2 \Delta}$, one has 
\[
\cR_n(\theta^\star)  - \cR_n(s^\star) > \frac{\pi n^5 \mu^{3} \Delta^{3}}{2^{36} e^{1/2} A^4 \sigma^4} .
\]
\end{theorem}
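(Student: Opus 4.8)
The plan is to run a short dichotomy on the size of $\cR_n(\theta^\star)$ and, in the nontrivial regime, to sandwich $\textnormal{TV}_{\pi}^{(1)}(s_{\theta^\star})$ between the lower bound of Proposition~\ref{prop:second-derivative-lower-bound} and the upper bound of Proposition~\ref{prop:second-derivative-upper-bound}, the latter invoking linear stability through~\eqref{eq:bound-lambdamax-eta} (legitimate since $\cR_n$ is twice differentiable at $\theta^\star$ by Lemma~\ref{lem:twice-differentiability}). Throughout, $n$, $\Delta$, $\mu$ and the sample are fixed while $A$ grows polynomially in $1/\sigma$, so every auxiliary quantity below is a fixed power of $\sigma$; hence a single threshold $\sigma_0$, depending on $\mu$ and the sample, will validate the finitely many ``for $\sigma$ small enough'' statements used.

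\emph{Case 1: $\cR_n(\theta^\star) > \tfrac{1}{16 n \sigma^2}$.} By Corollary~\ref{cor:regularity-empirical-optimal-score}, $\cR_n(s^\star)$ is exponentially small in $1/\sigma^2$, while since $A \gtrsim 1/\sigma^6$ the target quantity $\tfrac{\pi n^5 \mu^3 \Delta^3}{2^{36} e^{1/2} A^4 \sigma^4}$ tends to $0$ as $\sigma\to 0$. Hence $\cR_n(\theta^\star) - \cR_n(s^\star) > \tfrac{1}{16 n \sigma^2} - \cR_n(s^\star)$ already exceeds the target for $\sigma\leqslant\sigma_0$, and the claim holds.

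\emph{Case 2: $\cR_n(\theta^\star) \leqslant \tfrac{1}{16 n \sigma^2}$.} For $\sigma$ small the condition $\Delta \geqslant 8\sigma/\mu$ of Proposition~\ref{prop:second-derivative-lower-bound} holds, giving $\textnormal{TV}_{\pi}^{(1)}(s_{\theta^\star}) \geqslant \tfrac{\mu n^2 \Delta}{2^{11} \sigma^2}$; meanwhile linear stability and Proposition~\ref{prop:second-derivative-upper-bound} upper bound the same quantity, and $\eta > \tfrac{2^{12} \sigma^2}{\mu n^2 \Delta}$ gives $\tfrac{1}{2\eta} < \tfrac{\mu n^2 \Delta}{2^{13} \sigma^2}$. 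Combining the three estimates leaves
\[
\frac{3\mu n^2 \Delta}{2^{13} \sigma^2} \;\leqslant\; \frac{\sqrt{\cR_n(\theta^\star)}}{2} + \frac{A}{\sqrt{2\pi}\,\sigma}\,\max\!\Big(\sqrt{2 n \cR_n(\theta^\star)},\ \big(\sqrt{2\pi e}\,A\sigma n \cR_n(\theta^\star)\big)^{1/3}\Big).
\]
Plugging the a priori bound $\cR_n(\theta^\star)\leqslant \tfrac{1}{16 n \sigma^2}$ into the right-hand side and using $A \gtrsim 1/\sigma^6$, one checks that, for $\sigma$ small, the maximum is attained at the cube-root term and that $\tfrac12\sqrt{\cR_n(\theta^\star)}$ is negligible relative to $\tfrac{3\mu n^2\Delta}{2^{13}\sigma^2}$; absorbing the latter into an arbitrarily small multiplicative slack, cubing, and isolating $\cR_n(\theta^\star)$ (using $(2\pi)^{3/2}/\sqrt{2\pi e} = 2\pi e^{-1/2}$) yields $\cR_n(\theta^\star) \geqslant \tfrac{2\pi c^3\, \mu^3 n^5 \Delta^3}{e^{1/2}\sigma^4 A^4}$ with $c$ arbitrarily close to $3/2^{13}$. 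Since $2\pi(3/2^{13})^3 = 27\pi/2^{38}$ is strictly larger than $\pi/2^{36} = 4\pi/2^{38}$, the same holds for $c$ close enough to $3/2^{13}$, so $\cR_n(\theta^\star)$ strictly exceeds the target for $\sigma$ small, with a residual gap of order $1/(\sigma^4 A^4)$ that decays only polynomially; as this gap dominates the exponentially small $\cR_n(s^\star)$ (Corollary~\ref{cor:regularity-empirical-optimal-score}), we conclude $\cR_n(\theta^\star) - \cR_n(s^\star) > \tfrac{\pi n^5 \mu^3 \Delta^3}{2^{36} e^{1/2} A^4 \sigma^4}$.

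The main difficulty is not conceptual but bookkeeping: one must verify that the cube-root term dominates \emph{both} $\tfrac12\sqrt{\cR_n(\theta^\star)}$ and $\tfrac{A}{\sqrt{2\pi}\sigma}\sqrt{2 n \cR_n(\theta^\star)}$ uniformly over the whole range $\cR_n(\theta^\star)\leqslant \tfrac{1}{16 n \sigma^2}$ (it suffices to test the worst case $\cR_n(\theta^\star) = \tfrac{1}{16 n \sigma^2}$), and then to keep the numerical constants tight enough — a crude $\max(a,b)\leqslant a+b$, compounded with the factor lost when isolating $\cR_n(\theta^\star)$, does not quite reach $2^{36}$. All these checks reduce to comparing powers of $\sigma$, which is precisely where the standing assumption that $A$ is polynomial in $1/\sigma$ is used.
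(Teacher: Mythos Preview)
Your proof is correct and uses the same ingredients as the paper's (Propositions~\ref{prop:second-derivative-upper-bound} and~\ref{prop:second-derivative-lower-bound}, Corollary~\ref{cor:regularity-empirical-optimal-score}, and the linear-stability bound~\eqref{eq:bound-lambdamax-eta}), combined in the same way. The only structural difference is that the paper argues by contraposition: assuming the excess risk is at most the target, together with the exponential bound on $\cR_n(s^\star)$, immediately yields the tight upper bound $\cR_n(\theta^\star)\leqslant \tfrac{\pi n^5\mu^3\Delta^3}{2^{35}e^{1/2}A^4\sigma^4}$, which both eliminates the need for your Case~1 and turns the auxiliary checks of your Case~2 (dominance of the cube-root branch in the $\max$, negligibility of $\tfrac12\sqrt{\cR_n(\theta^\star)}$) into direct substitutions rather than worst-case estimates against $\cR_n(\theta^\star)\leqslant \tfrac{1}{16n\sigma^2}$. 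Your direct argument with the dichotomy is equally valid and arrives at the same constant, at the price of the ``arbitrarily small multiplicative slack'' bookkeeping you flag; the contraposition simply sidesteps that slack.
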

The main message is that for a fixed (small enough) $\sigma$, if $\eta$ is sufficiently large, then the excess risk $\cR_n(\theta^\star)  - \cR_n(s^\star)$ cannot be made arbitrarily small. This is equivalent to stating that $s_{\theta^\star}$ cannot be arbitrarily close to $s^\star$, as can be seen by reformulating the theorem's conclusion as
\[
\frac{\pi n^5 \mu^{3} \Delta^{3}}{2^{36} e^{1/2} A^4 \sigma^4} < \cR_n(\theta^\star) - \cR_n(s^\star) = \frac1n\sum_{i=1}^n\E_{Y\sim \mathcal{N}(\mu x_i , \sigma^2)}\big[(s_{\theta^\star}(Y) - s^\star(Y;\mu,\sigma))^2\big].
\]
We refer to, e.g., \citet{coste2023diffusions} for a proof of this identity. The right-hand side can be interpreted as a weighted $L_2$ distance between $s_{\theta^\star}$ and $s^\star$, assigning greater weight around the noisy observations. 
In the small-noise regime, i.e., when $\sigma \to 0$, the two conditions of Theorem \ref{thm:main-without-d2} are automatically satisfied. This is true in particular for diffusions \eqref{eq:sde-forward}--\eqref{eq:sde-sol} as $t \to 0$. Therefore, in this context, Theorem \ref{thm:main-without-d2} suggests that setting a large learning rate prevents memorization of the training sample.

\section{Experiments}
\label{sec:experiments}
In this section, we experimentally assess the closeness of the learned model $s_{\theta^\star}$ to the empirical optimal score $s^\star$, as well as the memorization effect, depending on the learning rate and the dimension of the data. Following our theoretical framework, we fix the model to be a $2$-layer ReLU network of width $1000$. In all experiments, the number of epochs scales inversely with the learning rate, to ensure comparable convergence across models. Experimental details are given in Appendix~\ref{apx:experiments}. %

\paragraph{Connection between the learning rate and the proximity of $s_{\theta^\star}$ to $s^\star$.}
For the first experiment, the training data $x_1, \dots, x_{20}$ are 20 i.i.d.~samples of the one-dimensional standard Gaussian. We perform SGD on the score matching risk \eqref{eq:score-matching-objective}, for fixed values of $\mu$ and $\sigma$ (taken so $\mu^2 + \sigma^2 = 1$).
Figure~\ref{fig:exp-plot-function} shows the graphs of the learned models $s_{\theta^\star}$ trained with different learning rates, together with the empirical optimal score~$s^\star$. As expected from our theory (Theorem~\ref{thm:main-without-d2}), we observe that a larger learning rate~$\eta$ or smaller noise variance~$\sigma$ prevents $s_{\theta^\star}$ from converging to $s^\star$. This leads to a larger excess risk, which is confirmed in Figure~\ref{fig:excess-loss-1d} (left). We present in Figure~\ref{fig:excess-loss-1d} (right) the result of the analogous experiment in dimension $10$, highlighting the same pattern. This provides evidence that the model should not (fully) memorize the data, as we further investigate next. We also report the largest eigenvalue of the loss Hessian at the end of training for different learning rates (see Appendix \ref{apx:experiments} Figure \ref{fig:sharpness} (left and middle)), which confirms that taking a larger learning rate leads to convergence to a flatter region of parameter space, which is key behind our analysis.

\begin{figure}[ht]
    \hfill
    \includegraphics[width=0.45\linewidth]{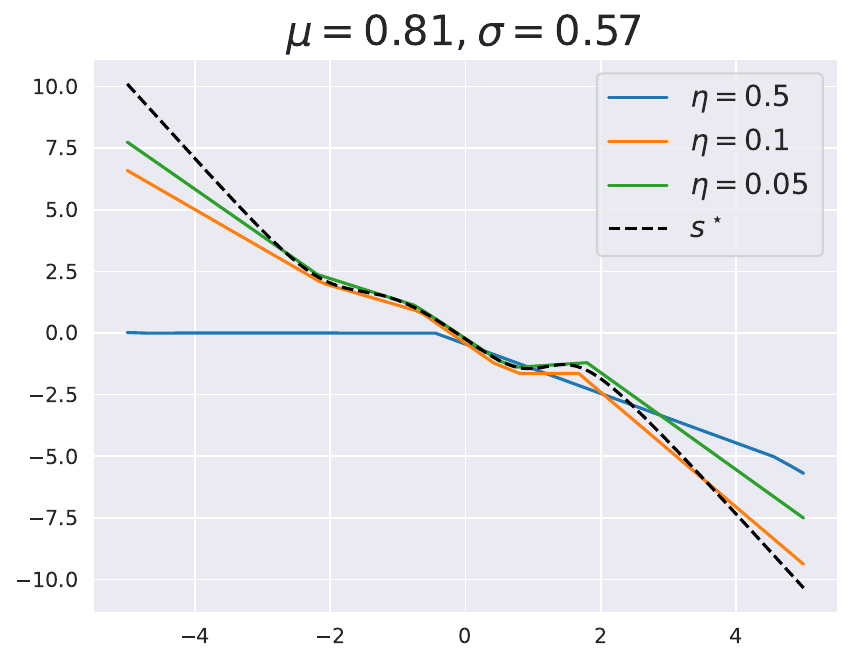}
    \hfill
    \includegraphics[width=0.45\linewidth]{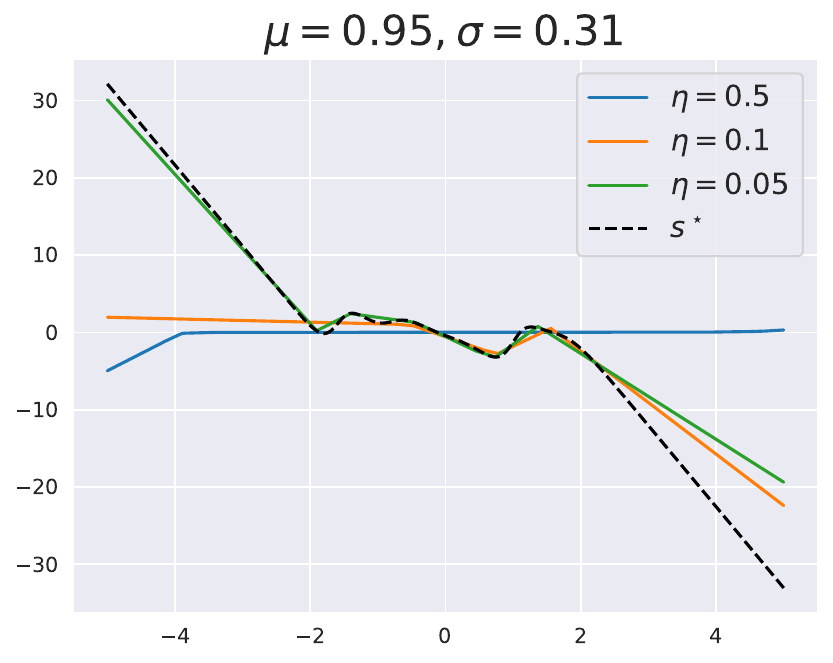}
    \hfill
    \hfill
    \caption{Graphs of the learned model $s_{\theta^\star}$ with different learning rates and of the empirical optimal score~$s^\star$, for two pairs of $(\mu, \sigma)$. As the learning rate decreases, $s_{\theta^\star}$ approaches $s^\star$. When $\sigma$ is smaller (right plot), $s^\star$ is more irregular, and a smaller learning rate is needed for $s_{\theta^\star}$ to approach~$s^\star$.}
    \label{fig:exp-plot-function}
\end{figure}

\begin{figure}[ht]
    \hfill
    \includegraphics[width=0.45\linewidth]{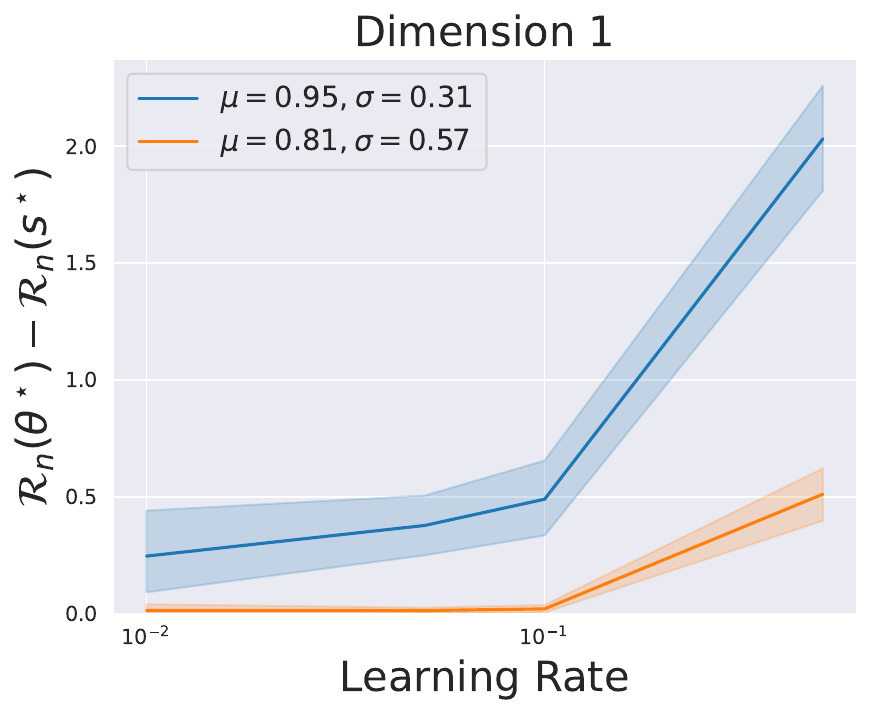}
    \hfill
    \includegraphics[width=0.45\linewidth]{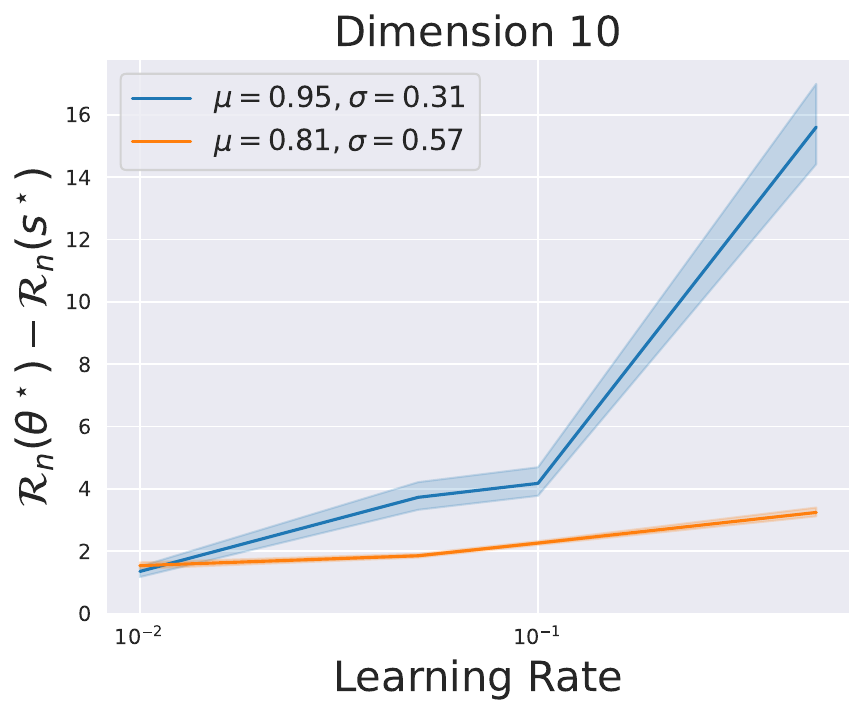}
    \hfill
    \hfill
    \caption{Excess risk of the learned model $s_{\theta^\star}$ trained with different learning rates, for two pairs of $(\mu, \sigma)$ and two dimensions of the data ($d=1$, left, and $d=10$, right). The $x$-axis is in logarithmic scale while the $y$-axis is in standard scale. Confidence intervals are computed with 30 simulations.
    }
    \label{fig:excess-loss-1d}
\end{figure}

\begin{figure}[ht]
    \centering
    \hfill
    \includegraphics[width=0.32\linewidth]{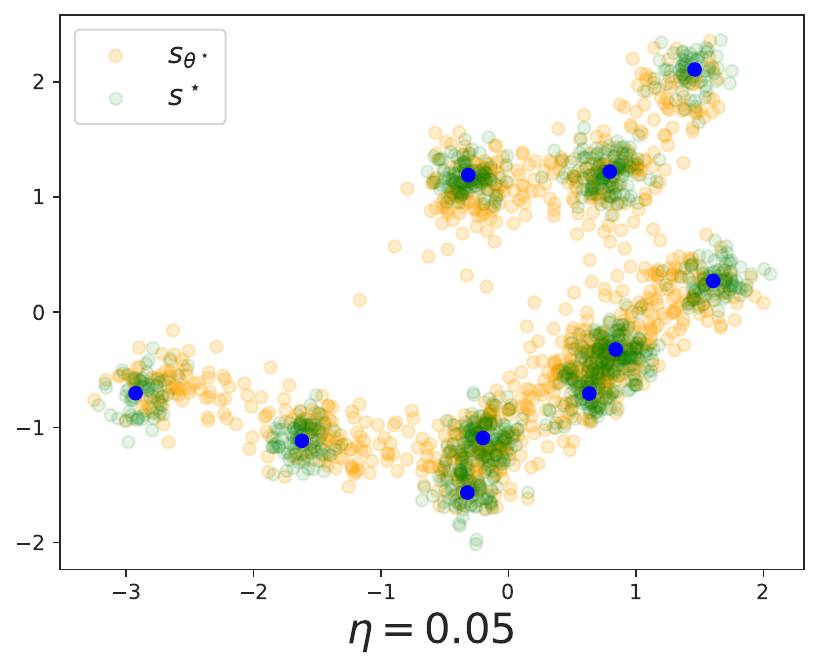}
    \hfill
    \includegraphics[width=0.32\linewidth]{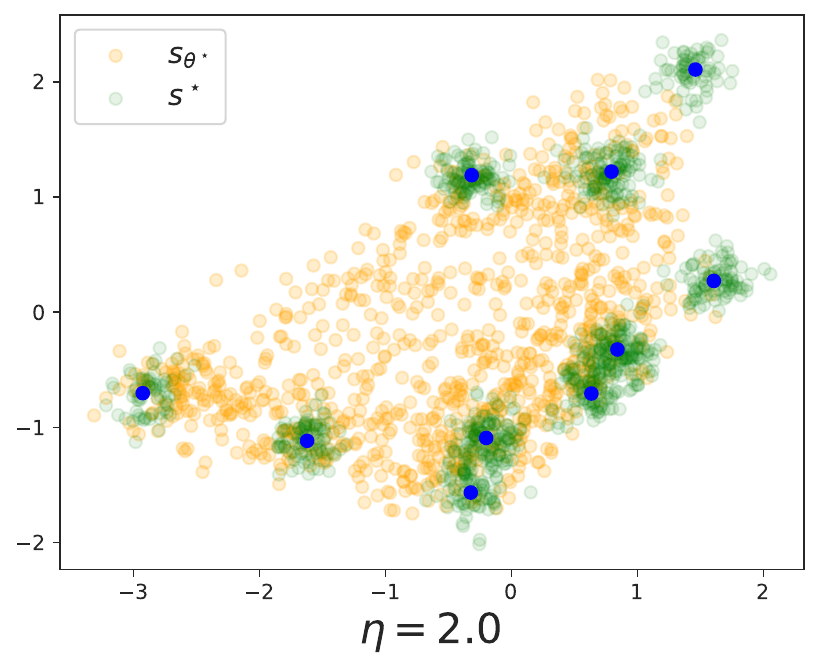}
    \hfill
    \includegraphics[width=0.32\linewidth]{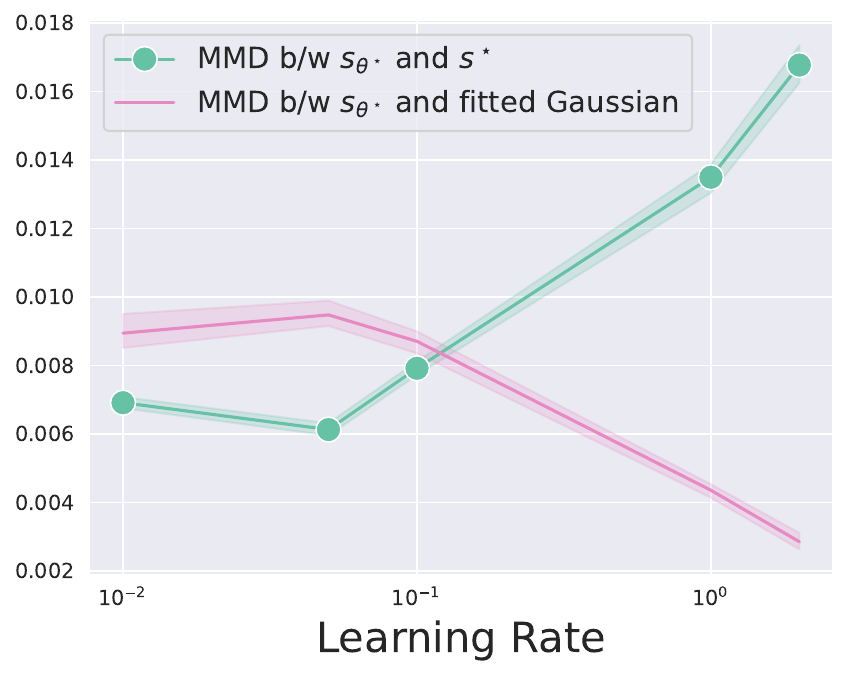}
    \hfill
    \caption{(left) Sample generated by $s^\star$ and $s_{\theta^\star}$ fitted with learning rate $0.05$. The training data are the blue points. (middle) Same with $s_{\theta^\star}$ fitted with learning rate $2$. (right) The green marked curve corresponds to the MMD between observations generated by $s^\star$ and observations generated by $s_{\theta^\star}$ (for different learning rates). The pink curve is the MMD between observations following the Gaussian distribution fitted on the training data and observations generated by $s_{\theta^\star}$.
    }
    \label{fig:exp-2d-memorization}
\end{figure}

\paragraph{Learning rate and memorization effect for denoising diffusions.}
In dimension $d=2$, we sample $10$ isotropic Gaussian observations, and aim to generate new ones using a diffusion model. We perform denoising score matching to learn the score $(t, x)\in\R^{d+1} \mapsto \nabla \log p_t(x)\in\R^d$ with a two-layer neural network $s_\theta(t, x)$ fitted on noisy observations (for various noise variances $\sigma(t)$). Running the diffusion with the learned score, we observe in Figure~\ref{fig:exp-2d-memorization} (left and middle) that a small learning rate leads to generating observations close to the training data, indicating memorization. As expected, simulating the diffusion with~$s^\star$ also induces memorization. In contrast, a larger learning rate leads to observations closer to the target distribution. This is further verified by measuring the maximum mean discrepancy (MMD) between generated and training data (Figure~\ref{fig:exp-2d-memorization}, right). This figure also suggests that a larger learning rate not only avoids memorization but also learns a Gaussian distribution fitted on the training sample. This is not too surprising since larger learning rates constraint the total variation of the derivative (Proposition~\ref{prop:second-derivative-upper-bound}); in the limit where $\textnormal{TV}_{\pi}^{(1)}(s_{\theta^\star}) \to~0$, the model can only implement a linear function, and the optimal linear score generates such a Gaussian distribution. This is also in line with findings of \citet{li2024understanding}---see Section \ref{sec:related-work}. In addition, we report the largest eigenvalue of the loss Hessian at thet end of training for different learning rates in Appendix \ref{apx:experiments} Figure \ref{fig:sharpness} (right).

\paragraph{Dimension and memorization effect for denoising diffusions.}
In this final experiment, we examine the effect of the dimension on memorization, while keeping the learning rate fixed and following the same experimental procedure as previously. Since our results depend on the minimum spacing between data points, which scales with dimensionality, memorization is expected to be less prominent in the high-dimensional regime---a common scenario in image generation. This is confirmed by Figure~\ref{fig:exp-dimension}~(left and middle), which shows that, as the dimension increases, the fitted neural network $s_{\theta^\star}$ generates observations that are less similar to the training observations. Figure \ref{fig:exp-dimension} (right) confirms this finding by measuring the MMD between observations generated by $s_{\theta^\star}$ and observations generated by $s^\star$. This indicates that avoiding memorization is easier in a high-dimensional setting. Further, in high dimension, the network seems to be learning a Gaussian distribution.
\begin{figure}
    \centering
    \hfill
    \includegraphics[width=0.32\linewidth]{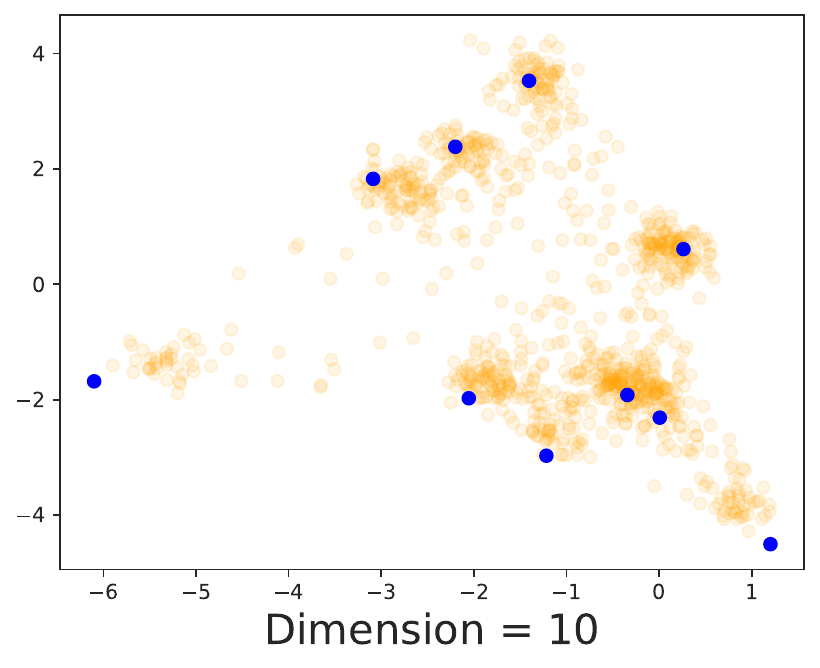}
    \hfill
    \includegraphics[width=0.32\linewidth]{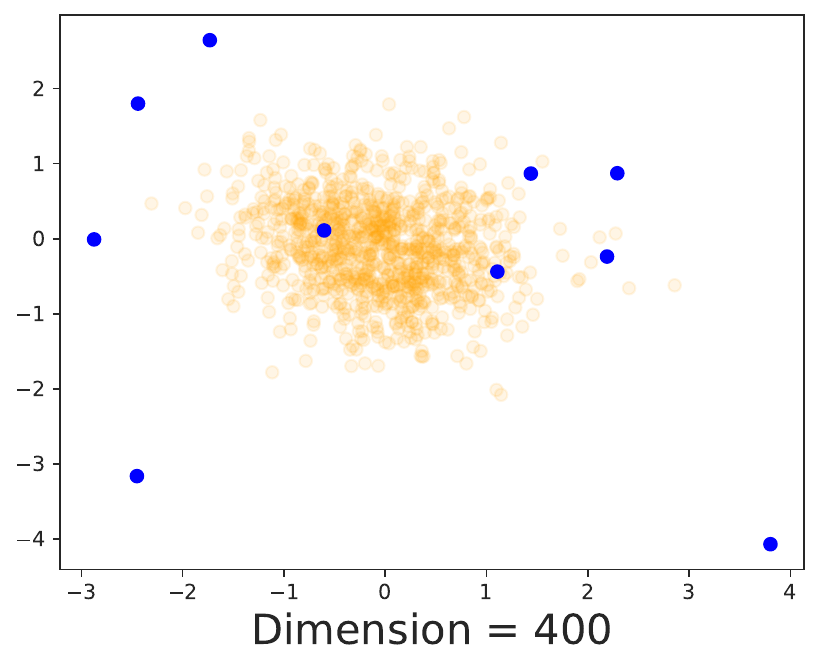}
    \hfill
    \includegraphics[width=0.32\linewidth]{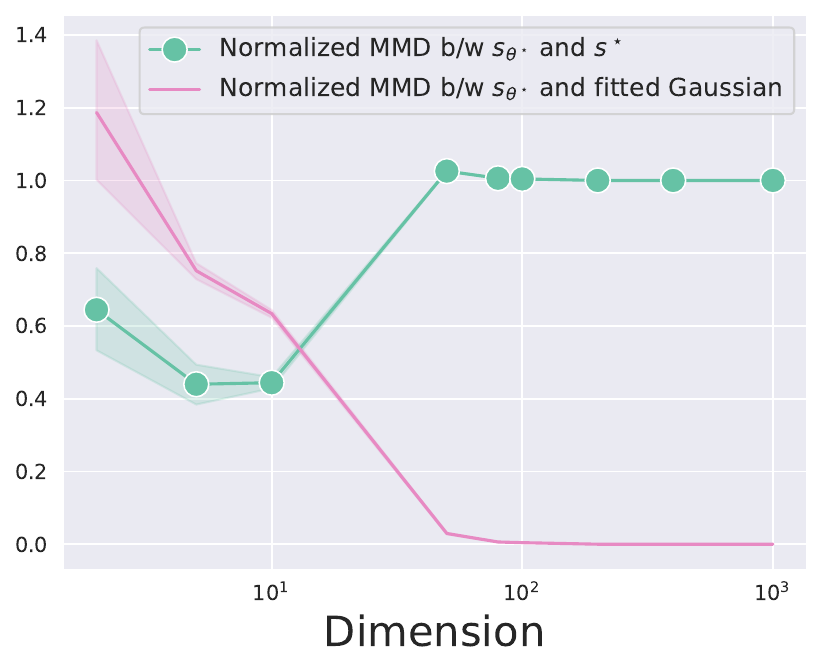}
    \hfill
    \caption{%
    (left) Sample generated by $s_{\theta^\star}$ in dimension $10$, projected on the first two axes. The training data are the blue points. (middle) Same in dimension~$400$. (right) The green marked curve corresponds to the MMD between observations generated by $s^\star$ and observations generated by $s_{\theta^\star}$, depending on the dimension. The pink curve is the MMD between observations following the Gaussian distribution fitted on the training data and observations generated by $s_{\theta^\star}$. Both distances are normalized by the MMD between observations generated by $s^\star$ and by the Gaussian distribution.}
    \label{fig:exp-dimension}
\end{figure}
\section{Conclusion}
\label{sec:conclusion}
In this paper, we provide a theoretical explanation for why neural networks do not fully memorize the training data. Specifically, our main result, Theorem \ref{thm:main-without-d2}, shows that using a large learning rate acts as a regularization mechanism. This mechanism prevents the network from converging to the empirical optimal score, which becomes unstable under stochastic gradient descent. 

Our approach can be extended in several directions. While our theoretical results focus on the one-dimensional case, it is our belief that a good understanding of the one-dimensional problem provides a solid basis for the more complex study of higher dimensional cases, for example by revealing the crucial role played by the data minimal spacing. On top of this, our experiments strongly suggest that similar results hold in a multivariate setting. Future work could explore this extension, for example by leveraging a multivariate version of the minimum stability framework \citep{nacson2023implicit}. Next, our results highlight the critical relationship between the learning rate $\eta$ and the noise variance $\sigma$. However, the effects of the sample size $n$ and the dimension $d$ remain unclear. An interesting question is to analyze the role of these hyperparameters, connecting with the literature discussing the impact of~$n$ (see Section \ref{sec:related-work}).
Finally, even if our results indicate that $\eta$ should not be too small to prevent memorization, they do not guarantee that a larger learning rate improves the quality of the generated data. Exploring trade-offs between memorization, generation quality, and training speed is a key avenue for future research.

\paragraph{Acknowledgments.} Authors warmly thank Florentin Guth for insightful discussions that motivated them towards this line of research, as well as Peter Bartlett, Raphaël Berthier, and Serena Booth for helpful comments. P.M.~is supported by a Google PhD Fellowship.

\bibliography{ref}

\begin{thebibliography}{54}
\providecommand{\natexlab}[1]{#1}
\providecommand{\url}[1]{\texttt{#1}}
\expandafter\ifx\csname urlstyle\endcsname\relax
  \providecommand{\doi}[1]{doi: #1}\else
  \providecommand{\doi}{doi: \begingroup \urlstyle{rm}\Url}\fi

\bibitem[Alcaraz and Strodthoff(2023)]{alcaraz2022diffusion}
Juan Miguel~Lopez Alcaraz and Nils Strodthoff.
\newblock Diffusion-based time series imputation and forecasting with structured state space models.
\newblock \emph{Transactions on Machine Learning Research}, 2023.

\bibitem[Amit et~al.(2021)Amit, Shaharbany, Nachmani, and Wolf]{amit2021segdiff}
Tomer Amit, Tal Shaharbany, Eliya Nachmani, and Lior Wolf.
\newblock Seg{D}iff: Image segmentation with diffusion probabilistic models.
\newblock \emph{arXiv:2112.00390}, 2021.

\bibitem[Andriushchenko et~al.(2023)Andriushchenko, Varre, Pillaud-Vivien, and Flammarion]{andriushchenko2023sgd}
Maksym Andriushchenko, Aditya~Vardhan Varre, Loucas Pillaud-Vivien, and Nicolas Flammarion.
\newblock {SGD} with large step sizes learns sparse features.
\newblock In Andreas Krause, Emma Brunskill, Kyunghyun Cho, Barbara Engelhardt, Sivan Sabato, and Jonathan Scarlett, editors, \emph{Proceedings of the 40th International Conference on Machine Learning}, volume 202, pages 903--925. PMLR, 2023.

\bibitem[Austin et~al.(2021)Austin, Johnson, Ho, Tarlow, and van~den Berg]{austin2021structured}
Jacob Austin, Daniel~D. Johnson, Jonathan Ho, Daniel Tarlow, and Rianne van~den Berg.
\newblock Structured denoising diffusion models in discrete state-spaces.
\newblock In M.~Ranzato, A.~Beygelzimer, Y.~Dauphin, P.S. Liang, and J.~Wortman Vaughan, editors, \emph{Advances in Neural Information Processing Systems}, volume~34, pages 17981--17993. Curran Associates, Inc., 2021.

\bibitem[Bach(2024)]{bach2024learning}
Francis Bach.
\newblock \emph{Learning Theory from First Principles}.
\newblock MIT press, Cambridge, Massachusetts, 2024.

\bibitem[Baptista et~al.(2025)Baptista, Dasgupta, Kovachki, Oberai, and Stuart]{baptista2025memorization}
Ricardo Baptista, Agnimitra Dasgupta, Nikola~B Kovachki, Assad Oberai, and Andrew~M Stuart.
\newblock Memorization and regularization in generative diffusion models.
\newblock \emph{arXiv preprint arXiv:2501.15785}, 2025.

\bibitem[Baranchuk et~al.(2022)Baranchuk, Rubachev, Voynov, Khrulkov, and Babenko]{baranchuk2021label}
Dmitry Baranchuk, Ivan Rubachev, Andrey Voynov, Valentin Khrulkov, and Artem Babenko.
\newblock Label-efficient semantic segmentation with diffusion models.
\newblock In \emph{The Tenth International Conference on Learning Representations}, 2022.

\bibitem[Benton et~al.(2024)Benton, {d}e Bortoli, Doucet, and Deligiannidis]{Benton2024Nearly}
J.~Benton, V.~{d}e Bortoli, A.~Doucet, and G.~Deligiannidis.
\newblock Nearly $d$-linear convergence bounds for diffusion models via stochastic localization.
\newblock In \emph{The Twelfth International Conference on Learning Representations}, 2024.

\bibitem[Biroli et~al.(2024)Biroli, Bonnaire, de~Bortoli, and M{\'e}zard]{biroli2024dynamical}
Giulio Biroli, Tony Bonnaire, Valentin de~Bortoli, and Marc M{\'e}zard.
\newblock Dynamical regimes of diffusion models.
\newblock \emph{Nature Communications}, 15\penalty0 (1):\penalty0 9957, 2024.

\bibitem[Boursier and Flammarion(2023)]{boursier2023penalising}
Etienne Boursier and Nicolas Flammarion.
\newblock Penalising the biases in norm regularisation enforces sparsity.
\newblock In A.~Oh, T.~Naumann, A.~Globerson, K.~Saenko, M.~Hardt, and S.~Levine, editors, \emph{Advances in Neural Information Processing Systems}, volume~36, pages 57795--57824. Curran Associates, Inc., 2023.

\bibitem[Carlini et~al.(2023)Carlini, Hayes, Nasr, Jagielski, Sehwag, Tram{\`e}r, Balle, Ippolito, and Wallace]{carlini2023extracting}
Nicolas Carlini, Jamie Hayes, Milad Nasr, Matthew Jagielski, Vikash Sehwag, Florian Tram{\`e}r, Borja Balle, Daphne Ippolito, and Eric Wallace.
\newblock Extracting training data from diffusion models.
\newblock In \emph{32nd USENIX Security Symposium (USENIX Security 23)}, pages 5253--5270, 2023.

\bibitem[Chen et~al.(2021)Chen, Zhang, Zen, Weiss, Norouzi, and Chan]{chen2020wavegrad}
Nanxin Chen, Yu~Zhang, Heiga Zen, Ron~J. Weiss, Mohammad Norouzi, and William Chan.
\newblock Wavegrad: Estimating gradients for waveform generation.
\newblock In \emph{The Ninth International Conference on Learning Representations}, 2021.

\bibitem[Chizat and Bach(2018)]{chizat2018global}
L\'{e}na\"{\i}c Chizat and Francis Bach.
\newblock On the global convergence of gradient descent for over-parameterized models using optimal transport.
\newblock In S.~Bengio, H.~Wallach, H.~Larochelle, K.~Grauman, N.~Cesa-Bianchi, and R.~Garnett, editors, \emph{Advances in Neural Information Processing Systems}, volume~31, pages 3040--3050. Curran Associates, Inc., 2018.

\bibitem[Chizat et~al.(2019)Chizat, Oyallon, and Bach]{chizat2019lazy}
L\'{e}na\"{\i}c Chizat, Edouard Oyallon, and Francis Bach.
\newblock On lazy training in differentiable programming.
\newblock In H.~Wallach, H.~Larochelle, A.~Beygelzimer, F.~d\textquotesingle Alch\'{e}-Buc, E.~Fox, and R.~Garnett, editors, \emph{Advances in Neural Information Processing Systems}, volume~32, pages 2937--2947. Curran Associates, Inc., 2019.

\bibitem[Coste(2023)]{coste2023diffusions}
Simon Coste.
\newblock Diffusion models, 2023.
\newblock URL \url{https://scoste.fr/posts/diffusion/#denoising_score_matching}.
\newblock Accessed on 2025-01-20.

\bibitem[Daras et~al.(2023)Daras, Shah, Dagan, Gollakota, Dimakis, and Klivans]{daras2024ambient}
Giannis Daras, Kulin Shah, Yuval Dagan, Aravind Gollakota, Alex Dimakis, and Adam Klivans.
\newblock Ambient diffusion: Learning clean distributions from corrupted data.
\newblock In A.~Oh, T.~Naumann, A.~Globerson, K.~Saenko, M.~Hardt, and S.~Levine, editors, \emph{Advances in Neural Information Processing Systems}, volume~36, pages 288--313. Curran Associates, Inc., 2023.

\bibitem[Gordon(1941)]{gordon1941values}
Robert~Donald Gordon.
\newblock Values of mills' ratio of area to bounding ordinate and of the normal probability integral for large values of the argument.
\newblock \emph{The Annals of Mathematical Statistics}, 12\penalty0 (3):\penalty0 364--366, 1941.

\bibitem[Gu et~al.(2023)Gu, Du, Pang, Li, Lin, and Wang]{gu2023memorization}
Xiangming Gu, Chao Du, Tianyu Pang, Chongxuan Li, Min Lin, and Ye~Wang.
\newblock On memorization in diffusion models.
\newblock \emph{arXiv:2310.02664}, 2023.

\bibitem[Gy{\"o}rfi et~al.(2002)Gy{\"o}rfi, Kohler, Krzy{\.z}ak, and Walk]{gyorfi2006distribution}
L{\'a}szl{\'o} Gy{\"o}rfi, Michael Kohler, Adam Krzy{\.z}ak, and Harro Walk.
\newblock \emph{A Distribution-Free Theory of Nonparametric Regression}.
\newblock Springer, New York, 2002.

\bibitem[Ho et~al.(2020)Ho, Jain, and Abbeel]{ho2020denoising}
Jonathan Ho, Ajay Jain, and Pieter Abbeel.
\newblock Denoising diffusion probabilistic models.
\newblock In H.~Larochelle, M.~Ranzato, R.~Hadsell, M.F. Balcan, and H.~Lin, editors, \emph{Advances in Neural Information Processing Systems}, volume~33, pages 6840--6851. Curran Associates, Inc., 2020.

\bibitem[Hyv{{\"a}}rinen(2005)]{hyvarinen2005estimation}
Aapo Hyv{{\"a}}rinen.
\newblock Estimation of non-normalized statistical models by score matching.
\newblock \emph{Journal of Machine Learning Research}, 6:\penalty0 695--709, 2005.

\bibitem[Kadkhodaie et~al.(2024)Kadkhodaie, Guth, Simoncelli, and Mallat]{kadkhodaie2023generalization}
Zahra Kadkhodaie, Florentin Guth, Eero~P. Simoncelli, and St{\'e}phane Mallat.
\newblock Generalization in diffusion models arises from geometry-adaptive harmonic representations.
\newblock In \emph{The Twelfth International Conference on Learning Representations}, 2024.

\bibitem[Li et~al.(2024{\natexlab{a}})Li, Chen, and Li]{li2024good}
Sixu Li, Shi Chen, and Qin Li.
\newblock A good score does not lead to a good generative model.
\newblock \emph{arXiv:2401.04856}, 2024{\natexlab{a}}.

\bibitem[Li et~al.(2024{\natexlab{b}})Li, Dai, and Qu]{li2024understanding}
Xiang Li, Yixiang Dai, and Qing Qu.
\newblock Understanding generalizability of diffusion models requires rethinking the hidden {G}aussian structure.
\newblock In \emph{Advances in Neural Information Processing Systems}, volume~36. Curran Associates, Inc., 2024{\natexlab{b}}.

\bibitem[Li et~al.(2019)Li, Wei, and Ma]{li2019towards}
Yuanzhi Li, Colin Wei, and Tengyu Ma.
\newblock Towards explaining the regularization effect of initial large learning rate in training neural networks.
\newblock In H.~Wallach, H.~Larochelle, A.~Beygelzimer, F.~d\textquotesingle Alch\'{e}-Buc, E.~Fox, and R.~Garnett, editors, \emph{Advances in Neural Information Processing Systems}, volume~32, pages 11674--11685. Curran Associates, Inc., 2019.

\bibitem[Mei et~al.(2018)Mei, Montanari, and Nguyen]{mei2018mean}
Song Mei, Andrea Montanari, and Phan-Minh Nguyen.
\newblock A mean field view of the landscape of two-layer neural networks.
\newblock \emph{Proceedings of the National Academy of Sciences}, 115:\penalty0 E7665--E7671, 2018.

\bibitem[Milanfar and Delbracio(2024)]{milanfar2024denoising}
Peyman Milanfar and Mauricio Delbracio.
\newblock Denoising: A powerful building-block for imaging, inverse problems, and machine learning.
\newblock \emph{arXiv:2409.06219}, 2024.

\bibitem[Miyasawa(1961)]{miyasawa1961empirical}
Koichi Miyasawa.
\newblock An empirical {B}ayes estimator of the mean of a normal population.
\newblock \emph{Bulletin of the International Statistical Institute}, 38:\penalty0 181--188, 1961.

\bibitem[Molchanov and Reznikova(1983)]{Molchanov1983}
Stanislav~A. Molchanov and A.~Ya. Reznikova.
\newblock Limit {{theorems}} for {{random partitions}}.
\newblock \emph{Theory of Probability \& Its Applications}, 27:\penalty0 310--323, 1983.

\bibitem[Mulayoff et~al.(2021)Mulayoff, Michaeli, and Soudry]{mulayoff2021implicit}
Rotem Mulayoff, Tomer Michaeli, and Daniel Soudry.
\newblock The implicit bias of minima stability: A view from function space.
\newblock In M.~Ranzato, A.~Beygelzimer, Y.~Dauphin, P.S. Liang, and J.~Wortman Vaughan, editors, \emph{Advances in Neural Information Processing Systems}, volume~34, pages 17749--17761. Curran Associates, Inc., 2021.

\bibitem[Nacson et~al.(2023)Nacson, Mulayoff, Ongie, Michaeli, and Soudry]{nacson2023implicit}
Mor~Shpigel Nacson, Rotem Mulayoff, Greg Ongie, Tomer Michaeli, and Daniel Soudry.
\newblock The implicit bias of minima stability in multivariate shallow {R}e{LU} networks.
\newblock In \emph{The Eleventh International Conference on Learning Representations}, 2023.

\bibitem[Nagaraja et~al.(2015)Nagaraja, Bharath, and Zhang]{nagaraja2015spacings}
Haikady~Navada Nagaraja, Karthik Bharath, and Fangyuan Zhang.
\newblock Spacings around an order statistic.
\newblock \emph{Annals of the Institute of Statistical Mathematics}, 67:\penalty0 515--540, 2015.

\bibitem[Qiao et~al.(2024)Qiao, Zhang, Singh, Soudry, and Wang]{qiao2024stableminimaoverfitunivariate}
Dan Qiao, Kaiqi Zhang, Esha Singh, Daniel Soudry, and Yu-Xiang Wang.
\newblock Stable minima cannot overfit in univariate {R}e{LU} networks: {G}eneralization by large step sizes.
\newblock \emph{arXiv:2406.06838}, 2024.

\bibitem[Ramesh et~al.(2022)Ramesh, Dhariwal, Nichol, Chu, and Chen]{ramesh2022hierarchical}
Aditya Ramesh, Prafulla Dhariwal, Alex Nichol, Casey Chu, and Mark Chen.
\newblock Hierarchical text-conditional image generation with clip latents.
\newblock \emph{arXiv:2204.06125}, 2022.

\bibitem[Raya and Ambrogioni(2023)]{raya2023spontaneous}
Gabriel Raya and Luca Ambrogioni.
\newblock Spontaneous symmetry breaking in generative diffusion models.
\newblock In A.~Oh, T.~Naumann, A.~Globerson, K.~Saenko, M.~Hardt, and S.~Levine, editors, \emph{Advances in Neural Information Processing Systems}, volume~36, pages 66377--66389. Curran Associates, Inc., 2023.

\bibitem[Robbins(1956)]{robbins1956empirical}
Herbert Robbins.
\newblock An empirical {B}ayes approach to statistics.
\newblock In J.~Neyman, editor, \emph{Proceedings of the Third Berkeley Symposium on Mathematical Statistics and Probability, Volume 1: Contributions to the Theory of Statistics}, pages 157--163, 1956.

\bibitem[Rombach et~al.(2022)Rombach, Blattmann, Lorenz, Esser, and Ommer]{rombach2022high}
Robin Rombach, Andreas Blattmann, Dominik Lorenz, Patrick Esser, and Bj{\"o}rn Ommer.
\newblock High-resolution image synthesis with latent diffusion models.
\newblock In \emph{Proceedings of the EEE/CVF Conference on Computer Vision and Pattern Recognition}, pages 10684--10695, 2022.

\bibitem[Rotskoff and Vanden-Eijnden(2022)]{rotskoff2018neural}
Grant Rotskoff and Eric Vanden-Eijnden.
\newblock Trainability and accuracy of artificial neural networks: An interacting particle system approach.
\newblock \emph{Communications on Pure and Applied Mathematics}, 75:\penalty0 1889--1935, 2022.

\bibitem[Savinov et~al.(2022)Savinov, Chung, Binkowski, Elsen, and Oord]{savinov2021step}
Nikolay Savinov, Junyoung Chung, Mikolaj Binkowski, Erich Elsen, and Aaron van~den Oord.
\newblock Step-unrolled denoising autoencoders for text generation.
\newblock In \emph{The Tenth International Conference on Learning Representations}, 2022.

\bibitem[Sohl-Dickstein et~al.(2015)Sohl-Dickstein, Weiss, Maheswaranathan, and Ganguli]{sohl2015deep}
Jascha Sohl-Dickstein, Eric Weiss, Niru Maheswaranathan, and Surya Ganguli.
\newblock Deep unsupervised learning using nonequilibrium thermodynamics.
\newblock In F.~Bach and D.~Blei, editors, \emph{Proceedings of the 32nd International Conference on Machine Learning}, volume~37, pages 2256--2265. PMLR, 2015.

\bibitem[Somepalli et~al.(2023{\natexlab{a}})Somepalli, Singla, Goldblum, Geiping, and Goldstein]{somepalli2023diffusion}
Gowthami Somepalli, Vasu Singla, Micah Goldblum, Jonas Geiping, and Tom Goldstein.
\newblock Diffusion art or digital forgery? investigating data replication in diffusion models.
\newblock In \emph{Proceedings of the IEEE/CVF Conference on Computer Vision and Pattern Recognition}, pages 6048--6058, 2023{\natexlab{a}}.

\bibitem[Somepalli et~al.(2023{\natexlab{b}})Somepalli, Singla, Goldblum, Geiping, and Goldstein]{somepalli2023understanding}
Gowthami Somepalli, Vasu Singla, Micah Goldblum, Jonas Geiping, and Tom Goldstein.
\newblock Understanding and mitigating copying in diffusion models.
\newblock In A.~Oh, T.~Naumann, A.~Globerson, K.~Saenko, M.~Hardt, and S.~Levine, editors, \emph{Advances in Neural Information Processing Systems}, volume~36, pages 47783--47803. Curran Associates, Inc., 2023{\natexlab{b}}.

\bibitem[Song and Ermon(2019)]{song2019generative}
Yang Song and Stefano Ermon.
\newblock Generative modeling by estimating gradients of the data distribution.
\newblock In H.~Wallach, H.~Larochelle, A.~Beygelzimer, F.~d\textquotesingle Alch\'{e}-Buc, E.~Fox, and R.~Garnett, editors, \emph{Advances in Neural Information Processing Systems}, volume~32. Curran Associates, Inc., 2019.

\bibitem[Teodoro et~al.(2016)Teodoro, Bioucas-Dias, and Figueiredo]{teodoro2016image}
Afonso~M. Teodoro, Jos{\'e}~M. Bioucas-Dias, and M{\'a}rio~A.T. Figueiredo.
\newblock Image restoration and reconstruction using variable splitting and class-adapted image priors.
\newblock In \emph{2016 IEEE International Conference on Image Processing}, pages 3518--3522. IEEE, 2016.

\bibitem[Venkatakrishnan et~al.(2013)Venkatakrishnan, Bouman, and Wohlberg]{venkatakrishnan2013plug}
Singanallur~V. Venkatakrishnan, Charles~A. Bouman, and Brendt Wohlberg.
\newblock Plug-and-play priors for model based reconstruction.
\newblock In \emph{2013 IEEE Global Conference on Signal and Information Processing}, pages 945--948. IEEE, 2013.

\bibitem[Vincent(2011)]{vincent2011connection}
Pascal Vincent.
\newblock A connection between score matching and denoising autoencoders.
\newblock \emph{Neural computation}, 23:\penalty0 1661--1674, 2011.

\bibitem[Vyas et~al.(2023)Vyas, Kakade, and Barak]{vyas2023provable}
Nikhil Vyas, Sham~M. Kakade, and Boaz Barak.
\newblock On provable copyright protection for generative models.
\newblock In A.~Krause, E.~Brunskill, K.~Cho, B.~Engelhardt, S.~Sabato, and J.~Scarlett, editors, \emph{Proceedings of the 40th International Conference on Machine Learning}, volume 202, pages 35277--35299. PMLR, 2023.

\bibitem[Wu et~al.(2018)Wu, Ma, and E]{wu2018sgd}
Lei Wu, Chao Ma, and Weinan E.
\newblock How {SGD} selects the global minima in over-parameterized learning: A dynamical stability perspective.
\newblock In S.~Bengio, H.~Wallach, H.~Larochelle, K.~Grauman, N.~Cesa-Bianchi, and R.~Garnett, editors, \emph{Advances in Neural Information Processing Systems}, volume~31. Curran Associates, Inc., 2018.

\bibitem[Wu et~al.(2019)Wu, Sun, Liu, and Kamilov]{wu2019online}
Zihui Wu, Yu~Sun, Jiaming Liu, and Ulugbek Kamilov.
\newblock Online regularization by denoising with applications to phase retrieval.
\newblock In \emph{Proceedings of the IEEE/CVF International Conference on Computer Vision Workshops}, 2019.

\bibitem[Yang and Hu(2021)]{yang2021tensor}
Greg Yang and Edward~J. Hu.
\newblock Tensor programs iv: Feature learning in infinite-width neural networks.
\newblock In Marina Meila and Tong Zhang, editors, \emph{Proceedings of the 38th International Conference on Machine Learning}, volume 139, pages 11727--11737. PMLR, 2021.

\bibitem[Yi et~al.(2023)Yi, Sun, and Li]{yi2023generalization}
Mingyang Yi, Jiacheng Sun, and Zhenguo Li.
\newblock On the generalization of diffusion model.
\newblock \emph{arXiv:2305.14712}, 2023.

\bibitem[Yoon et~al.(2023)Yoon, Choi, Kwon, and Ryu]{yoon2023diffusion}
TaeHo Yoon, Joo~Young Choi, Sehyun Kwon, and Ernest~K. Ryu.
\newblock Diffusion probabilistic models generalize when they fail to memorize.
\newblock In \emph{ICML 2023 Workshop on Structured Probabilistic Inference $\&$ Generative Modeling}, 2023.

\bibitem[Zeno et~al.(2024)Zeno, Ongie, Blumenfeld, Weinberger, and Soudry]{zeno2024minimum}
Chen Zeno, Greg Ongie, Yaniv Blumenfeld, Nir Weinberger, and Daniel Soudry.
\newblock How do minimum-norm shallow denoisers look in function space?
\newblock In \emph{Advances in Neural Information Processing Systems}, volume~36. Curran Associates, Inc., 2024.

\bibitem[Zhang et~al.(2023)Zhang, Tzun, Hern, Wang, and Kawaguchi]{zhang2023copyright}
Yang Zhang, Teoh~Tze Tzun, Lim~Wei Hern, Haonan Wang, and Kenji Kawaguchi.
\newblock On copyright risks of text-to-image diffusion models.
\newblock \emph{arXiv:2311.12803}, 2023.

\end{thebibliography}

\newpage

\appendix

\begin{center}
    \LARGE \textbf{Appendix}
\end{center}

\paragraph{Organization of the Appendix.} Appendix \ref{apx:auxiliary-props} presents two additional propositions of interest preliminary to the proofs of the results of the main text, which are given in Appendix \ref{apx:proofs}. Appendix \ref{sec:technical-lemma} is dedicated to technical lemmas.
Finally, Appendix \ref{apx:experiments} details our experimental setting.

\section{Auxiliary propositions}
\label{apx:auxiliary-props}

The next proposition provides bounds on the weight function $\pi$ defined in Section \ref{sec:regularity-empirical-score}.
\begin{proposition}
\label{prop:weight-function-lower-bound}
Let $1\leqslant i\leqslant n-1$ and $y\in[\mu x_i ,  \mu x_{i+1}]$. Then 
\begin{align*}
 \pi(y;\mu,\sigma)
&\geqslant \frac{\mu}{n^2}\Big(\frac12-e^{-\frac{ \mu^2\Delta^2}{2\sigma^2 }}\Big)\min\Big(\frac{i^2(i-1)}{2}, \frac{(n-i)^2(n-i+1)}{2}\Big)\Delta.
\end{align*}
On the other hand, for any $y\in\R$, 
\[
 \pi(y;\mu,\sigma) \leqslant \mu(x_n-x_1).
\]
\end{proposition}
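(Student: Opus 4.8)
The plan is to reduce everything to elementary estimates on the order statistics of the sample together with one Gaussian tail bound. Throughout, fix $z\in\R\setminus\{\mu x_1,\dots,\mu x_n\}$ and set $k(z)=\#\{j:\mu x_j<z\}$; rewriting the conditional expectations appearing in $\pi^\pm$ as unnormalized sums gives
\[
\pi^-(z;\mu,\sigma)=\frac{k(z)}{n^2}\sum_{j=1}^{k(z)}(z-\mu x_j),\qquad \pi^+(z;\mu,\sigma)=\frac{n-k(z)}{n^2}\sum_{j=k(z)+1}^{n}(\mu x_j-z),
\]
and one checks that $\pi^-$ is nondecreasing and $\pi^+$ nonincreasing in $z$ (each is piecewise affine, with jumps only in the favorable direction at the points $\mu x_j$), both vanishing outside $[\mu x_1,\mu x_n]$.

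For the upper bound I would simply write, for $z\in[\mu x_1,\mu x_n]$, $\pi^-(z)=\PP(\mu U<z)\,\E[(z-\mu U)\mathbf 1_{\mu U<z}]\leqslant 1\cdot(z-\mu x_1)\leqslant\mu(x_n-x_1)$, and symmetrically for $\pi^+$; since $\pi^\pm\equiv 0$ elsewhere, $\min\{\pi^+(y-\xi;\mu,\sigma),\pi^-(y-\xi;\mu,\sigma)\}\leqslant\mu(x_n-x_1)$ pointwise in $\xi$, so taking the expectation over $\xi$ yields $\pi(y;\mu,\sigma)\leqslant\mu(x_n-x_1)$.

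For the lower bound, the core is the interplay between the minimum spacing and a Gaussian window. Fix $1\leqslant i\leqslant n-1$ and $y\in[\mu x_i,\mu x_{i+1}]$. From $x_j-x_{j-1}\geqslant\Delta$ one gets $z-\mu x_{k(z)-\ell}\geqslant\ell\,\mu\Delta$, hence $\sum_{j=1}^{k(z)}(z-\mu x_j)\geqslant\mu\Delta\,\tfrac{k(z)(k(z)-1)}{2}$ and $\pi^-(z)\geqslant\tfrac{\mu\Delta}{2n^2}\,k(z)^2(k(z)-1)$; by monotonicity in $z$ and in $k$, this gives $\pi^-(z)\geqslant\tfrac{\mu\Delta}{2n^2}\,i^2(i-1)$ as soon as $z\geqslant\mu x_i$. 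Symmetrically, after sharpening the estimate right at the data point $\mu x_i$ (where $n-k(z)\geqslant n-i+1$ produces the extra $+1$), one obtains $\pi^+(z)\geqslant\tfrac{\mu\Delta}{2n^2}\,(n-i)^2(n-i+1)$ as soon as $z\leqslant\mu x_i$. I would then lower bound $\pi(y;\mu,\sigma)=\E_\xi[\min\{\pi^+(y-\xi),\pi^-(y-\xi)\}]$ by restricting the expectation to an event $\{y-\xi\in J\}$, where $J$ is a length-$\mu\Delta$ window containing $y$, positioned to the left or to the right of $y$ according to whether $y$ lies in the left or right half of $[\mu x_i,\mu x_{i+1}]$, so that on $J$ both $\pi^-$ and $\pi^+$ stay above a fixed fraction of $\tfrac{\mu\Delta}{2n^2}\min\{i^2(i-1),(n-i)^2(n-i+1)\}$; here the minimum-spacing hypothesis is exactly what prevents $k(z)$ from drifting too far as $z$ ranges over $J$. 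The probability of this event is at least $\PP(0\leqslant\xi\leqslant\mu\Delta)=\Phi(\mu\Delta/\sigma)-\tfrac12\geqslant\tfrac12-e^{-\mu^2\Delta^2/(2\sigma^2)}$, by the standard tail bound $1-\Phi(t)\leqslant e^{-t^2/2}$ for $t\geqslant0$. Multiplying the pointwise lower bound on $J$ by this probability, and absorbing the universal constant into the factor $\tfrac12$ already present in the statement, delivers the claim.

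The \emph{main obstacle} is precisely the positioning of the window $J$ in the last step: $\pi^-(z)$ is large only when $z$ sits to the right of enough sample points and $\pi^+(z)$ only when it sits to the left, so the two favorable regions barely overlap, essentially around $z=\mu x_i$; turning ``$z$ stays within $\mu\Delta$ of $y$'' into simultaneous lower bounds on $\pi^-(z)$ and $\pi^+(z)$ with the correct polynomial-in-$i$ prefactors is where all the care goes, and is exactly where the assumption $\Delta>0$ is used. By contrast, the rewriting step, the upper bound, and the Gaussian tail estimate are routine.
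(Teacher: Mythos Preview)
Your overall strategy matches the paper's: lower bound $\pi^\pm$ on an interval using the minimum spacing, lower bound the Gaussian probability that $y-\xi$ lands there, and multiply; the upper bound argument is identical. Where the paper is cleaner is in the choice of window: it simply takes $J=[\mu x_i,\mu x_{i+1}]$. On this interval $k(z)=i$ throughout, so your own computations already give $\pi^-(z)\geqslant\tfrac{\mu\Delta}{2n^2}\,i^2(i-1)$ and $\pi^+(z)\geqslant\tfrac{\mu\Delta}{2n^2}\,(n-i)^2(n-i-1)$ simultaneously, with no loss of constants. Since $y\in J$, the interval $y-J$ contains $0$; splitting at $0$ and using evenness plus monotonicity of the Gaussian density gives $\PP(y-\xi\in J)\geqslant\PP\big(0<\xi<\mu(x_{i+1}-x_i)\big)\geqslant\tfrac12-e^{-\mu^2\Delta^2/(2\sigma^2)}$, which is exactly the mechanism behind your $\PP(\xi\in I)\geqslant\PP(0\leqslant\xi\leqslant\mu\Delta)$ observation.

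Two places in your sketch do not close as written. First, positioning a length-$\mu\Delta$ window ``to the left or right of $y$'' does not keep $J\subset[\mu x_i,\mu x_{i+1}]$ in general (for instance when $x_{i+1}-x_i<2\Delta$), and once $J$ spills past an endpoint, $k(z)$ changes and one of the two bounds degrades. Second, ``sharpening'' $\pi^+$ to the factor $(n-i)^2(n-i+1)$ by taking $z\leqslant\mu x_i$ is incompatible with the $\pi^-$ bound, which needs $z>\mu x_i$; on any window where both bounds hold you only get $(n-i)^2(n-i-1)$ for $\pi^+$. Consequently the ``fixed fraction'' and ``absorbing the universal constant into the factor $\tfrac12$'' remarks are not viable: the $\tfrac12$ in the statement is precisely the one from $\sum_{\ell<k}\ell=\tfrac{k(k-1)}{2}$ and has already been spent. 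Taking $J=[\mu x_i,\mu x_{i+1}]$ removes both issues at once and gives the bound with no leftover constant. (The paper's own proof, incidentally, also effectively produces $(n-i-1)$ rather than the stated $(n-i+1)$; this is immaterial downstream, where only $i$ of order $n$ is used.)
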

\begin{proof}
Recall that
\[
 \pi^-(y;\mu,\sigma)=\PP(\mu U<y)^2\E(y-\mu U|  \mu U<y).
\]
\paragraph{Lower bound.} Let $x = \frac{y}{\mu}$. Then, clearly, $y\in[\mu x_i, \mu x_{i+1}]$ is equivalent to $x\in[x_i, x_{i+1}]$. Hence, 
\[
\PP(\mu U < y)^2 = \PP(U<x)^2=\frac{i^2}{n^2}
\]
and
\[
\E[y - \mu U| \mu U < y]=\mu \E[x-U|U<x]=\mu\Big(x-\frac1i\sum_{i'=1}^i x_{i'}\Big).
\]
Therefore, 
\[
 \pi^-(\mu x; \mu, \sigma)=\frac{\mu i^2}{n^2}\Big(x-\frac1{i}\sum_{i'=1}^i x_{i'
}\Big)\geqslant\frac{\mu i}{n^2}\sum_{i'<i}(x_i -x_{i'}),
\]
and, similarly,
\[
 \pi^+(\mu x; \sigma, \mu) = \frac{\mu (n-i)^2}{n^2}\Big(\frac1{n-i}\big(\sum_{i'=i+1}^n x_{i'}\big)-x\Big)\geqslant \frac{\mu(n-i)}{n^2}\sum_{i'>i+1}(x_{i'}-x_{i+1}).
\]
Recall that 
\[
 \pi(\mu x; \mu, \sigma) = \E_{\xi\sim\mathcal{N}(0, \sigma^2)}\min(\pi^-(\mu x-\xi; \mu, \sigma), \pi^+(t,\mu x-\xi; \mu, \sigma)).
\]
Since $\xi$ in the expectation is Gaussian with mean 0, it is symmetric, and we may rewrite $\pi$ as
\[
 \pi(\mu x; \mu,\sigma) = \E_{\xi\sim\mathcal{N}(0, \sigma^2 )}\min(\pi^-(\mu x+\xi; \mu, \sigma), \pi^+(\mu x+\xi;\mu, \sigma)).
\]
Using the previous bounds on $\pi^+$ and $\pi^-$, we obtain
\begin{align*}
 \pi(\mu x;\mu, \sigma)&\geqslant \frac1{\sqrt{2\pi\sigma^2}}\int_{\mu(x_i -x)}^{  \mu(x_{i+1}-x)} \min(\pi^-(\mu x+z; \mu, \sigma), \pi^+(\mu x+z; \mu,\sigma))e^{-\frac{z^2}{2\sigma^2}}dz\\
&\geqslant \frac{\mu}{n^2}\min\Big(i\sum_{i'<i}(x_i -x_{i'}), (n-i)\sum_{i'>i+1}(x_{i'}-x_{i+1})\Big)\\
& \quad \times \PP_{z\sim\mathcal{N}(0, 1)}(\mu(x_i -x)<\sigma z< \mu( x_{i+1}-x)).
\end{align*}
To bound the last term, we use the fact that $0\in(\mu(x_i -x),\mu(x_{i+1}-x))$ and thus
\begin{align*}
\PP(\mu(x_i -x)&<\sigma z<\mu(x_{i+1}-x)) \\
&= \PP_{z\sim\mathcal{N}(0, 1)}(\mu(x_i -x)<\sigma z<0) +\PP_{z\sim\mathcal{N}(0, 1)}(0<\sigma z<  \mu(x_{i+1}-x))\\
&=\PP_{z\sim\mathcal{N}(0, 1)}(0<\sigma z<\mu(x-  x_i) ) +\PP_{z\sim\mathcal{N}(0, 1)}(0<\sigma z<  \mu(x_{i+1}-x))\\
&\geqslant \PP_{z\sim\mathcal{N}(0, 1)}(  \mu(x_{i+1}-x)<\sigma z< \mu(x_{i+1}-x)+\mu(x -  x_i))\\
&\quad+\PP_{z\sim\mathcal{N}(0, 1)}(0<\sigma z<  \mu(x_{i+1}-x)),\\
&=\PP_{z\sim\mathcal{N}(0, 1)}\Big(\sigma z\in (0, \mu(x_{i+1}-x))\cup \big(\mu(x_{i+1}-x), \mu(x_{i+1}- x_i)\big)\Big) \\
&=\PP_{z\sim\mathcal{N}(0, 1)}\big(\sigma z\in (0, \mu(x_{i+1}- x_i)\big)\big).
\end{align*}
Therefore, we have
\begin{align*}
\pi(\mu x;\mu, \sigma)&\geqslant\frac{\mu}{n^2}\min\Big(i\sum_{i'<i}(x_i -x_{i'}), (n-i)\sum_{i'>i+1}(x_{i'}-x_{i+1})\Big)\\
& \quad \times \PP_{z\sim\mathcal{N}(0, 1)}(0<\sigma z<\mu (x_{i+1}-x_i ))\\
&\geqslant \frac{\mu}{n^2}\min\Big(i\sum_{i'<i}(x_i -x_{i'}), (n-i)\sum_{i'>i+1}(x_{i'}-x_{i+1})\Big)\Big(\frac12-e^{-\frac{\mu^2(x_i -x_{i+1})^2}{2\sigma^2 }}\Big).
\end{align*}
In the last inequality, we used a tail bound of the Gaussian distribution for the last inequality, namely, $\PP_{z\sim\cN(0, \sigma^2)}(z\geqslant t)\leqslant e^{-\frac{t^2}{2\sigma^2}}$ \citep[see, for instance,][]{gordon1941values}. To derive the lower bound of the Proposition, note that
\begin{align*}
\sum_{i'<i}(x_i -x_{i'})& \geqslant \sum_{i'=1}^{i-1} (i-i')\Delta=\frac{i(i-1)}2\Delta,\\
\sum_{i'>i}(x_{i'}-x_i) &\geqslant \sum_{i'=i+1}^{n} (i'-i)\Delta=\frac{(n-i)(n-i+1)}2\Delta.
\end{align*}
\paragraph{Upper bound.} Again, let $x = \frac y\mu$. Observe that, for $y\in[\mu x_1, \mu x_n]$, we have $x\in[  x_1  ,   x_n ]$. Therefore,
\begin{align*}
\E[y - \mu U | \mu U < y] = \mu\E[x-U|U<x]\leqslant  \mu  (x_1 -x_n)  .
\end{align*}
So,
\begin{align*}
 \pi^-(y;\mu,\sigma)&\leqslant \mu(x_n-x_1).
\end{align*}
We may also upper bound $ \pi^+(y;\mu,\sigma)$ with the same value. By taking the expectation, we have
\[
\pi(y;\mu,\sigma) \leqslant \mu(x_n-x_1).
\]
\end{proof}

The next result is a key technical component of our analysis. It lower bounds the largest eigenvalue of the neural tangent kernel by $2/m$ times $\textnormal{TV}_{\pi}^{(1)}(s_\theta)$. The proof technique is inspired by \citet[][Lemma 4]{mulayoff2021implicit}. 
\begin{proposition}
\label{prop:top-eigenvalue-second-derivative}
For any $s_\theta \in \mathcal{S}$,
\[
\lambda_{\max}\Big(\frac1n\sum_{i=1}^n\E_{Y\sim\mathcal{N}(\mu x_i,\sigma^2)}[(\nabla_{\theta} s_{\theta}(Y))(\nabla_{\theta} s_{\theta}(Y))^\top]
\geqslant \frac2m\int_\R|s_\theta''(y)| \pi(y;\mu,\sigma)dy.
\]
\end{proposition}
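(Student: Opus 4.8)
The plan is to exhibit a single explicit test direction $v\in\R^{2m}$ in parameter space along which the quadratic form $v^\top\big(\frac1n\sum_i\E[(\nabla_\theta s_\theta(Y))(\nabla_\theta s_\theta(Y))^\top]\big)v$ is at least $\frac2m\|v\|_2^{-2}\cdot\big(\int_\R |s_\theta''(y)|\,\pi(y;\mu,\sigma)\,dy\big)$ — or more precisely, to pick $v$ so that the Rayleigh quotient equals the claimed bound. Since $\lambda_{\max}(M)=\sup_{v\neq 0} v^\top M v/\|v\|_2^2$ for symmetric PSD $M$, it suffices to produce one good $v$. The natural choice, following \citet[][Lemma 4]{mulayoff2021implicit}, is to take $v$ supported on the bias coordinates $b_{1:m}$ (leaving the $w^{(2)}$-coordinates zero), with $v_\ell = c_\ell\, w_\ell^{(2)}$ for carefully chosen signs/weights $c_\ell$ reflecting where neuron $\ell$'s kink sits relative to the weight $\pi$.

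First I would compute $\nabla_\theta s_\theta(y)$ explicitly: $\partial s_\theta/\partial b_\ell = \frac1m w_\ell^{(2)} w_\ell^{(1)} \mathbf{1}\{w_\ell^{(1)}y + b_\ell > 0\}$ and $\partial s_\theta/\partial w_\ell^{(2)} = \frac1m \phi(w_\ell^{(1)}y+b_\ell)$. With $v$ supported on the bias block, $v^\top \nabla_\theta s_\theta(y)$ becomes $\frac1m\sum_\ell v_\ell w_\ell^{(2)} w_\ell^{(1)}\mathbf{1}\{\cdot\}$, a piecewise-constant function of $y$ whose jumps occur exactly at the kink locations $y_\ell = -b_\ell/w_\ell^{(1)}$, which are precisely the atoms of $s_\theta'' = \frac1m\sum_\ell w_\ell^{(2)} w_\ell^{(1)}\delta_{y_\ell}$ (up to the sign bookkeeping from $w_\ell^{(1)}\in\{\pm1\}$). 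Choosing $v_\ell w_\ell^{(2)} w_\ell^{(1)} = w_\ell^{(1)}\,\mathrm{sign}$ of the appropriate local quantity — concretely, matching the sign of the jump of $s_\theta''$ at $y_\ell$ so that the piecewise-constant function $g(y):=v^\top\nabla_\theta s_\theta(y)$ is \emph{monotone-like} against $\pi$ — lets me relate $\int_\R g(y)^2 \pi(y)\,dy$ to $\big(\int_\R |s_\theta''(y)|\pi(y)\,dy\big)^2$ via Cauchy–Schwarz or a direct telescoping/summation-by-parts argument. The key algebraic identity to nail down is that, with $\|v\|_2^2 = \sum_\ell v_\ell^2$ and the outer weights in $[-A,A]$, the ratio $\frac1{\|v\|_2^2}\cdot\frac1n\sum_i\E_{Y}[g(Y)^2]$ telescopes to exactly $\frac2m$ times the weighted TV of $s_\theta'$, where the weight $\pi$ emerges from taking the expectation over $Y\sim\cN(\mu x_i,\sigma^2)$ and averaging over $i$ — this is exactly how $\pi$ was \emph{defined} (as an expectation over Gaussian noise of a $\min$ of one-sided conditional-mean quantities), so the bookkeeping should close.

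The main obstacle I anticipate is the sign/weight selection for $v$ and making the telescoping exact rather than lossy. Unlike a clean monotone regression setting, here $s_\theta''$ is a signed sum of Diracs with arbitrary signs, so a naive choice of $v$ gives cross terms; one must choose $v_\ell$ so that on each interval between consecutive kinks the function $g$ takes a value whose square, integrated against $\pi$, rebuilds $|s_\theta''|$ linearly. I expect this requires ordering the kinks, writing $g$ as a cumulative sum of $\pm$ increments, and recognizing $\int g^2\pi$ as a double sum that, after using the specific structure $\pi(y) = \E_\xi[\min(\pi^+(y-\xi),\pi^-(y-\xi))]$ with $\pi^{\pm}$ the one-sided second-moment-type quantities, collapses to $\frac2m\sum_\ell |w_\ell^{(2)}|\,\pi(y_\ell)\big/\|v\|_2^2$ — i.e., the desired bound, since $\int|s_\theta''|\pi = \frac1m\sum_\ell|w_\ell^{(2)}|\pi(y_\ell)$. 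The factor of $2$ (rather than $1$) is precisely what this $\min$-structure of $\pi$ buys, and verifying that constant is where I'd be most careful. Everything else — differentiability of $s_\theta$ in $b_\ell$, exchanging expectation and the finite sum, PSD-ness of the matrix — is routine.
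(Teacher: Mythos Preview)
Your approach has a genuine gap: restricting the test direction $v$ to the bias block cannot produce the weight $\pi$. With $v$ supported on $b_{1:m}$, the quantity $v^\top\nabla_\theta s_\theta(y)=\tfrac1m\sum_\ell v_\ell\,w_\ell^{(2)}\,\mathbf{1}\{w_\ell^{(1)}y+b_\ell>0\}$ is a piecewise-constant function built from indicators, so after averaging $\tfrac1n\sum_i\E_{Y\sim\cN(\mu x_i,\sigma^2)}[g(Y)^2]$ you can only access activation \emph{probabilities} $\PP(\mu U+\xi\in C_\ell)$. But $\pi^\pm(y)=\PP(\mu U\gtrless y)^2\,\E[\,|\mu U-y|\,\mid\mu U\gtrless y\,]$ carries a conditional \emph{first moment} of $\mu U$, which in the paper's computation comes precisely from the ReLU values $\phi(w_\ell^{(1)}Y+b_\ell)$ sitting in the $w^{(2)}$-gradient, not from the indicators in the $b$-gradient. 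No choice of signs $c_\ell$ or telescoping of a piecewise-constant $g$ will manufacture that first-moment factor, so the bookkeeping you hope will ``close'' cannot close with your ansatz. (Minor aside: your formula $\partial s_\theta/\partial b_\ell=\tfrac1m w_\ell^{(2)} w_\ell^{(1)}\mathbf{1}\{\cdot\}$ has a spurious $w_\ell^{(1)}$; the correct derivative is $\tfrac1m w_\ell^{(2)}\mathbf{1}\{\cdot\}$.)

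The paper's argument is structurally different. Writing $\Phi(\xi)=[\nabla_\theta s_\theta(\xi+\mu x_1),\dots,\nabla_\theta s_\theta(\xi+\mu x_n)]\in\R^{2m\times n}$, it uses the eigenvalue duality $\max_{v\in\mathbb{S}^{2m-1}}\E\|\Phi(\xi)^\top v\|^2=\max_{u\in\mathbb{S}^{n-1}}\E\|\Phi(\xi)u\|^2$ to switch from parameter space to \emph{data-index} space, and then takes $u=\tfrac1{\sqrt n}\mathbf{1}$. With this choice, $\tfrac1{n^2}\|\Phi(\xi)\mathbf{1}\|^2$ is a sum over neurons $\ell$ of two terms, $\big(\tfrac1m\sum_i\phi(w_\ell^{(1)}(\xi+\mu x_i)+b_\ell)\big)^2$ from the $w^{(2)}$-block and $\big(\tfrac1m w_\ell^{(2)}\sum_i I_{i,\ell}\big)^2$ from the $b$-block. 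Applying $a^2+b^2\geqslant 2ab$ neuron-by-neuron produces exactly $|w_\ell^{(2)}|\cdot\PP(\xi+\mu U\in C_\ell)^2\cdot\E[\,|\xi+\mu U-\tau_\ell|\mid\xi+\mu U\in C_\ell\,]$, which is $|w_\ell^{(2)}|\min(\pi^+,\pi^-)(\tau_\ell-\xi)$; the factor $2$ you were worried about is precisely the $2$ in $a^2+b^2\geqslant 2ab$, and the expectation over $\xi$ then yields $\pi$. In short: the proof needs \emph{both} gradient blocks, combined multiplicatively via the AM--GM step, and the right test vector lives in $\R^n$, not in $\R^{2m}$.
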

\begin{proof}
We start by rewriting the matrix on the left-hand side of the inequality. We have
\begin{align*}
&\frac1n\sum_{i=1}^n\E_{Y\sim\mathcal{N}(\mu x_i,\sigma^2)}\big[(\nabla_\theta s_\theta(Y))(\nabla_\theta s_\theta(Y))^\top\big]\\
& \quad =\frac1n\sum_{i=1}^n \E_{
\xi\sim\mathcal{N}(0, \sigma^2 )}\big[(\nabla_\theta s_\theta(\xi+ \mu x_i))(\nabla_\theta s_\theta(\xi+ \mu x_i))^\top\big].
\end{align*}
Let $\Phi(\xi)=[\nabla_\theta s_\theta(\xi+  \mu x_1), \dots, \nabla_\theta s_\theta(\xi+  \mu x_n)] \in \R^{3m \times n}$, and notice that the left-hand side is $\frac1n\E\Phi(\xi)\Phi(\xi)^\top$. Denoting $\mathbb{S}^{d-1}$ the sphere of $\R^d$, we may then deduce that 
\begin{align*}
\lambda_{\max}\Big(\frac1n\sum_{i=1}^n\E_{Y\sim\mathcal{N}(\mu x_i,\sigma^2)}\big[(\nabla_\theta s_\theta(Y))(\nabla_\theta s_\theta(Y))^\top\big]\Big)&= \frac1n\max_{v\in\mathbb{S}^{2m-1}}\E _{\xi\sim\mathcal{N}(0,\sigma^2 )}[v^\top\Phi(\xi)\Phi(\xi)^\top v],\\
&=\frac1n\max_{v\in\mathbb{S}^{2m-1}}\E _{\xi\sim\mathcal{N}(0,\sigma^2)}[\|\Phi(\xi)^\top v\|^2],\\
&=\frac1n\max_{u\in\mathbb{S}^{n-1}}\E _{\xi\sim\mathcal{N}(0, \sigma^2)}[\|\Phi(\xi)u\|^2],\\
&\geqslant \frac1n \E _{\xi\sim\mathcal{N}(0, \sigma^2 )}[\frac1n\|\Phi(\xi)\mathbf{1}\|^2],
\end{align*}
where $\mathbf{1}=(1, 1, \dots, 1)^\top\in\R^n$, so that $\frac{1}{\sqrt{n}}\mathbf{1}$ is a unit vector. To this aim, let us lower bound $\frac1{n^2}\|\Phi(\xi)\mathbf{1}\|^2$ for a fixed $\xi \in \R^n$, and then take the expectation with respect to $\xi$. First, let $I_{i, \ell} = \delta(w_\ell^{(1)}(\xi+\mu x_i)+b_\ell>0)$, where $\delta(\cdot)$ equals 1 if the condition is satisfied otherwise equals 0, and $\mathbb{I}_i\in\R^m$ the vector whose $\ell$-th entry is 
$I_{i, \ell}$. We may then calculate the gradient of $s_\theta$ as follows:
\[
\nabla_\theta s_\theta(y)
=
\left(\begin{array}{c}
\nabla_{\mathbf{w}^{(2)}}s_\theta(y) \\ \nabla_{\mathbf{b}}s_\theta(y)
\end{array}\right)
=
\left(\begin{array}{c} \frac1m\big(y\mathbf{w}^{(1)}+\mathbf{b}\big)\odot\mathbb{I}_i \\ 
\frac1m\mathbf{w}^{(2)}\odot\mathbb{I}_i
\end{array}\right).
\]
Then, by the inequality $a^2+b^2 \geqslant 2 ab$,
\begin{align*}
\frac1{n^2}\|\Phi(\xi)\mathbf{1}\|^2
&=\frac1{m^2n^2}\sum_{\ell=1}^m\Big[\Big(\sum_{i=1}^n\phi\big(w_\ell^{(1)}(\xi+\mu x_i)+b_\ell\big)\Big)^2 + \Big(\sum_{i=1}^n I_{i, \ell}w_\ell^{(2)}\Big)^2\Big]\\
&\geqslant \frac2{m^2n^2}\sum_{\ell=1}^m \big|w_\ell^{(2)} \big|\times\Big|\sum_{i=1}^n\phi\big(w_\ell^{(1)}(\xi+\mu x_i)+b_\ell\big)\Big|\times\Big(\sum_{i=1}^nI_{i, \ell}\Big).
\end{align*}
Define $C_\ell=\{y \in \R, w_\ell^{(1)}y+b_\ell>0\}$ and $n_\ell=\sum_{i=1}^n I_{i, \ell} = |C_\ell|
$. 
Recall that $U$ denotes a random variable drawn uniformly from the training data $\{x_j\}_{1\leqslant j\leqslant n}$.
Then, we have
\begin{align*}
\frac1{n^2}\|\Phi(\xi)&\mathbf{1}\|^2\geqslant\frac{2}{m^2}\sum_{\ell=1}^m \Big(\frac{n_\ell}{n}\Big)^2|w_\ell^{(2)}|\times\Big|\frac1{n_\ell}\sum_{(\xi+\mu U)\in C_\ell}w_\ell^{(1)}(\xi+\mu U)+b_\ell\Big|\\
&=\frac{2}{m^2}\sum_{\ell=1}^m \PP((\xi+\mu U)\in C_\ell)^2\times |w_\ell^{(2)}|\times \Big|\E\Big[w_\ell^{(1)}(\xi+\mu U)+b_\ell\mid (\xi+\mu U)\in C_\ell\Big]\Big| ,
\end{align*}
where the probability and expectation are taken with respect to $U$.
Next, define $\tau_\ell = - b_\ell / w_\ell^{(1)}$.
Since $|w_\ell^{(1)}|=1$, we may then rewrite
\begin{align*}
\E\big[w_\ell^{(1)}(\xi+\mu U)+b_\ell\big|(\xi+\mu U)\in C_\ell\big]
= \E\big[|\xi+\mu U-\tau_\ell| \big| (\xi+\mu U)\in C_\ell\big].
\end{align*}
Notice that the set $C_\ell$ is an interval with one end at $\pm\infty$ and another at $\tau_\ell$, depending on the sign of $w_\ell^{(1)}$. If $w_\ell^{(1)} = 1$, we have $C_\ell = (-b_\ell,\infty)=(\tau_\ell,\infty)$, and
\begin{align*}
\PP((\xi+\mu U)\in C_\ell)^2 \big|\E\big[\xi+\mu U-\tau_\ell &\big| \xi+\mu U)\in C_\ell\big]\big| \\
&= \PP(\mu U > \tau_\ell-\xi)^2\times \E\big[\mu U-\tau_\ell+\xi\big| \mu U > \tau_\ell-\xi\big] \\
&= \pi^+(\tau_\ell-\xi) ,
\end{align*}
where we recall that $\pi^+$ is defined in Section \ref{sec:regularity-empirical-score}. Likewise, if $w_\ell^{(1)} = -1$, we obtain
\[
\PP((\xi+\mu U)\in C_\ell)^2 \big|\E\big[\xi+\mu U-\tau_\ell \big| \xi+\mu U)\in C_\ell\big]\big| = \pi^-(\tau_\ell-\xi) .
\]
All in all, we get, for all $\ell \in [1, m]$, 
\[
\PP((\xi+\mu U)\in C_\ell)^2\times \big|\E\big[(\xi+\mu U)-\tau_\ell\big| (\xi+\mu U)\in C_\ell\big]\big|\geqslant \min\{\pi^+(\tau_\ell-\xi; \mu,\sigma), \pi^-(\tau_\ell-\xi;\mu, \sigma)\}.
\]
Thus, since $s_\theta''(y) = \frac1m\sum_{\ell=1}^mw_\ell^{(1)}w_\ell^{(2)}\delta(y=\tau_\ell)$, we are led to
\begin{align*}
\frac1{n^2}\|\Phi(\xi)\mathbf{1}\|^2&\geqslant \frac2{m^2}\sum_{\ell=1}^m|w_\ell^{(2)}|\min\{\pi^+(\tau_\ell-\xi; \mu, \sigma), \pi^-(\tau_\ell-\xi;\mu,\sigma)\},\\
&\geqslant \frac2{m}\int_\R|s_\theta''(y)|\min\{\pi^+(y-\xi; \mu,\sigma), \pi^-(y-\xi; \mu, \sigma)\}dy.
\end{align*}
We may then take the expectation value on both side and conclude that
\[
\lambda_{\max}\Big(\frac1n\sum_{i=1}^n\E_{Y\sim\mathcal{N}(\mu x_i, \sigma^2)}\big[(\nabla_\theta s_\theta(Y))(\nabla_\theta s_\theta(Y))^\top\big]\Big) \geqslant \frac2m\int_\R |s_\theta''(y)|\pi(y; \mu, \sigma)dy.
\]
\end{proof}

\section{Proofs of the results of the main text} \label{apx:proofs}

\subsection{Proof of Proposition \ref{prop:variance-bound}}
We start by proving that
\[
|\E[W(y; \mu, \sigma)]- x_i| \leqslant n\Delta e^{-\frac{\mu^2\Delta^2}{4\sigma^2}},
\]
when $|y - \mu x_i|\leqslant \frac\mu4\Delta$. Let $x = \frac{y}{\mu}$, then $|y-\mu x_i| \leqslant\frac{\mu}{4}\Delta$ implies that $|x-x_i|\leqslant\frac\Delta4$. We show that
\begin{equation}
\label{eq:local-prop-var-bound}
\E[W(\mu x; \mu,\sigma)] - x_i\leqslant n\Delta e^{-\frac{\mu^2\Delta^2}{4\sigma^2}},
\end{equation}
and the other side can be deduced similarly. Observe that
\begin{align*}
\E[W(\mu x; \mu,\sigma)] - x_i  &= \sum_{i'=1}^n (x_{i'} - x_i )\alpha_{i'}( \mu x;\mu, \sigma),\\
&\leqslant \frac{\sum_{i'=i+1}^n(x_{i'}-x_i )e^{-\frac{\mu^2(x_{i'}-x)^2}{2\sigma^2}}}{\sum_{i'=1}^ne^{-\frac{\mu^2(x_{i'}-x)^2}{2\sigma^2}}},\\
&=\frac{\sum_{i'=i+1}^n(x_{i'}-x_i )e^{-\frac{\mu^2\big((x_{i'}-x)^2-(x_i -x)^2\big)}{2\sigma^2}}}{\sum_{i'=1}^n e^{-\frac{  \mu^2\big((x_{i'}-x)^2-(x_i -x)^2\big)}{2\sigma^2}}},\\
&\leqslant \sum_{i'=i+1}^{n} (x_{i'}-x_i )e^{-\frac{\mu^2(x_{i'}-x_i )(x_{i'}+x_i -2x)}{2\sigma^2}},
\end{align*}
where, in the last inequality, we used the fact that when $i'=i$, $e^{-  
\frac{\mu^2\big((x_{i'}-x)^2-(x_i -x)^2\big)}{2\sigma^2}}=1$, and so the denominator is larger than 1. Next, for $i'>i$, with the condition 
\[
-\frac{x_{i'}-x_i }{4}\leqslant-\frac{\Delta}{4}\leqslant x_i -x\leqslant0, 
\]
which implies that $x_{i'}-x\geqslant \frac{3}{4}(x_{i'}-x_i )$, we have $x_{i'}+x_i -2x\geqslant \frac12(x_{i'}-x_i )\geqslant 0$.
Therefore,
\begin{align}\label{eq:local-bound}
\E[W( \mu x; \mu,\sigma)] -x_i  &\leqslant \sum_{i'=i+1}^{n}(x_{i'}-x_i )e^{- \frac{\mu^2 (x_i -x_{i'})^2}{4\sigma^2}}\\
&\leqslant n\Delta e^{-\frac{\mu^2\Delta^2}{4\sigma^2}}, \nonumber
\end{align}
where we apply Lemma \ref{lem:local-function-bound} with $k=1$ and $A=\frac{\mu^2}{2\sigma^2}$ the function $x\mapsto xe^{-\frac{\mu^2x^2}{4\sigma^2}}$ is strictly decreasing on $[\sqrt2\frac{\sigma}{\mu},\infty)$, and $x_{i'} - x_{i} \geqslant \Delta \geqslant 2\frac{\sigma}{\mu} \geqslant \sqrt2\frac{\sigma}{\mu}$. Hence, we deduce \eqref{eq:local-prop-var-bound}. We now turn to the bounds of the variance of $W$.
\paragraph{Lower bounding $\V[W(m_i; \mu,\sigma)]$.} We start by observing that
\[
\alpha_i(m_i;\mu,\sigma) \geqslant \frac1n \quad \mbox{and } \alpha_{i+1}(m_i;\mu,\sigma) \geqslant \frac1n,
\]
since $\alpha_i(m_i;\mu,\sigma) = \frac{e^{-\frac{\mu^2(m_i-  x_i )^2}{2\sigma^2} }}{\sum_{j=1}^n e^{-\frac{\mu^2(m_i-  x_i )^2}{2\sigma^2}}}$ and, for $1\leqslant j\leqslant n$, 
\[
e^{-\frac{\mu^2(m_i-  x_j)^2}{2\sigma^2}}\leqslant e^{-\frac{\mu^2(m_i-  x_i )^2}{2\sigma^2}}.
\]
We may then lower bound $\V W(m_i)$ as follows:
\begin{align*}
\V[W(m_i;&\mu,\sigma)]\\
&\geqslant \alpha_i(m_i;\mu,\sigma)(x_i -\E [W(m_i;\mu,\sigma)])^2+\alpha_{i+1}(m_i;\mu,\sigma)(x_{i+1}-\E[W(m_i;\mu,\sigma)])^2\\
&\geqslant\frac1n \big((x_i -\E[W(m_i;\mu,\sigma)])^2+(x_{i+1}-\E[W(m_i;\mu,\sigma)])^2\big)\\
&\geqslant \frac1{2n}(x_i -x_{i+1})^2,
\end{align*}
where the last inequality is derived by applying the Cauchy-Schwarz inequality.
\paragraph{Upper bounding $\V[W(y;\mu,\sigma)]$.} Let $x = \frac{y}{\mu}$. The condition $|y - \mu x_i|\leqslant \frac{\mu\Delta}{4}$ can be rewritten in terms of $x$ by $|x - x_i|\leqslant \frac\Delta4$. For $1\leqslant i'\leqslant n$, we have the following bound, which is a consequence of the Cauchy-Schwarz inequality and the previous paragraph:
\[
(x_{i'}-\E[W(\mu x;\mu,\sigma)])^2\leqslant 2(x_{i'}-x_i )^2+2(\E[W( \mu x;\mu,\sigma)]-x_i )^2\leqslant 2(x_{i'}-x_i )^2+2n^2\Delta^2e^{-\frac{\mu^2\Delta^2}{2\sigma^2}}.
\]
Thus,
\begin{align*}
\V[W(\mu x;& \mu,\sigma)]\\
&=\alpha_i(\mu x;\mu,\sigma)(x_i -\E[W( \mu x;\mu,\sigma)])^2 + \sum_{i'\neq i}\alpha_{i'}(  \mu x;\mu,\sigma)(x_{i'}-\E[ W(\mu x;\mu,\sigma)])^2\\
&\leqslant \alpha_i(\mu x;\mu,\sigma)n^2\Delta^2e^{-\frac{\mu^2\Delta^2}{2\sigma^2}}+2\sum_{i'\neq i} \alpha_{i'}(\mu x;\mu,\sigma)n^2\Delta^2e^{-\frac{\mu^2\Delta^2}{2\sigma^2}}\\
& \quad +2\sum_{i'\neq i}(x_{i'}-x_i )^2\alpha_{i'}(\mu x;\mu,\sigma)\\
&\leqslant 2n^2\Delta^2e^{-\frac{\mu^2\Delta^2}{2\sigma^2}}+2\sum_{i'\neq i}(x_{i'}-x_i )^2e^{-\frac{\mu^2(x_{i'}-x_i )^2}{4\sigma^2}}
\end{align*}
where we apply the same argument as in \eqref{eq:local-bound} to bound the last term. Next, applying Lemma~\ref{lem:local-function-bound} with $k=2$ and $A=\frac{\mu^2}{2\sigma^2 }$, we have that $x\mapsto x^2 e^{-\frac{\mu^2x^2}{4\sigma^2}}$ is strictly decreasing on $[2\frac{\sigma}{\mu},\infty)$. Since $x_{i'} - x_{i} \geqslant \Delta \geqslant 2\frac{\sigma}{\mu}$, we obtain
\begin{align*}
\V[W(\mu x; \mu,\sigma)] &\leqslant 2n^2\Delta^2e^{-\frac{\mu^2\Delta^2}{2\sigma^2}}+2n\Delta^2e^{-\frac{\mu^2\Delta^2}{4\sigma^2}} \leqslant 4n^2\Delta^2e^{-\frac{\mu^2\Delta^2}{4\sigma^2}}.
\end{align*}
This concludes the proof.
\subsection{Proof of Corollary \ref{cor:regularity-empirical-optimal-score}}
The first inequality of the Corollary unfolds from Proposition \ref{prop:variance-bound} and equation \eqref{eq:sstarprimer}.
By applying Proposition~\ref{prop:variance-bound} we have that 
\begin{equation}
\label{eq:bound-optimal-local-linearity}
\big|s^\star( y;\mu,\sigma) - \frac{1}{\sigma^2}(\mu x_i -y)\big| \leqslant \frac{n\mu\Delta}{\sigma^2} e^{-\frac{\mu^2\Delta^2}{4\sigma^2}}
\end{equation}
We now focus on upper bounding the loss of the empirical optimal score. We have
\begin{align*}
&\cR_n(s^\star) = \frac1n\sum_{i=1}^n \E_{Y\sim\cN(\mu x_i, \sigma^2)}[(s^\star(Y; \mu, \sigma)+\frac1{\sigma^2}(Y-\mu x_i))^2]\\
&=\frac1n\sum_{i=1}^n\frac{1}{\sqrt{2\pi\sigma^2}}\big(\!\int_{\mu x_i-\frac{\mu}{4}\Delta}^{\mu x_i+\frac{\mu}{4}\Delta}+\int_{-\infty}^{\mu x_i-\frac{\mu}{4}\Delta}+\int_{\mu x_i+\frac{\mu}{4}\Delta}^\infty\!\big)(s^\star(y;\mu,\sigma)+\frac1{\sigma^2}(y-\mu x_i))^2e^{-\frac{(y-\mu x_i)^2}{2\sigma^2}}dy \\
&\leqslant \frac{1}{n}\sum_{i=1}^n \frac{n^2\mu^2\Delta^2}{\sigma^4}e^{-\frac{\mu^2\Delta^2}{2\sigma^2}}+\frac1n\sum_{i=1}^n \frac{\mu^2}{\sigma^4}(x_n-x_1)^2\PP_{Y\sim\cN(\mu x_i, \sigma^2)}(|Y-\mu x_i|>\frac{\mu}{4}\Delta),
\end{align*}
where for the first term in the last inequality we use \eqref{eq:bound-optimal-local-linearity} and for the second term, we use the fact that 
\[
|s^\star(y;\mu,\sigma)+\frac{1}{\sigma^2}(y-\mu x_i)| = \frac{\mu}{\sigma^2}|x_i-\E[W(y; \mu, \sigma)]|.
\]
Since both $W(y;\mu,\sigma)$ and $x_i$ only take value between $[x_1, x_n]$, we have that
\[
|s^\star(y;\mu,\sigma)+\frac{1}{\sigma^2}(y-\mu x_i)|\leqslant\frac{\mu}{\sigma^2}(x_n-x_1).
\]
We then obtain by applying a tail bound of the Gaussian distribution \citep{gordon1941values} that
\[
\cR_n(s^\star) \leqslant \frac{n^2\mu^2\Delta^2}{\sigma^4}e^{-\frac{\mu^2\Delta^2}{2\sigma^2}} + \frac{2 \mu^2 (x_n-x_1)^2}{\sigma^4} e^{-\frac{\mu^2 \Delta^2}{32 \sigma^2}} .
\]
Observe that $n \Delta \leqslant 2(x_n - x_1)$. Thus we get
\[
\cR_n(s^\star) \leqslant \frac{4 \mu^2 (x_n-x_1)^2}{\sigma^4} e^{-\frac{\mu^2 \Delta^2}{32 \sigma^2}} ,
\]
which concludes the proof.

\subsection{Proof of Lemma \ref{lem:twice-differentiability}}
Our approach to proving that $\cR_n(\theta)$ is twice differentiable involves explicitly computing its Hessian, as the resulting expression will be instrumental in the subsequent proof. First note, since $\phi$ is differentiable almost everywhere, that by the dominated convergence theorem,
\[
\nabla_\theta \cR_n(\theta) = \frac2n\sum_{i=1}^n\E_{Y\sim\mathcal{N}(\mu x_i, \sigma^2)}\big[\big(s_\theta(Y)+\frac1{\sigma^2}(Y-\mu x_i)\big)\nabla_\theta s_\theta(Y)\big],
\]
with
\begin{equation}    \label{eq:tech-proof-lemma5}
\nabla_\theta s_\theta(y)
=
\left(\begin{array}{c}
\nabla_{\mathbf{w}^{(2)}}s_\theta(y) \\
\nabla_{\mathbf{b}}s_\theta(y)
\end{array}\right)
=
\left(\begin{array}{c} 
\frac1m \phi \big(y\mathbf{w}^{(1)}+\mathbf{b}\big) \\ 
\frac1m\mathbf{w}^{(2)}\odot \mathbf{1}_{y\mathbf{w}^{(1)}+\mathbf{b} \geqslant 0}
\end{array} \right).    
\end{equation}
In the expression above and throughout the remainder of this proof, bold symbols represent vectors in $\R^m$, where each entry corresponds to a neuron. For example, $\mathbf{b} = (b_1, \dots, b_m)$. We will also make use of the notation $\diag(v)$ to denote the square matrix whose diagonal is the vector $v$.
Next, using again that $\phi$ is differentiable almost everywhere to differentiate a second time the first part of the expression, we obtain
\begin{align*}
\nabla_\theta^2\cR_n(\theta) 
&= \frac{2}{n}\sum_{i=1}^n \E_{Y\sim\mathcal{N}(\mu x_i, \sigma^2)}\big[(\nabla_\theta s_\theta(Y))(\nabla_\theta s_\theta(Y))^\top\big]\\ 
&\quad+  \frac2n\sum_{i=1}^n \nabla_\nu\E_{Y\sim\mathcal{N}(\mu x_i, \sigma^2)}\big[\big(s_\theta(Y)+\frac1{\sigma^2 }(Y -  \mu x_i)\big)(\nabla_\nu s_\nu(Y))^\top\big] \Big|_{\nu=\theta} .
\end{align*}
The notation in the second sum means that we are only considering the derivative with respect to the parameters appearing in the gradient term $\nabla_\nu s_\nu(Y)$, and not in the loss term $s_\theta(Y)+\frac1{\sigma^2 }(Y -  \mu x_i)$.
Fixing $i \in \{1, \dots, n\}$, and denoting by $M \in \R^{2m \times 2m}$ the matrix inside the second sum, we observe that $M$ is a block matrix, each block corresponding to differentiating with respect to either $\mathbf{w}$ or $\mathbf{b}$. More precisely,
\[
M
=
\left(
\begin{array}{cc}
M_{\mathbf{ww}} & M_{\mathbf{wb}} \\ 
M_{\mathbf{bw}} & M_{\mathbf{bb}}
\end{array}
\right),
\]
where three blocks $M_{\mathbf{ww}}$, $M_{\mathbf{wb}}$, and $M_{\mathbf{bw}}$ are straightforward to compute, i.e.,
\begin{align*}
M_{\mathbf{ww}} &= \textbf{0}_{m \times m}, \\
M_{\mathbf{wb}} &= M_{\mathbf{bw}} = \frac{1}{m} \diag \E_{Y\sim\mathcal{N}(\mu x_i, \sigma^2)}\big[\big(s_\theta(Y)+\frac1{\sigma^2 }(Y -  \mu x_i)\big) \mathbf{1}_{Y\mathbf{w}^{(1)}+\mathbf{b} \geqslant 0}\big], 
\end{align*}
where $\textbf{0}_{m \times m}$ is the null matrix in $\R^{m \times m}$. Computing the last block $M_{\mathbf{bb}}$ is more delicate, because of the term $\mathbf{1}_{y\mathbf{w}^{(1)}+\mathbf{b} \geqslant 0}$ appearing in the gradient \eqref{eq:tech-proof-lemma5} of $\cR_n$ with respect to $\mathbf{b}$. Observe that $M_{\mathbf{bb}}$ is a diagonal matrix, whose $\ell$-th diagonal element is
\begin{align*}
(M_{\mathbf{bb}})_{\ell \ell} 
&= \partial_{b} \E_{Y\sim\mathcal{N}(\mu x_i, \sigma^2)}\big[\big(s_\theta(Y)+\frac1{\sigma^2 }(Y -  \mu x_i)\big) \frac{1}{m} w^{(2)}_\ell \mathbf{1}_{Y w^{(1)}_\ell + b \geqslant 0} \big] \Big|_{b=b_\ell}.
\end{align*}
Here, this notation indicates once again that we take the derivative only with respect to the term~$b$ appearing in the indicator function, and not with respect to~$s_\theta(Y)$. To compute this quantity, we consider two cases based on the value of $w^{(1)}_\ell = \pm 1$. If $w^{(1)}_\ell = 1$, we have 
\begin{align*}
(M_{\mathbf{bb}})_{\ell \ell} 
&= \frac{1}{m} w^{(2)}_\ell \Big(\partial_{b} \int_{-b}^\infty \big(s_\theta(y)+\frac1{\sigma^2 }(y -  \mu x_i)\big) \frac{1}{\sqrt{2\pi} \sigma} e^{-\frac{(y-\mu x_i)^2}{2\sigma^2}} dy \Big) \Big|_{b=b_\ell} \\
&= - \frac{1}{\sqrt{2\pi} m \sigma} w^{(2)}_\ell \big(s_\theta(-b_\ell)+\frac1{\sigma^2 }(- b_\ell -  \mu x_i)\big) e^{-\frac{(-b_\ell-\mu x_i)^2}{2\sigma^2}} .
\end{align*}
A similar computation shows that, if $w^{(1)}_\ell = -1$,
\[
(M_{\mathbf{bb}})_{\ell \ell} = \frac{1}{\sqrt{2\pi} m \sigma} w^{(2)}_\ell \big(s_\theta(b_\ell)+\frac1{\sigma^2 }(b_\ell -  \mu x_i)\big) e^{-\frac{(b_\ell-\mu x_i)^2}{2\sigma^2}} .
\]
Letting $\tau_\ell = -b_\ell / w^{(1)}_\ell$, we can summarize both cases in a single formula:
\[
(M_{\mathbf{bb}})_{\ell \ell} = - \frac{1}{\sqrt{2\pi} m \sigma} w^{(2)}_\ell w^{(1)}_\ell \big(s_\theta(\tau_\ell)+\frac1{\sigma^2 }(\tau_\ell -  \mu x_i)\big) e^{-\frac{(\tau_\ell-\mu x_i)^2}{2\sigma^2}} .
\]
All in all, we are led to
\begin{align}   
\nabla_\theta^2\cR_n(\theta) 
&= \frac{2}{n}\sum_{i=1}^n \E_{Y\sim\mathcal{N}(\mu x_i, \sigma^2)}\big[(\nabla_\theta s_\theta(Y))(\nabla_\theta s_\theta(Y))^\top\big] \nonumber \\ 
&\quad+  \frac{2}{mn} \sum_{i=1}^n \E_{Y\sim\mathcal{N}(\mu x_i, \sigma^2)}\big[\big(s_\theta(Y)+\frac1{\sigma^2 }(Y -  \mu x_i)\big)H(Y)\big] \nonumber  \\
&\quad- \frac{\sqrt{2}}{\sqrt{\pi} m n \sigma} \sum_{i=1}^n \diag (\mathbf{0}_{m}, \mathbf{w^{(2)}} \mathbf{w^{(1)}} \big(s_\theta(\boldsymbol{\tau})+\frac1{\sigma^2 }(\boldsymbol{\tau} -  \mu x_i)\big) e^{-\frac{(\boldsymbol{\tau} -\mu x_i)^2}{2\sigma^2}}) , \label{eq:formula-hessian}
\end{align}
where $\mathbf{0}_{m}$ denotes the null vector in $\R^m$ and
\begin{equation}    \label{eq:formula-H}
H(y) = \left(
\begin{array}{cc}
\mathbf{0}_{m \times m} & \diag \mathbf{1}_{y\mathbf{w}^{(1)}+\mathbf{b} \geqslant 0} \\ 
\diag \mathbf{1}_{y\mathbf{w}^{(1)}+\mathbf{b} \geqslant 0} & \mathbf{0}_{m \times m}
\end{array}
\right) .    
\end{equation}
This concludes the proof.

\subsection{Proof of Proposition \ref{prop:second-derivative-upper-bound}}
We start from the formula for the Hessian of the loss provided by \eqref{eq:formula-hessian}. Let $D$ be the diagonal matrix in the third term of the Hessian, and let $v$ be a unit eigenvector of the first term  with respect to its largest eigenvalue. Recalling that, for any matrix $M$, $\lambda_{\max}(M) \geqslant v^\top M v$ with equality if $v$ is an eigenvector of $M$ with eigenvalue $\lambda_{\max}(M)$, we then have
\begin{align*}
\lambda_{\max}(\nabla_\theta^2\cR_n(\theta))&\geqslant v^\top\nabla_\theta^2\cR_n(\theta)v\\
&=\lambda_{\max}\Big(\frac{2}{mn}\sum_{i=1}^n \E_{Y\sim\mathcal{N}(\mu x_i, \sigma^2)}\Big[(\nabla_\theta s_\theta(Y))(\nabla_\theta s_\theta(Y))^\top\Big]\Big)\\
&\qquad+\frac{2}{mn}\sum_{i=1}^n\E_{Y\sim\mathcal{N}(\mu x_i, \sigma^2)}\Big[\big(s_\theta(Y)+\frac1{\sigma^2}(Y-\mu x_i)\big)v^\top H(Y)v\Big]\\
&\qquad +v^\top D v.
\end{align*}
The first term is lower bounded using Proposition \ref{prop:top-eigenvalue-second-derivative}. Thus, rearranging the terms, we obtain
\begin{align}  \label{eq:tech-proof-6}
\begin{split}
&\lambda_{\max}(\nabla_\theta^2\cR_n(\theta))+\Big|\frac{2}{mn}\sum_{i=1}^n\E_{Y\sim\mathcal{N}(\mu x_i, \sigma^2)}\Big[\big(s_\theta(Y)+\frac1{\sigma^2}(Y-\mu x_i)\big)v^\top H(Y)v\Big]\Big| + |v^\top D v| \\
&\quad\geqslant \frac4m \textnormal{TV}_{\pi}^{(1)}(s_\theta) .    
\end{split}
\end{align}
We now bound the second and third term on the left-hand side of the inequality above. 
As for the second term, using the Cauchy-Schwarz inequality twice,
\begin{align*}
&\Big|\frac{2}{mn}\sum_{i=1}^n\E_{Y\sim\mathcal{N}(\mu x_i, \sigma^2)}\Big[\big(s_\theta(Y)+\frac1{\sigma^2}(Y-\mu x_i)\big)v^\top H(Y)v\Big]\Big| \\
&\leqslant\frac{2}{mn}\sum_{i=1}^n\sqrt{\E_{Y\sim\mathcal{N}(\mu x_i, \sigma^2)}\Big[\big(s_\theta(Y)+\frac1{\sigma^2}(Y-\mu x_i)\big)^2\Big]}\sqrt{\E_{Y\sim\mathcal{N}(\mu x_i, \sigma^2)}\big(v^\top H(Y)v\big)^2} \\
&\leqslant \frac{2}{m}\sqrt{\frac1n\sum_{i=1}^n\E_{Y\sim\mathcal{N}(\mu x_i, \sigma^2)}\Big[\big(s_\theta(Y)+\frac1{\sigma^2}(Y-\mu x_i)\big)^2\Big]} \sqrt{\frac1n\sum_{i=1}^n\E_{Y\sim\mathcal{N}(\mu x_i, \sigma^2)}\big(v^\top H(Y) v\big)^2}  \\
&\leqslant\frac{2}{m}\sqrt{\cR_n(\theta)}\sqrt{\frac1n\sum_{i=1}^n\E_{Y\sim\mathcal{N}(\mu x_i, \sigma^2)}\big[\lambda_{\max}(H(Y))^2\big]}. 
\end{align*}
By inspecting formula \eqref{eq:formula-H} for $H(y)$, we easily see that, for any $y \in \R$, $\lambda_{\max}(H(y)) \leqslant 1$. So,
\[
\Big|\frac{2}{mn}\sum_{i=1}^n\E_{Y\sim\mathcal{N}(\mu x_i, \sigma^2)}\Big[\big(s_\theta(Y)+\frac1{\sigma^2}(Y-\mu x_i)\big)v^\top H(Y)v\Big]\Big| \leqslant \frac{2}{m} \sqrt{\cR_n(\theta)}.
\]
We now proceed to bound the term $|v^\top D v|$ in \eqref{eq:tech-proof-6}, where we recall that $\tau_\ell = -b_\ell / w^{(1)}_\ell$. Since $D$ is a diagonal matrix, we have
\begin{align*}
|v^\top D v| &\leqslant \max_{1 \leqslant \ell \leqslant m} |D_{\ell \ell}| \nonumber\\
&= \max_{1 \leqslant \ell \leqslant m} \frac{\sqrt{2}}{\sqrt{\pi} m n \sigma} \Big| \sum_{i=1}^n w^{(2)}_\ell w^{(1)}_\ell \big(s_\theta(\tau_\ell)+\frac1{\sigma^2 }(\tau_\ell -  \mu x_i)\big) e^{-\frac{(\tau_\ell -\mu x_i)^2}{2\sigma^2}} \Big| \nonumber\\
&\leqslant \max_{1 \leqslant \ell \leqslant m} \frac{\sqrt{2}}{\sqrt{\pi} m n \sigma} \sum_{i=1}^n |w^{(2)}_\ell|  \times \big|s_\theta(\tau_\ell)+\frac1{\sigma^2 }(\tau_\ell -  \mu x_i)\big| e^{-\frac{(\tau_\ell -\mu x_i)^2}{2\sigma^2}} \nonumber\\
&\leqslant \max_{1 \leqslant \ell \leqslant m, 1 \leqslant i \leqslant n} \frac{\sqrt{2} A}{\sqrt{\pi} m \sigma}  \big|s_\theta(\tau_\ell)+\frac1{\sigma^2 }(\tau_\ell -  \mu x_i)\big| e^{-\frac{(\tau_\ell -\mu x_i)^2}{2\sigma^2}} . 
\end{align*}
Thus, so far, we have proved that
\begin{align}   \label{eq:tech-proof-prop6-2}
\begin{split}
\frac4m \textnormal{TV}_{\pi}^{(1)}(s_\theta) &\leqslant \lambda_{\max}(\nabla_\theta^2\cR_n(\theta))+\frac{2}m \sqrt{\cR_n(\theta)} \\
&\quad + \frac{ \sqrt{2} A}{\sqrt{\pi} m \sigma} \max_{1 \leqslant \ell \leqslant m, 1 \leqslant i \leqslant n} \big|s_\theta(\tau_\ell)+\frac1{\sigma^2 }(\tau_\ell -  \mu x_i)\big| e^{-\frac{(\tau_\ell -\mu x_i)^2}{2\sigma^2}} .    
\end{split}
\end{align}
Now, let $f_\theta(y) = s_\theta(y)+\frac1{\sigma^2 }(y -  \mu x_i)$. Our proof strategy consists in deriving a bound on $|f_\theta(\tau_\ell)| e^{-\frac{(\tau_\ell - \mu x_i)^2}{2 \sigma^2}}$ depending on the value of the risk $\mathcal{R}_n(\theta)$, and valid for all $\ell \in \{1, \dots, m\}$ and all $i \in \{1, \dots, n\}$. To this aim, first note that %
\begin{align*}
n \mathcal{R}_n(\theta) \geqslant \mathbb{E}_{Y \sim \cN(\mu xi, \sigma^2)} \Big[\big(s_\theta(Y)+\frac1{\sigma^2}(Y-\mu x_i)\big)^2\Big] = \mathbb{E}_{Y \sim \cN(\mu xi, \sigma^2)} \big[f_\theta(Y)^2\big].
\end{align*}
We observe that $f_\theta(y)$ is Lipschitz continuous with Lipschitz constant at most $A + \frac{1}{\sigma^2} \leqslant 2A$, where the inequality holds since we assumed that $A \geqslant C_n / \sigma^6 \geqslant 1/\sigma^2$. Thus, for $y \in \R$,
\[
|f_\theta(y)| \geqslant |f_\theta(\tau_\ell)| - 2A |y - \tau_\ell| .
\]
So, if $|y - \tau_\ell| \leqslant \frac{|f_\theta(\tau_\ell)|}{4A}$,
\[
|f_\theta(y)| \geqslant \frac{|f_\theta(\tau_\ell)|}{2}.
\]
Therefore,
\begin{align*}
n \mathcal{R}_n(\theta) &\geqslant \frac{1}{\sqrt{2\pi} \sigma} \int_{\R} f_\theta(y)^2 e^{-\frac{(y -\mu x_i)^2}{2\sigma^2}} dy \\
&\geqslant \frac{1}{\sqrt{2\pi} \sigma} \int_{\tau_\ell - |f_\theta(\tau_\ell)|/4A}^{\tau_\ell + |f_\theta(\tau_\ell)|/4A} f_\theta(y)^2 e^{-\frac{(y -\mu x_i)^2}{2\sigma^2}} dy \\
&\geqslant \frac{1}{\sqrt{2\pi} \sigma} \int_{\tau_\ell - |f_\theta(\tau_\ell)|/4A}^{\tau_\ell + |f_\theta(\tau_\ell)|/4A} \frac{f_\theta(\tau_\ell)^2}{4} e^{-\frac{(y -\mu x_i)^2}{2\sigma^2}} dy \\
&= \frac{f_\theta(\tau_\ell)^2}{4\sqrt{2\pi} \sigma} \int_{\tau_\ell - \mu x_i - |f_\theta(\tau_\ell)|/4A}^{\tau_\ell - \mu x_i + |f_\theta(\tau_\ell)|/4A} e^{-\frac{y^2}{2\sigma^2}} dy \\
&= \frac{f_\theta(\tau_\ell)^2}{4\sqrt{2\pi} \sigma} \int_{|\tau_\ell - \mu x_i| - |f_\theta(\tau_\ell)|/4A}^{|\tau_\ell - \mu x_i| + |f_\theta(\tau_\ell)|/4A} e^{-\frac{y^2}{2\sigma^2}} dy ,
\end{align*}
where the last step follows from the symmetry of the Gaussian distribution around $0$. Denote by $I$ the last integral and $D$ its integration domain.
To lower bound $I$, two cases are considered:
\paragraph{Case 1. $[-\sigma, \sigma]$ is included in $D$.} In this case,
\[
\frac{1}{\sqrt{2\pi} \sigma} I \geqslant \frac{1}{\sqrt{2\pi} \sigma} \int_{-\sigma}^{\sigma} e^{-\frac{y^2}{2\sigma^2}} dy = \frac{1}{\sqrt{2\pi}} \int_{-1}^{1} e^{-\frac{y^2}{2}} dy \geqslant \frac{1}{2} ,
\]
and thus $n \mathcal{R}_n(\theta) \geqslant \frac{f_\theta(\tau_\ell)^2}{8}$.
We conclude, when $[-\sigma, \sigma]$ is included in $D$, that
\[
|f_\theta(\tau_\ell)| e^{-\frac{(\tau_\ell - \mu x_i)^2}{2 \sigma^2}} \leqslant |f_\theta(\tau_\ell)| \leqslant 2 \sqrt{2 n \mathcal{R}_n(\theta)}.
\]
\paragraph{Case 2. $[-\sigma, \sigma]$ is not included in $D$.} Since the absolute value of the upper endpoint of $D$ is larger than the absolute value of its lower endpoint, the condition implies that the lower endpoint of~$D$ is larger than $-\sigma$. Therefore, we have $D \subset \big[-\sigma, |\tau_\ell - \mu x_i|\big]$. Hence, for all $y \in D$,
\[
 e^{-\frac{y^2}{2\sigma^2}} \geqslant \frac{1}{\sqrt{e}} e^{-\frac{(\tau_\ell - \mu x_i)^2}{2 \sigma^2}} .
\]
To see this, notice that, if $y \in D$ and $y<0$, one has $y \geqslant - \sigma$ and so $e^{-\frac{y^2}{2\sigma^2}} \geqslant e^{-\frac{1}{2}}$. On the other hand, if $y \geqslant 0$, then $y \leqslant |\tau_\ell - \mu x_i|$ gives $e^{-\frac{y^2}{2\sigma^2}} \geqslant e^{-\frac{(\tau_\ell - \mu x_i)^2}{2 \sigma^2}}$. We are led to
\[
I \geqslant \frac{|f_\theta(\tau_\ell)|}{2A} \frac{1}{\sqrt{e}} e^{-\frac{(\tau_\ell - \mu x_i)^2}{2 \sigma^2}} .
\]
Then
\[
n \mathcal{R}_n(\theta) \geqslant \frac{|f_\theta(\tau_\ell)|^3}{8\sqrt{2\pi e}  A \sigma} e^{-\frac{(\tau_\ell - \mu x_i)^2}{2 \sigma^2}} .
\]
We deduce, still in the case where $[-\sigma, \sigma]$ is not included in $D$, that
\begin{align*}
|f_\theta(\tau_\ell)| e^{-\frac{(\tau_\ell - \mu x_i)^2}{2 \sigma^2}} &\leqslant |f_\theta(\tau_\ell)| e^{-\frac{(\tau_\ell - \mu x_i)^2}{6 \sigma^2}} \\
&= \big( |f_\theta(\tau_\ell)|^3 e^{-\frac{(\tau_\ell - \mu x_i)^2}{2 \sigma^2}} \big)^{\frac{1}{3}}  \\
&\leqslant \big( 8\sqrt{2\pi e} A \sigma n \mathcal{R}_n(\theta)\big)^{\frac{1}{3}} \\
&= 2 \big( \sqrt{2\pi e} A \sigma n \mathcal{R}_n(\theta)\big)^{\frac{1}{3}} .
\end{align*}
Putting both cases together, we obtain
\[
|f_\theta(\tau_\ell)| e^{-\frac{(\tau_\ell - \mu x_i)^2}{2 \sigma^2}} \leqslant 2 \max \Big(\sqrt{2 n \mathcal{R}_n(\theta)}, \big( \sqrt{2\pi e} A \sigma n \mathcal{R}_n(\theta)\big)^{\frac{1}{3}} \Big) .
\]
We conclude, coming back to \eqref{eq:tech-proof-prop6-2}, that
\begin{align*}
\frac4m \textnormal{TV}_{\pi}^{(1)}&(s_\theta) \\&\leqslant \lambda_{\max}(\nabla_\theta^2\cR_n(\theta))+\frac{2}m \sqrt{\cR_n(\theta)} + \frac{2 \sqrt{2} A}{\sqrt{\pi} m \sigma} \max \Big(\sqrt{2 n \mathcal{R}_n(\theta)}, \big( \sqrt{2\pi e} A \sigma n \mathcal{R}_n(\theta)\big)^{\frac{1}{3}} \Big) .
\end{align*}
This shows the first statement of the proposition. Finally, if $\theta=\theta^\star$ is a linearly stable minimum of~$\mathcal{R}_n$, we apply \eqref{eq:bound-lambdamax-eta} to get the second inequality.

\subsection{Proof of Proposition \ref{prop:second-derivative-lower-bound}}

We start by showing by contradiction that for any $1\leqslant i\leqslant n$, there exists $a_i\in [\mu x_i-\frac{\mu\Delta}2, \mu x_i]$ and $b_i\in[\mu x_{i}, \mu x_{i}+\frac{\mu\Delta}2]$ such that $s_{\theta^\star}(a_i)>0$ and $s_{\theta^\star}(b_i)<0$. If, for all $y\in[\mu x_i-\frac{\mu\Delta}2, \mu x_i]$, one has $s_{\theta^\star}(y)\leqslant0$, then 
\begin{align*}
n\cR_n(\theta^\star)&\geqslant\E_{Y\sim\cN (\mu x_i, \sigma^2)}\big[(s_{\theta^\star}(Y)-\frac{1}{\sigma^2}(\mu x_i-Y))^2\big]\\
&\geqslant\frac{1}{\sqrt{2\pi\sigma^2}}\int_{\mu x_i-\frac{\mu\Delta}2}^{\mu x_i} \big(s_{\theta^\star}(y) - \frac{1}{\sigma^2}(\mu x_i-y)\big)^2e^{-\frac{(y-\mu x_i)^2}{2\sigma^2}}dy\\
&\geqslant \frac{1}{\sqrt{2\pi\sigma^2}}\int_{\mu x_i-\frac{\mu\Delta}2}^{\mu x_i} \frac{1}{\sigma^4}(\mu x_i-y)^2e^{-\frac{(y-\mu x_i)^2}{2\sigma^2}}dy\\
&=\frac{1}{\sigma^4\sqrt{2\pi\sigma^2}}\int_{-\frac{\mu\Delta}2}^0 y^2e^{-\frac{y^2}{2\sigma^2}}dy\\
&= \frac{1}{\sigma^2\sqrt{2\pi\sigma^2}}\Big(\frac{\mu\Delta}{2}e^{-\frac{\mu^2\Delta^2}{8\sigma^2}} + \int_{-\frac{\mu\Delta}2}^0e^{-\frac{y^2}{2\sigma^2}}dy\Big)\\
&\geqslant \frac{1}{\sigma^2\sqrt{2\pi\sigma^2}}\frac{\mu\Delta}{2}e^{-\frac{\mu^2\Delta^2}{8\sigma^2}} +\frac{1}{\sigma^2}\big(\frac12-e^{-\frac{\mu^2\Delta^2}{8\sigma^2}}\big),
\end{align*}
where we integrate by parts to derive the second last equation, and then use a tail bound of the Gaussian distribution \citep{gordon1941values}.
Since $\Delta \geqslant 8 \frac{\sigma}{\mu}$, we get
\[
n\cR_n(\theta^\star) \geqslant \frac{1}{\sigma^2}\big(\frac12-e^{-\frac{\mu^2\Delta^2}{8\sigma^2}}\big) \geqslant \frac{1}{\sigma^2}\big(\frac12-e^{-8}\big) \geqslant \frac{1}{4 \sigma^2},
\]
which is a contradiction with $\cR_n(\theta^\star)\leqslant \frac{1}{16 n\sigma^2}$. Thus, there must exist $a_i\in\big[\mu x_i-\frac{\mu\Delta}2, \mu x_i\big]$ such that $s_{\theta^\star}(a_i)\geqslant 0$. A similar argument proves the existence of $b_i$. 
Hence, for every $1\leqslant i\leqslant n-1$, there exists $c_i\in[b_i, a_{i+1}]\subset[\mu x_i, \mu x_{i+1}]$ such that $s'_{\theta^\star}(c_i)\geqslant0$. 

Assume now that for all $y\in[-\frac{\mu\Delta}{2}+\mu x_i, \frac{\mu\Delta}{2}+\mu x_i]$, we have 
\[
s_{\theta^\star}'(y)>-\frac{1}{\sigma^2}+\sqrt{\frac{2 n\cR_n(\theta^\star)}{\sigma^2}},
\]
and aim again at reaching a contradiction. By applying Lemma \ref{lem:bound-gaussian-derivative} with $f(x) = s_{\theta^\star}(x+\mu x_i)$, $\beta=\frac{1}{\sigma^2}$, $\gamma = \frac{1}{\sigma^2}-\sqrt{\frac{2 n\cR_n(\theta^\star)}{\sigma^2}}$ and $\delta=\frac{\mu\Delta}{2}$ we have
\begin{align*}
n\cR_n(\theta^\star)&>\E_{y\sim\cN(\mu x_i, \sigma^2)}[(s_{\theta^\star}(y)+\frac{1}{\sigma^2}(y - \mu x_i))^2]\\
&=\E_{z\sim\cN(0, \sigma^2)}[(f(z)+\frac{1}{\sigma^2}z)^2]\\
&\geqslant \sigma^2\frac{2 n\cR_n(\theta^\star)}{\sigma^2}\Big(1-2\Big(\frac{\mu\Delta}{2\sqrt{2\pi\sigma^2}}+1\Big)e^{-\frac{\mu^2\Delta^2}{8\sigma^2}}\Big) .%
\end{align*}
Applying Lemma \ref{lem:local-function-bound} to $x \mapsto x e^{-\frac{x^2}{2}}$ at $x = \frac{\mu \Delta}{4 \sigma} \geqslant 2$, we obtain that $\frac{\mu\Delta}{4 \sigma}e^{-\frac{\mu^2\Delta^2}{8\sigma^2}} \leqslant e^{-2}$, and thus
\[
1-2\Big(\frac{\mu\Delta}{2\sqrt{2\pi\sigma^2}}+1\Big)e^{-\frac{\mu^2\Delta^2}{8\sigma^2}} \geqslant 1 - \frac{2 \sqrt{2}}{\sqrt{\pi}} e^{-2} - 2 e^{-8} \geqslant \frac{1}{2} .
\]
We thus obtain $n\cR_n(\theta^\star) > n\cR_n(\theta^\star)$, 
which is a contradiction. So, there must exist $y_i\in[-\frac{\mu\Delta}2+\mu x_i, \frac{\mu\Delta}2+\mu x_i]$ such that
\[
s'_{\theta^\star}(y_i)\leqslant -\frac{1}{\sigma^2} + \sqrt{\frac{2 n\cR_n(\theta^\star)}{\sigma^2}} .
\]
Note that $[y_i, c_i]\cap[y_{i+2}, c_{i+2}]=\emptyset$. Let $x_{(n/4)}$ be the smallest $x_i$ such that $i>n/4$ and let $x_{(3n/4)}$ be the largest $x_i$ such that $i<3n/4$. Then, using arguments similar to those employed in the proof of Theorem \ref{thm:empirical-score-irrgularity},
\begin{align*}
\textnormal{TV}_{\pi}^{(1)}(s_{\theta^\star}) &\geqslant \sum_{\frac n4<2i<\frac{3n}{4}} \big(\min_{y\in[x_{(n/4)}, x_{(3n/4)}]} \pi(y;\mu,\sigma)\big)\Big|\int_{y_{2i}}^{c_{2i}}s''_{\theta^\star}(y)dy\Big|\\
&\geqslant \sum_{\frac n4<2i<\frac{3n}{4}} \frac{\mu n\Delta}{128}\Big(\frac12-e^{-\frac{\mu^2\Delta^2}{2\sigma^2}}\Big)\Big(\frac{1}{\sigma^2}-\sqrt{\frac{2 n\cR_n(\theta^\star)}{\sigma^2}}\Big)\\
&\geqslant\frac{\mu n^2\Delta}{1024}\Big(\frac12-e^{-\frac{\mu^2\Delta^2}{2\sigma^2}}\Big)\Big(\frac{1}{\sigma^2}-\sqrt{\frac{2 n\cR_n(\theta^\star)}{\sigma^2}}\Big),
\end{align*}
where the last inequality utilizes that there are at least $n/4$ points between $\lceil n/4 \rceil + 1$ and $\lfloor 3n/4 \rfloor - 1$ (for $n \geqslant 10$), so we sum over at least $n/8$ points given that we consider one point out of two. Then, using our assumption $\frac{\mu \Delta}{\sigma} \geqslant 8$ and $\cR_n(\theta^\star)\leqslant \frac{1}{16n\sigma^2}$, we obtain
\begin{align*}
\textnormal{TV}_{\pi}^{(1)}(s_{\theta^\star}) &\geqslant \frac{\mu n^2\Delta}{1024} (\frac{1}{2} - e^{-32}) \Big(\frac{1}{\sigma^2}-\frac{1}{2\sqrt{2}\sigma^2}\Big)  \geqslant \frac{\mu n^2\Delta}{2^{11} \sigma^2} .
\end{align*}

\subsection{Proof of Theorem \ref{thm:main-without-d2}}

We reason by contraposition, that is, we assume that
\[
\cR_n(\theta^\star)  - \cR_n(s^\star) \leqslant \frac{\pi n^5 \mu^{3} \Delta^{3}}{2^{36} e^{1/2} A^4 \sigma^4} ,
\]
and show that it implies that $\eta \leqslant \frac{2^{12} \sigma^2}{\mu n^2 \Delta}$.
For $\sigma \leqslant \sigma_1 := \frac{\mu \Delta}{8}$, we can apply Corollary \ref{cor:regularity-empirical-optimal-score}, which gives
\[
\mathcal{R}_n(s^\star) \leqslant \frac{4 \mu^2 (x_n-x_1)^2}{\sigma^4} e^{-\frac{\mu^2 \Delta^2}{32 \sigma^2}} . %
\]
Thus
\[
\cR_n(\theta^\star) \leqslant \frac{\pi n^5 \mu^{3} \Delta^{3}}{2^{36} e^{1/2} A^4 \sigma^4}  + \frac{4 \mu^2 (x_n-x_1)^2}{\sigma^4} e^{-\frac{\mu^2 \Delta^2}{32 \sigma^2}} . 
\]
Let us show that this implies
\begin{align}   \label{eq:tech-proof-thm}
\cR_n(\theta^\star) \leqslant \frac{\pi n^5 \mu^{3} \Delta^{3}}{2^{35} e^{1/2} A^4 \sigma^4} .
\end{align}
By rearranging terms, one can see that this holds as soon as
\[
e^{-\frac{\mu^2 \Delta^2}{32 \sigma^2}} \leqslant \frac{\pi n^5 \mu}{2^{38} e^{1/2} (x_n-x_1)^2 A^4} .
\]
Recalling that $A$ grows polynomially fast with $1/\sigma$, we observe that the left-hand side of the previous inequality decays exponentially fast when $\sigma \to 0$, while the right-hand side decays polynomially fast. This implies the existence of some $\sigma_2$ depending on the training data and on $\mu$ such that this inequality holds true for $\sigma \leqslant \sigma_2$. 

Next, observe that \eqref{eq:tech-proof-thm} implies in particular that, for $\sigma \leqslant \sigma_3 := \frac{1}{n}$,
\begin{align}   \label{eq:tech-proof-thm-2}
\cR_n(\theta^\star) \leqslant \frac{1}{16 n \sigma^2}.
\end{align}
This enables us to apply Proposition \ref{prop:second-derivative-lower-bound} to $\theta = \theta^\star$, which entails that, for $\sigma \leqslant \sigma_1$,
\[
\textnormal{TV}_{\pi}^{(1)}(s_{\theta^\star}) \geqslant \frac{\mu n^2\Delta}{2^{11} \sigma^2} 
\]
Combining this lower bound with the upper bound of Proposition \ref{prop:second-derivative-upper-bound}, we obtain that
\begin{align*}
\frac{1}{2\eta}+\frac{\sqrt{\cR_n(\theta^\star)}}{2}+\frac{A}{ \sqrt{2 \pi} \sigma} \max \Big(\sqrt{2 n \mathcal{R}_n(\theta^\star)}, \big( \sqrt{2\pi e} A \sigma n \mathcal{R}_n(\theta^\star)\big)^{\frac{1}{3}} \Big) \geqslant \frac{\mu n^2\Delta}{2^{11} \sigma^2} .
\end{align*}
Note that
\begin{align*}
\sqrt{2 n \mathcal{R}_n(\theta^\star)} \leqslant \big( \sqrt{2\pi e} A \sigma n \mathcal{R}_n(\theta^\star)\big)^{\frac{1}{3}} &\Leftrightarrow 8 n^3 \mathcal{R}_n(\theta^\star)^3 \leqslant 2 \pi e A^2 \sigma^2 n^2 \mathcal{R}_n(\theta^\star)^2 \\
&\Leftrightarrow \mathcal{R}_n(\theta^\star) \leqslant \frac{\pi e}{4} \frac{A^2 \sigma^2}{n} ,
\end{align*}
which holds true by \eqref{eq:tech-proof-thm-2}.
Hence, we obtain
\[
\frac{1}{2\eta}+\frac{\sqrt{\cR_n(\theta^\star)}}{2}+\frac{A}{ \sqrt{2 \pi} \sigma} \big( \sqrt{2\pi e} A \sigma n \mathcal{R}_n(\theta^\star)\big)^{\frac{1}{3}}  \geqslant \frac{\mu n^2\Delta}{2^{11} \sigma^2} .
\]
Rewriting the third term, we have
\[
\frac{1}{2\eta}+\frac{\sqrt{\cR_n(\theta^\star)}}{2}+\frac{e^{1/6} A^{4/3} n^{1/3} \mathcal{R}_n(\theta^\star)^{1/3}}{ (2 \pi)^{1/3} \sigma^{2/3}}  \geqslant \frac{\mu n^2\Delta}{2^{11} \sigma^2} .
\]
By \eqref{eq:tech-proof-thm}, recalling again that $A \geqslant \frac{C_n}{\sigma^6}$, observe that there exists $\sigma_4$ such that, for $\sigma \leqslant \sigma_4$,
\[
\frac{\sqrt{\cR_n(\theta^\star)}}{2} \leqslant \frac{\mu n^2\Delta}{2^{13} \sigma^2} .
\]
Thus, using again \eqref{eq:tech-proof-thm}, we get
\[
\frac{1}{2 \eta} + \frac{\mu n^2\Delta}{2^{13} \sigma^2} + \frac{\mu n^2\Delta}{2^{12} \sigma^2} \geqslant \frac{\mu n^2\Delta}{2^{11} \sigma^2} ,
\]
which implies that $\eta \leqslant \frac{2^{12} \sigma^2}{\mu n^2 \Delta}$, thereby concluding the proof with $\sigma_0 = \min(\sigma_1, \sigma_2, \sigma_3, \sigma_4)$.

\section{Technical lemmas}
\label{sec:technical-lemma}

The first lemma relates the derivatives of $s^\star$ with the moments of $W$.
\begin{lemma} \label{lem:derivative-and-gradient}
Let $s^\star$ and $W$ be defined as in Sections \ref{sec:denoising-glance} and \ref{sec:regularity-empirical-score}. Then we have
\[
s^\star{}'(y;\mu,\sigma)=\frac{1}{\sigma^2 }\Big(-1+\frac{ 
\mu^2}{\sigma^2 }\V[W(y;\mu,\sigma)]\Big),
\]
and
\begin{equation*}
s^\star{}''(y;\mu,\sigma) =\frac{\mu^3}{\sigma^6}\E[(W(y;\mu,\sigma) - \E[W(y;\mu,\sigma)])^3].
\end{equation*}
\end{lemma}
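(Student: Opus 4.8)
The plan is to obtain both identities by differentiating the probabilistic formula $s^\star(y;\mu,\sigma) = \frac{1}{\sigma^2}\big(-y + \mu\,\E[W(y;\mu,\sigma)]\big)$, the only real content being the differentiation of the moments of $W$. Since $W(y;\mu,\sigma)$ takes the value $x_i$ with probability $\alpha_i(y) = e^{g_i(y)}/\sum_{j=1}^n e^{g_j(y)}$, where $g_i(y) = -\frac{(y-\mu x_i)^2}{2\sigma^2}$, everything in sight is a finite sum of smooth functions of $y$, so I can differentiate term by term without any regularity concerns.

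First I would compute $\alpha_i'(y)$ using the standard softmax differentiation identity $\alpha_i' = \alpha_i\big(g_i' - \sum_j \alpha_j g_j'\big)$. Plugging in $g_i'(y) = \frac{\mu x_i - y}{\sigma^2}$ and using $\sum_j \alpha_j g_j' = \frac{\mu\,\E[W] - y}{\sigma^2}$, the $y$-terms cancel and this collapses to $\alpha_i'(y) = \frac{\mu}{\sigma^2}\,\alpha_i(y)\big(x_i - \E[W(y;\mu,\sigma)]\big)$. Multiplying by an arbitrary $h(x_i)$ and summing then gives the master identity
\[
\frac{d}{dy}\,\E[h(W(y;\mu,\sigma))] = \frac{\mu}{\sigma^2}\,\mathrm{Cov}\big(h(W(y;\mu,\sigma)),\,W(y;\mu,\sigma)\big),
\]
valid for any fixed function $h$ on $\{x_1,\dots,x_n\}$. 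Taking $h = \mathrm{id}$ yields $\frac{d}{dy}\E[W] = \frac{\mu}{\sigma^2}\V[W]$, and substituting into $s^\star{}'(y;\mu,\sigma) = \frac{1}{\sigma^2}\big(-1 + \mu\frac{d}{dy}\E[W(y;\mu,\sigma)]\big)$ gives the first identity.

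For the second identity I would differentiate $\V[W] = \E[W^2] - \E[W]^2$ once more, using the master identity with $h(w) = w^2$ for the first term and the chain rule together with the first-moment formula for the second. After subtracting, the combination $\E[W^3] - 3\E[W^2]\E[W] + 2\E[W]^3$ appears, which I would recognize as the third central moment $\E[(W - \E[W])^3]$; hence $\frac{d}{dy}\V[W(y;\mu,\sigma)] = \frac{\mu}{\sigma^2}\,\E[(W(y;\mu,\sigma) - \E[W(y;\mu,\sigma)])^3]$. Differentiating the first identity and plugging this in produces $s^\star{}''(y;\mu,\sigma) = \frac{\mu^3}{\sigma^6}\,\E[(W(y;\mu,\sigma) - \E[W(y;\mu,\sigma)])^3]$. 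I do not anticipate a genuine obstacle; the only thing requiring care is the algebraic bookkeeping that repackages the raw-moment combinations as central moments.
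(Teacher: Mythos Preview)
Your proof is correct and follows essentially the same route as the paper: compute $\alpha_i'(y)=\frac{\mu}{\sigma^2}\alpha_i(y)\big(x_i-\E[W]\big)$ via the softmax/quotient rule, then track how the moments of $W$ evolve to obtain the variance and third central moment. Your master identity $\frac{d}{dy}\E[h(W)]=\frac{\mu}{\sigma^2}\mathrm{Cov}(h(W),W)$ is a cleaner bookkeeping device than the paper's approach, which instead computes $\alpha_j''$ explicitly before assembling $s^\star{}''$, but the underlying algebra is identical.
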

\begin{proof}
We start by calculating the derivatives of $\alpha_j$ with respect to $y$. Observe that 
\begin{align*}
&\alpha_j'(y;\mu,\sigma) = -\frac{e^{-\frac{(y-\mu  x_j)^2}{2\sigma^2}}\sum_{i=1}^n (-y+ \mu x_i )e^{-\frac{(y-\mu  x_i)^2}{2\sigma^2}}}{\sigma^2 \big(\sum_{i=1}^n e^{-\frac{(y-\mu  x_i)^2}{2\sigma^2}}\big)^2} +\frac{(-y+ \mu x_j)e^{-\frac{(y-\mu  x_j)^2}{2\sigma^2}}}{\sigma^2 \sum_{i=1}^n e^{-\frac{(y-\mu  x_i)^2}{2\sigma^2}}},\\
&=\frac1{\sigma^2 }\Big(y\alpha_j(y;\mu,\sigma)- \mu \alpha_j(y;\mu,\sigma)\sum_{i=1}^n x_i \alpha_i(y;\mu,\sigma)-y\alpha_j(y;\mu,\sigma)+  \mu x_j\alpha_j(y;\mu,\sigma)\Big),\\
&=\frac{  \alpha_j(y;\mu,\sigma)}{\sigma^2 }\Big(\mu x_j-\mu\sum_{i=1}^n x_i \alpha_i(y;\mu,\sigma)\Big),\\
&=\frac{\mu  \alpha_j(y;\mu,\sigma)}{\sigma^2 }\big(x_j-\mathbb{E}[W(y;\mu,\sigma)]\big).
\end{align*}
In addition, we may also compute $\alpha_j''(y;\mu,\sigma)$ as follows
\begin{align*}
\alpha_j''(y;\mu,\sigma) &= \frac{d}{dy} \Big(\frac{  \mu \alpha_j(y;\mu,\sigma)}{\sigma^2 }\big(x_j-\sum_{i=1}^n x_i \alpha_i(y;\mu,\sigma)\big)\Big),\\
&=\frac{ \mu \alpha_j'(y;\mu,\sigma)}{\sigma^2 }\Big(x_j-\sum_{i=1}^n x_i \alpha_i(y;\mu,\sigma)\Big) - \frac{  \mu\alpha_j(y;\mu,\sigma)}{\sigma^2 }\sum_{i=1}^n x_i \alpha_i'(y;\mu,\sigma),\\
&= \frac{ \mu^2 \alpha_j(y;\mu,\sigma)}{\sigma^4 }\Big(x_j - \sum_{i=1}^n x_i \alpha_i(y;\mu,\sigma)\Big)^2\\
&\qquad-\frac{\mu^2\alpha_j(y;\mu,\sigma)}{\sigma^2 }\sum_{i=1}^n x_i\frac{  \alpha_i(y;\mu,\sigma)}{\sigma^2 }\Big(x_i - \sum_{i'=1}^n x_{i'} \alpha_{i'}(y;\mu,\sigma)\Big),\\
&=\frac{\mu^2\alpha_j(y;\mu,\sigma)}{\sigma^2 }\Big(x_j-\E[W(y;\mu,\sigma)]\Big)^2\\
&\qquad-\frac{\mu^2\alpha_j(y;\mu,\sigma)}{\sigma^4 }\Big(\E[W^2(y;\mu,\sigma)]-\E^2[W(y;\mu,\sigma)]\Big),\\
&=\frac{\mu^2\alpha_j(y;\mu,\sigma)}{\sigma^4 }\big(x_j^2-2x_j\E[W(y;\mu,\sigma)]-\E[W^2(y;\mu, \sigma)]+2\E^2[W(y;\mu, \sigma)]\big)
\end{align*}
Consequently,
\begin{align*}
s^\star{}'(y;\mu,\sigma)&=\frac{1}{\sigma^2 }\Big(-1+\mu\sum_{i=1}^n  x_i \alpha_i'(y; \mu, \sigma)\Big)\nonumber\\
&=\frac{1}{\sigma^2 }\Big(-1+\frac{\mu^2}{\sigma^2} \Big( \sum_{i=1}^n \alpha_i(y;\mu,\sigma) x_i^2 - \mathbb{E}[W(y;\mu,\sigma)] \sum_{i=1}^n \alpha_i(y;\mu,\sigma) x_i \Big) \Big)\nonumber\\
&=\frac{1}{\sigma^2 }\Big(-1+\frac{\mu^2}{\sigma^2 }\big(\mathbb{E}[W^2(y;\mu,\sigma)]-\mathbb{E}^2[W(y;\mu,\sigma)]\big)\Big)\nonumber\\
&=\frac{1}{\sigma^2 }\Big(-1+\frac{ 
\mu^2}{\sigma^2 }\V[W(y;\mu,\sigma)]\Big),
\end{align*}
and
\begin{align*}
s^\star{}''&(y;\mu,\sigma) = \frac{\mu}{\sigma^2}\sum_{i=1}^nx_i\alpha_i''(y;\mu,\sigma),\\
&=\frac{\mu}{\sigma^2}\sum_{i=1}^n\frac{\mu^2x_i\alpha_i(y;\mu,\sigma)}{\sigma^4 }\big(x_i^2-2x_i\E[W(y;\mu,\sigma)]-\E[W^2(y;\mu,\sigma)]+2\E^2[W(y;\mu, \sigma)]\big),\\
&=\frac{\mu^3}{\sigma^6}(\E[W^3(y;\mu,\sigma)]-2\E[W^2(y;\mu,\sigma)]\E[W(y;\mu,\sigma)])\\
&\qquad+\frac{\mu^3}{\sigma^6}(-\E[W^2(y;\mu,\sigma)]\E[W(y;\mu,\sigma)]+2\E^3[W(y;\mu,\sigma)]),\\
&=\frac{\mu^3}{\sigma^6}(\E[W^3(y;\mu,\sigma)]-3\E[W^2(y;\mu,\sigma)]\E[W(y;\mu,\sigma)]+2\E^3[W(y;\mu,\sigma)]),\\
&=\frac{\mu^3}{\sigma^6}\E[(W(y;\mu,\sigma)-\E[W(y;\mu,\sigma)])^3].
\end{align*}
This concludes the proof.
\end{proof}

The next lemma bounds the total variation of the derivative of $s^\star$.
\begin{lemma}
\label{lem:asymptotic-alpha}
Let $(x_n)_+=\max(0, x_n)$ and $(x_1)_-=\min(0, x_1)$. Then
\[\int_\R|s^\star{}''(y;\mu,\sigma)|dy\leqslant\frac{4\mu^2(x_n-x_1)^3}{\sigma^6}\Big(\mu^2((x_n)_+-(x_1)_-)+\frac{2(n-1)\sigma^2}{\Delta}\Big).
\]
\end{lemma}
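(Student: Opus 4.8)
The plan is to reduce the claim to a single, very clean integral identity for the variance of the random variable $W$ introduced in Section~\ref{sec:regularity-empirical-score}. First I would invoke Lemma~\ref{lem:derivative-and-gradient}, which gives $s^\star{}''(y;\mu,\sigma) = \frac{\mu^3}{\sigma^6}\E\big[(W(y;\mu,\sigma) - \E[W(y;\mu,\sigma)])^3\big]$. Since $W(y;\mu,\sigma)$ takes values in $\{x_1,\dots,x_n\}\subset[x_1,x_n]$, and so does its mean, one has $|W-\E[W]|\leqslant x_n-x_1$ pointwise, so that
\[
|s^\star{}''(y;\mu,\sigma)| \;\leqslant\; \frac{\mu^3}{\sigma^6}\,\E\big[\,|W(y;\mu,\sigma) - \E[W(y;\mu,\sigma)]|^3\,\big] \;\leqslant\; \frac{\mu^3(x_n-x_1)}{\sigma^6}\,\V[W(y;\mu,\sigma)].
\]
It therefore suffices to control $\int_\R \V[W(y;\mu,\sigma)]\,\ud y$.

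The crux is then the observation that $y\mapsto\E[W(y;\mu,\sigma)]$ is non-decreasing. Indeed, differentiating the first identity of \eqref{eq:sstarprimer} and comparing with the second (equivalently, differentiating $\E[W(y;\mu,\sigma)]=\sum_j x_j\alpha_j(y;\mu,\sigma)$ directly, exactly as in the proof of Lemma~\ref{lem:derivative-and-gradient}) yields $\frac{\ud}{\ud y}\E[W(y;\mu,\sigma)] = \frac{\mu}{\sigma^2}\V[W(y;\mu,\sigma)]\geqslant 0$. Moreover, as $y\to+\infty$ the softmax weights $\alpha_j$ concentrate on the index realizing $x_n$, and symmetrically on $x_1$ as $y\to-\infty$, so $\E[W(y;\mu,\sigma)]\to x_n$ and $\to x_1$ respectively. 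Integrating the derivative identity over $\R$ then gives the exact value $\int_\R\V[W(y;\mu,\sigma)]\,\ud y = \frac{\sigma^2}{\mu}(x_n-x_1)$, and plugging this into the previous display produces the sharp bound $\int_\R|s^\star{}''(y;\mu,\sigma)|\,\ud y \leqslant \frac{\mu^2(x_n-x_1)^2}{\sigma^4}$.

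Finally I would deduce the stated inequality from this: since $\Delta\leqslant x_n-x_1$ and $n\geqslant 2$ (the case $n=1$ being trivial, both sides vanishing), one has $\Delta\leqslant 8(n-1)(x_n-x_1)$, hence $\frac{\mu^2(x_n-x_1)^2}{\sigma^4}\leqslant\frac{8\mu^2(n-1)(x_n-x_1)^3}{\sigma^4\Delta}$, which is already dominated by the right-hand side of the lemma because the $\mu^2((x_n)_+-(x_1)_-)$ term is nonnegative. The only step that is not entirely routine in this route is the justification of the two boundary limits $\E[W(\pm\infty;\mu,\sigma)]=x_1,x_n$ together with the termwise integration; both follow from dominating the ratios $\alpha_j/\alpha_n$ (resp.\ $\alpha_j/\alpha_1$) by $e^{-\mu(x_n-x_j)(2y-\mu x_j-\mu x_n)/(2\sigma^2)}\to 0$. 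I note in passing that this argument is actually tighter than the stated bound; an alternative that reproduces its two-term shape more literally would split $\R$ into the bulk $[\mu x_1,\mu x_n]$, where $|s^\star{}''|\leqslant\mu^3(x_n-x_1)^3/\sigma^6$ pointwise and the interval has length $\mu(x_n-x_1)$, and the two unbounded tails — and in that variant I would expect the tail estimate (again via the exponential decay above) to be the delicate part.
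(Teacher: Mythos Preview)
Your argument is correct and takes a genuinely different, and sharper, route than the paper. The key idea you exploit---that $\frac{\ud}{\ud y}\E[W(y;\mu,\sigma)]=\frac{\mu}{\sigma^2}\V[W(y;\mu,\sigma)]$, so that $\int_\R \V[W]\,\ud y$ can be evaluated exactly as $\frac{\sigma^2}{\mu}(x_n-x_1)$ by the fundamental theorem of calculus---is not used in the paper at all. The paper instead does precisely the ``literal'' variant you sketch at the end: it bounds $|s^\star{}''|$ pointwise by $\frac{\mu^3}{\sigma^6}(x_n-x_1)^3$ on a bulk interval $[2\mu(x_1)_-,2\mu(x_n)_+]$, and on each tail it shows $\alpha_i(y;\mu,\sigma)\leqslant e^{-|y|\mu\Delta/(2\sigma^2)}$ for $i\neq n$ (resp.\ $i\neq 1$), leading to an exponentially decaying bound on $|s^\star{}''|$ whose integral contributes the $\frac{2(n-1)\sigma^2}{\Delta}$ term. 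The two summands in the stated inequality are exactly the bulk and tail contributions.

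What each approach buys: your route is shorter and yields the strictly better bound $\int_\R|s^\star{}''|\,\ud y\leqslant \frac{\mu^2(x_n-x_1)^2}{\sigma^4}$, of order $1/\sigma^4$ rather than $1/\sigma^6$; in the context of Section~\ref{sec:problem-setups} this would even permit a smaller constraint $A$ on the outer weights. The paper's approach, though cruder, is more self-contained (it does not rely on the variance--derivative identity) and explains the two-term structure of the bound as written. Your verification of the boundary limits $\E[W(\pm\infty;\mu,\sigma)]=x_n,x_1$ via the ratio $\alpha_j/\alpha_n=e^{-\mu(x_n-x_j)(2y-\mu(x_j+x_n))/(2\sigma^2)}\to 0$ is correct, and since $\V[W]\geqslant 0$ with bounded antiderivative $\E[W]$, the improper integral is justified without further work.
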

\begin{proof}
We start by proving  that, for  $1\leqslant i\leqslant n-1$ and $y\geqslant 2\mu (x_n)_+$, one has
\[
\alpha_i(y;\mu,\sigma)\leqslant e^{-\frac{y\mu\Delta}{2\sigma^2}} \quad \textnormal{and} \quad |s^\star{}''(y;\mu, \sigma)| \leqslant \frac{2\mu^3(n-1)(x_n-x_1)^3}{\sigma^6}e^{-\frac{y\mu\Delta}{2\sigma^2}}.
\]
Observe that
\begin{align*}
\alpha_i(y;\mu,\sigma) = \frac{e^{-\frac{(y-\mu x_{i})^2}{2\sigma^2}}}{\sum_{i'=1}^n e^{-\frac{(y-\mu x_{i'})^2}{2\sigma^2}}}\leqslant\frac{e^{-\frac{(y-\mu x_{i})^2}{2\sigma^2}}}{e^{-\frac{(y-\mu x_n)^2}{2\sigma^2}}}=e^{\frac{-2y\mu(x_n-x_i)+\mu^2(x_n^2-x_i^2)}{2\sigma^2}}.
\end{align*}
Since $y\geqslant 2\mu (x_n)_+\geqslant2\mu x_n$ implies $y\geqslant \mu(x_n+x_i)$, noticing that $x_n-x_i\geqslant\Delta$, we have
\[
-2y\mu(x_n-x_i)+\mu^2(x_n^2-x_i^2) = \mu(x_n-x_i)(-2y+\mu(x_n+x_i))\leqslant-y\mu(x_n-x_i)\leqslant-y\mu\Delta,
\]
where we also used the fact that $y\geqslant0$ in the last inequality. It follows that 
\begin{equation}    \label{eq:tech-lemma-12-1}
\alpha_i(y;\mu,\sigma)\leqslant e^{\frac{-2y\mu(x_n-x_i)+\mu^2(x_n^2-x_i^2)}{2\sigma^2}}\leqslant e^{-\frac{y\mu\Delta}{2\sigma^2}}.    
\end{equation}
To upper bound $|s^\star{}''(y;\mu,\sigma)|$, we first remark that $W(y;\mu,\sigma)$ takes value in $\{x_1,\dots, x_n\}$. Hence, for all $1 \leqslant i \leqslant n$, $|x_i-\E[W(y;\mu,\sigma)]|\leqslant x_n-x_1$. Applying Lemma \ref{lem:derivative-and-gradient}, we are led to
\begin{align}
|s^\star{}''(y;\mu,\sigma)|&\leqslant \frac{\mu^3}{\sigma^6}\E\big[|W(y;\mu,\sigma)-\E[W(y;\mu,\sigma)]|^3\big]~\label{eq:sstar-bound-second-derivative}\\
&= \frac{\mu^3}{\sigma^6}\sum_{i=1}^{n-1} \big|x_i - \E[W(y;\mu,\sigma)]\big|^3\alpha_i(y;\mu,\sigma)+\frac{\mu^3}{\sigma^6}\big|x_n - \E[W(y;\mu,\sigma)]\big|^3\alpha_n(y;\mu,\sigma)\nonumber\\
&\leqslant \frac{\mu^3}{\sigma^6} \sum_{i=1}^{n-1} (x_n-x_1)^3e^{-\frac{y\mu\Delta}{2\sigma^2}} + \frac{\mu^3}{\sigma^6} \Big|x_n-\sum_{i=1}^nx_i\alpha_i(y;\mu,\sigma)\Big|^3\alpha_n(y;\mu,\sigma)\nonumber\\
&\leqslant \frac{\mu^3}{\sigma^6} (n-1) (x_n-x_1)^3e^{-\frac{y\mu\Delta}{2\sigma^2}} +  \frac{\mu^3}{\sigma^6} \Big|\sum_{i=1}^{n-1} (x_n-x_i)\alpha_i(y;\mu,\sigma)\Big|^3\nonumber\\
&\leqslant  \frac{\mu^3}{\sigma^6} (n-1)(x_n-x_1)^3e^{-\frac{y\mu\Delta}{2\sigma^2}} +  \frac{\mu^3}{\sigma^6} \sum_{i=1}^{n-1} (x_n-x_i)^3\alpha_i(y;\mu,\sigma)\nonumber\\
&\leqslant \frac{2\mu^3(n-1)(x_n-x_1)^3}{\sigma^6}e^{-\frac{y\mu\Delta}{2\sigma^2}},\nonumber
\end{align}
where, in the second to last line we use the fact that $x\mapsto x^3$ is convex on $\R_+$ and $x_n-x_i>0$ for all $1\leqslant i\leqslant n-1$, and the last inequality follows from \eqref{eq:tech-lemma-12-1}.

A similar argument apply for $y<2\mu (x_1)_-$ and $2\leqslant i\leqslant n$. In this case,
\[
\alpha_i(y;\mu,\sigma)\leqslant e^{\frac{y\mu\Delta}{2\sigma^2}}\quad\text{and}\quad
|s^\star{}''(y;\mu, \sigma)| \leqslant \frac{2\mu^3(n-1)(x_n-x_1)^3}{\sigma^6}e^{\frac{y\mu\Delta}{2\sigma^2}}.
\]

We can now proceed to bounding $\int_\R|s^\star{}''(y;\mu,\sigma)|dy$. To this aim, we first split the integral as follows
\begin{align}
\begin{split}   \label{eq:tech-lemma-12-2}
 &\int_\R |s^\star{}''(y;\mu,\sigma)|dy\\
&=\int_{2\mu(x_1)_-}^{2\mu (x_n)_+} |s^\star{}''(y;\mu,\sigma)|dy+\int_{-\infty}^{2\mu(x_1)_-}|s^\star{}''(y;\mu,\sigma)|dy+\int_{2\mu (x_n)_+}^\infty|s^\star{}''(y;\mu,\sigma)|dy.   
\end{split}
\end{align}
Similar to the argument in \eqref{eq:sstar-bound-second-derivative}, we have $|s^\star{}''(y;\mu,\sigma)| \leqslant  \frac{\mu^3}{\sigma^6}\E[|W(y;\mu,\sigma)-\E[W(y;\mu,\sigma)]|^3]\leqslant\frac{\mu^3}{\sigma^6}(x_n-x_1)^3$. Therefore, 
\[
\int_{2\mu(x_1)_-}^{2\mu (x_n)_+}  |s^\star{}''(y;\mu,\sigma)|dy\leqslant\frac{4\mu^4(x_n-x_1)^3((x_{n})_+-(x_{1})_-)}{\sigma^6}.
\]
To bound the last two terms on the right-hand side of \eqref{eq:tech-lemma-12-2}, we use the previous derivations, and see that
\begin{align*}
\int_{-\infty}^{2\mu (x_1)_-}|s^\star{}''(y;\mu,\sigma)|dy&\leqslant\frac{2\mu^3(n-1)(x_n-x_1)^3}{\sigma^6}\int_{-\infty}^{2\mu (x_1)_-}e^{\frac{y\mu\Delta}{2\sigma^2}}dy\\
&=\frac{2\mu^3(n-1)(x_n-x_1)^3}{\sigma^6} \frac{2\sigma^2}{\mu\Delta}e^{\frac{\mu^2\Delta(x_1)_-}{\sigma^2}}\\
&\leqslant\frac{4\mu^2(n-1)(x_n-x_1)^3}{\sigma^4\Delta},
\end{align*}
since $(x_1)_-\leqslant0$ implies $e^{\frac{\mu^2\Delta(x_1)_-}{\sigma^2}}\leqslant 1$. Similarly,
\[
\int_{2\mu (x_n)_+}^\infty|s^\star{}''(y;\mu,\sigma)|dy\leqslant\frac{4\mu^2(n-1)(x_n-x_1)^3}{\sigma^4\Delta}.
\]
Putting everything together, we have 
\begin{align*}
\int_\R |s^\star{}''(y;\mu,\sigma)|dy&\leqslant \frac{4\mu^4(x_n-x_1)^3((x_{n})_+-(x_{1})_-)}{\sigma^6}+\frac{8\mu^2(n-1)(x_n-x_1)^3}{\sigma^4\Delta}\\
&\leqslant \frac{4\mu^2(x_n-x_1)^3}{\sigma^6}\Big(\mu^2((x_n)_+-(x_1)_-)+\frac{2(n-1)\sigma^2}{\Delta}\Big),
\end{align*}
which is the desired result.
\end{proof}

The final two lemmas are technical properties of moments of Gaussian distributions.
\begin{lemma}
\label{lem:local-function-bound}
Let $k$ be a positive integer. Then the function $f_k:\R\to\R$ defined by $f_k(x) = x^ke^{-\frac{Ax^2}2}$ is strictly increasing on $(0, \sqrt{\frac{k}{A}})$ and strictly decreasing on $(\sqrt{\frac{k}{A}}, \infty)$.
\end{lemma}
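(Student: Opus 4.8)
The plan is to reduce everything to a one-line derivative computation and a sign analysis on $(0,\infty)$, since $f_k$ is smooth there (and indeed on all of $\R$), and the claimed monotonicity on each of the two intervals then follows from the standard criterion relating the sign of the derivative to strict monotonicity. Throughout I take $A>0$, as is the case in every application of the lemma in the paper.

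First I would differentiate: for $x>0$, the product rule gives
\[
f_k'(x) = k x^{k-1} e^{-\frac{Ax^2}{2}} - A x^{k+1} e^{-\frac{Ax^2}{2}} = x^{k-1} e^{-\frac{Ax^2}{2}}\,(k - A x^2).
\]
The key observation is that the prefactor $x^{k-1} e^{-\frac{Ax^2}{2}}$ is strictly positive for every $x>0$ (this holds uniformly in the positive integer $k$, including $k=1$, where the prefactor is simply $e^{-\frac{Ax^2}{2}}$). Hence the sign of $f_k'(x)$ on $(0,\infty)$ is exactly the sign of $k - A x^2$. Since $A>0$ and $x>0$, we have $k - A x^2 > 0 \iff x^2 < k/A \iff x < \sqrt{k/A}$, and symmetrically $k - A x^2 < 0 \iff x > \sqrt{k/A}$, with equality only at the single point $x = \sqrt{k/A}$. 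Therefore $f_k' > 0$ on $(0,\sqrt{k/A})$ and $f_k' < 0$ on $(\sqrt{k/A},\infty)$, which yields strict monotonicity on each interval as claimed.

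There is essentially no obstacle in this argument; it is a routine calculus exercise, and the only point worth stating explicitly is the positivity of the prefactor, which makes the sign of $f_k'$ coincide with that of the quadratic factor $k - Ax^2$ uniformly in $k \geqslant 1$. If desired, strictness can be made fully rigorous by noting that $f_k'$ has no zero in the open interval $(0,\sqrt{k/A})$ (resp. $(\sqrt{k/A},\infty)$) and invoking the mean value theorem.
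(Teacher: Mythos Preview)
Your proof is correct and essentially identical to the paper's own proof: both compute $f_k'(x) = (k - Ax^2)\,x^{k-1}e^{-Ax^2/2}$ and read off the sign on each interval. Your version is slightly more explicit about the positivity of the prefactor and the assumption $A>0$, but there is no substantive difference.
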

\begin{proof}
Consider the derivative of $f_k$, which is given by
\begin{align*}
f_k'(x) = kx^{k-1}e^{-\frac{Ax^2}2} - Ax^{k+1}e^{-\frac{Ax^2}{2}} =(k - Ax^2)x^{k-1} e^{-\frac{Ax^2}{2}}.
\end{align*}
Clearly, $f_k'<0$ on $(\sqrt{\frac{k}{A}}, \infty)$ and $f_k'>0$ on $(0, \sqrt{\frac{k}{A}})$.
\end{proof}
\begin{lemma}
\label{lem:bound-gaussian-derivative}
Let $\beta, \gamma, \delta$ be positive numbers such that $\beta>\gamma$. Let $f:\R\to\R$ be a differentiable function satisfying $f'(x)\geqslant-\gamma$ for $x\in[-\delta, \delta]$. Then 
\begin{align*}
\E_{x\sim\mathcal{N}(0, \sigma^2)}\big[(f(x)+\beta x)^2\big]&\geqslant \frac{(\beta-\gamma)^2}{\sqrt{2\pi\sigma^2}}\int_{-\delta}^\delta x^2e^{-\frac{x^2}{2\sigma^2}}dx\\
&\geqslant \sigma^2(\beta-\gamma)^2\Big(1-2\Big(\frac{\delta}{\sqrt{2\pi\sigma^2}}+1\Big)e^{-\frac{\delta^2}{2\sigma^2}}\Big).
\end{align*} 
\end{lemma}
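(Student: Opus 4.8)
The plan is to establish the two inequalities separately. Write $g(x) = f(x) + \beta x$, so that $g$ is differentiable with $g'(x) = f'(x) + \beta \geqslant \beta - \gamma > 0$ for $x \in [-\delta,\delta]$; in particular $g$ is increasing on $[-\delta,\delta]$ with slope at least $\beta - \gamma$ there. The difficulty in lower bounding $\E[g(X)^2]$ directly is the unknown additive constant $f(0)$ (and the absence of any control on $g$ outside $[-\delta,\delta]$), which rules out a naive pointwise bound $g(x)^2 \geqslant (\beta-\gamma)^2 x^2$. The device that removes the constant is symmetrization: since $X$ and $-X$ have the same law, expanding the square and using Cauchy--Schwarz gives
\[
\E\big[(g(X)-g(-X))^2\big] = 2\,\E[g(X)^2] - 2\,\E[g(X)g(-X)] \leqslant 4\,\E[g(X)^2],
\]
hence $\E[g(X)^2] \geqslant \tfrac14 \E[(g(X)-g(-X))^2]$.

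I would then restrict this expectation to the event $\{|X| \leqslant \delta\}$. For $|x| \leqslant \delta$, the fundamental theorem of calculus gives $g(x) - g(-x) = \int_{-x}^{x} g'(t)\,dt$, and since $g' \geqslant \beta - \gamma$ on $[-\delta,\delta]$ this forces $|g(x)-g(-x)| \geqslant 2(\beta-\gamma)|x|$, i.e. $(g(x)-g(-x))^2 \geqslant 4(\beta-\gamma)^2 x^2$ on that interval. Therefore
\[
\E[g(X)^2] \geqslant \tfrac14\,\E\big[(g(X)-g(-X))^2 \mathbf 1_{\{|X|\leqslant\delta\}}\big] \geqslant (\beta-\gamma)^2\,\E\big[X^2 \mathbf 1_{\{|X|\leqslant\delta\}}\big] = \frac{(\beta-\gamma)^2}{\sqrt{2\pi\sigma^2}}\int_{-\delta}^{\delta} x^2 e^{-x^2/(2\sigma^2)}\,dx,
\]
which is the first claimed inequality.

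For the second inequality it suffices to prove $\frac{1}{\sqrt{2\pi\sigma^2}}\int_{-\delta}^{\delta} x^2 e^{-x^2/(2\sigma^2)}\,dx \geqslant \sigma^2\big(1 - 2(\tfrac{\delta}{\sqrt{2\pi\sigma^2}}+1)e^{-\delta^2/(2\sigma^2)}\big)$ and multiply through by $(\beta-\gamma)^2$. Since $\frac{1}{\sqrt{2\pi\sigma^2}}\int_{\R} x^2 e^{-x^2/(2\sigma^2)}\,dx = \sigma^2$ is the variance of $\cN(0,\sigma^2)$, it is enough to bound the discarded tail $\frac{2}{\sqrt{2\pi\sigma^2}}\int_{\delta}^{\infty} x^2 e^{-x^2/(2\sigma^2)}\,dx$. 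Integrating by parts using the antiderivative $-\sigma^2 e^{-x^2/(2\sigma^2)}$ of $x e^{-x^2/(2\sigma^2)}$ yields $\int_{\delta}^{\infty} x^2 e^{-x^2/(2\sigma^2)}\,dx = \sigma^2 \delta e^{-\delta^2/(2\sigma^2)} + \sigma^2 \int_{\delta}^{\infty} e^{-x^2/(2\sigma^2)}\,dx$, and the last integral equals $\sqrt{2\pi\sigma^2}\,\PP_{z\sim\cN(0,\sigma^2)}(z \geqslant \delta) \leqslant \sqrt{2\pi\sigma^2}\, e^{-\delta^2/(2\sigma^2)}$ by the Gaussian tail bound of \citet{gordon1941values} used elsewhere in the paper. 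Combining, the tail is at most $2\sigma^2\big(\tfrac{\delta}{\sqrt{2\pi\sigma^2}} + 1\big) e^{-\delta^2/(2\sigma^2)}$, which gives the second inequality.

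The main obstacle is the first inequality, specifically handling the arbitrary value $f(0)$: without the symmetrization step one cannot convert the one-sided slope hypothesis on $f'$ into a quadratic lower bound on $\E[g(X)^2]$. Once symmetrization is in place the argument localizes automatically to $[-\delta,\delta]$, where the hypothesis lives, and what remains is a routine Gaussian second-moment computation; the second inequality is then just a standard tail estimate via integration by parts.
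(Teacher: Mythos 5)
Your proof is correct, and for the first inequality it takes a genuinely different route from the paper. The paper restricts the expectation to $[-\delta,\delta]$ and then replaces the integrand $(f(y)+\beta y)^2$ by $(f(0)+(\beta-\gamma)y)^2$, i.e.\ it substitutes the linear lower envelope of slope $\beta-\gamma$ through $(0,f(0))$, expands the square, kills the cross term $2(\beta-\gamma)f(0)y$ by symmetry of the Gaussian weight, and discards $f(0)^2\geqslant 0$. Your symmetrization argument ($\E[g(X)^2]\geqslant \tfrac14\E[(g(X)-g(-X))^2]$ via Cauchy--Schwarz, then $g(x)-g(-x)\geqslant 2(\beta-\gamma)x$ on $[0,\delta]$ from the derivative bound) reaches the same bound with no loss of constants, since the factor $4$ from symmetrization exactly cancels the factor $4$ in $(2(\beta-\gamma)x)^2$. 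Your route is arguably the more robust of the two: the paper's substitution step is not a pointwise inequality (when $f(0)$ and $f(y)+\beta y$ have mismatched signs, $(f(y)+\beta y)^2$ can be strictly smaller than $(f(0)+(\beta-\gamma)y)^2$ on part of the interval), so it implicitly relies on a reduction to the linear worst case that is delicate to justify for general $f$; symmetrization eliminates the unknown offset $f(0)$ structurally and only ever uses the hypothesis $f'\geqslant-\gamma$ where it is assumed to hold. For the second inequality your computation (integration by parts of $\int_\delta^\infty x^2e^{-x^2/(2\sigma^2)}dx$ plus the Gaussian tail bound) is the same as the paper's, up to working with the complementary tail rather than directly on $[-\delta,\delta]$.
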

\begin{proof}
Notice that 
\begin{align*}
\E_{Z\sim\cN(0, \sigma^2)}\big[(f(x)+\beta x)^2\big] & \geqslant \frac{1}{\sqrt{2\pi\sigma^2}}\int_{[-\delta, \delta]} (f(y)+\beta y)^2e^{-\frac{y^2}{2\sigma^2}}dy\\
&\geqslant  \frac{1}{\sqrt{2\pi\sigma^2}}\int_{[-\delta, \delta]} (-\gamma y+f(0)+\beta y)^2e^{-\frac{y^2}{2\sigma^2}}dy.
\end{align*}
The last term reads
\begin{align*}
&\frac{1}{\sqrt{2\pi\sigma^2}}\int_{-\delta}^\delta \big((\beta-\gamma)^2y^2+2(\beta-\gamma)f(0)y+f^2(0)\big)e^{-\frac{y^2}{2\sigma^2}}dy\\
&\quad =\frac{1}{\sqrt{2\pi\sigma^2}}\int_{-\delta}^\delta \big((\beta-\gamma)^2y^2+f^2(0)\big)e^{-\frac{y^2}{2\sigma^2}}dy\\
&\quad \geqslant \frac{1}{\sqrt{2\pi\sigma^2}}\int_{-\delta}^\delta(\beta-\gamma)^2y^2e^{-\frac{y^2}{2\sigma^2}}dy.\\
&\quad=\frac{-\sigma^2}{\sqrt{2\pi\sigma^2}}\int_{-\delta}^\delta(\beta-\gamma)^2y\frac{de^{-\frac{y^2}{2\sigma^2}}}{dy}dy,\\
&\quad=\Big[\frac{-\sigma^2}{\sqrt{2\pi\sigma^2}}(\beta-\gamma)^2ye^{-\frac{y^2}{2\sigma^2}}\Big]_{-\delta}^\delta+\frac{\sigma^2}{\sqrt{2\pi\sigma^2}}\int_{-\delta}^\delta(\beta-\gamma)^2e^{-\frac{y^2}{2\sigma^2}}dy\\
&\quad=\sigma^2(\beta-\gamma)^2\Big(\frac{-2\delta}{\sqrt{2\pi\sigma^2}}e^{-\frac{\delta^2}{2\sigma^2}} + \PP_{\xi\in\cN(0, 1)}(|\sigma \xi|\leqslant \delta)\Big)\\
&\quad\geqslant \sigma^2(\beta-\gamma)^2\Big(\frac{-2\delta}{\sqrt{2\pi\sigma^2}}e^{-\frac{\delta^2}{2\sigma^2}} + 1-2e^{-\frac{-\delta^2}{2\sigma^2}}\Big)
\end{align*}
where in the fifth line we use integration by part and in the last line we use a Gaussian tail bound \citep{gordon1941values}.
\end{proof}
\section{Comments on gradient descent}
\label{app:comments-gradient-descent}
The choice of SGD is motivated by the fact that the loss is defined as an expectation \eqref{eq:score-matching-objective}, thus requiring a stochastic approximation. However our analysis also applies to an idealized scenario: GD performed directly on the population risk $\cR_n$ \eqref{eq:score-matching-objective}. 

For $j\in\mathbb N$, the standard GD update is $
\theta_{j+1} = \theta_j - m\eta\nabla \cR_n(\theta_j)
$.
Similar to the argument in Section \ref{sec:problem-setups}, our analysis primarily relies on the linearized dynamics around the linearly stable global minimizer $\theta^\star$, where $\nabla \cR_n(\theta_j)\approx\nabla\cR_n(\theta^\star) + \nabla^2\cR_n(\theta^\star)(\theta_j-\theta^\star)$. Since $\nabla\cR_n(\theta^\star) = 0$, the linearized GD updates may be written as follows: 
\[
\theta_{j+1} = \theta_j - m\eta\nabla^2\cR_n(\theta^\star)(\theta_j-\theta^\star).
\]
Therefore, the central property \eqref{eq:bound-lambdamax-eta}, which relates the learning rate to the Hessian at the optimum $\nabla^2\cR_n(\theta^\star)$, still holds in this deterministic setting. Consequently, our proof carries over and shows a lower bound on the learning rate above which GD cannot converge to the global minimizer. This demonstrates that our proof of the main result does not fundamentally depend on the stochasticity of the optimization algorithm. 
The idealized scenario of GD on the population risk effectively represents the case where the variance of the gradient estimates is zero. In other words, the non-convergence of SGD towards the global minimizer is not due to the lack of handling the variance of the gradient estimates, but rather to an (implicit) bias due to the large learning rate.
 
Furthermore, our proof does not necessarily require constant learning rate. If the learning rate varies per iteration as $\eta_j$, as long as there exists $\eta_{\min} > 0$ such that $\eta_j\geq\eta_{\min}$ for every $j\in\mathbb N$, it suffices to replace the $\eta$ in \eqref{eq:bound-lambdamax-eta} with $\eta_{\min}$. Specifically, the condition becomes:
\[
\lambda_{\max}(\nabla^2\cR_n(\theta^\star))\leq \frac{2}{m\eta_{\min}}.
\]
By reaplacing the $\eta$ with $\eta_{\min}$ in subsequent propositions and bounds, the proof remains valid.

\section{Experimental details and additional results}  \label{apx:experiments}
Our code is available at \\
\url{https://github.com/pojoowu/Prevent-Memorization-via-implicit-regularization}.
\paragraph{Model.} For all the experiments, we fix the model to be a 2-layer ReLU network with a hidden width $m=1000$. We initialize outer weights as standard Gaussian random variables, inner weights as Gaussian random variables of variance $1/d$, and the inner bias to be 0. Note that this is the standard initialization scheme of $2$-layer networks (in the feature learning regime), and differs from our theoretical setup from Section \ref{sec:problem-setups} where the inner weights are set to $\pm 1$.
\paragraph{Figure \ref{fig:exp-plot-function}.} We use a set of learning rates in $\{.5, .1, .05\}$ and with number of epochs 
$$\{5,000, 25,000, 50,000\}$$
respectively. The batch size is set to $50$. For each pair of $(\mu, \sigma)$ we generate $20$ training data by sampling the standard Gaussian distribution, and keep the training data to be the same for every learning rate. 

\paragraph{Figure \ref{fig:excess-loss-1d}.} The model is trained with different learning rates with $30$ simulations. The set of learning rates is the same as previously, with additionally the learning rate $.01$ (and  $200,000$ epochs). In each simulation, we generate $20$ training data with the standard Gaussian law and use a batch size of $50$. In addition, to estimate the excess risk 
\[
\cR_n(\theta^\star) - \cR_n(s^\star) = \frac1n\sum_{i=1}^n\E_{Y\sim \mathcal{N}(\mu x_i , \sigma^2)}\big[(s_{\theta^\star}(Y) - s^\star(Y;\mu,\sigma))^2\big],
\]
we generate $5000$ Gaussian noises for each training data to simulate the expectation.
\paragraph{Figure \ref{fig:exp-2d-memorization}.} The training data is sampled from the isotropic Gaussian distribution of standard deviation $2$, and we keep the dataset the same for training with each learning rate. We minimize by SGD over $r_\theta: \R^{d+1} \to \R^d$ the risk
\[
\int_\delta^T \frac1n\sum_{i=1}^n \E_{Z\sim\cN(0, I)}\big[(r_\theta(t, \mu(t)Y+\sigma(t) Z)-Z)^2\big],
\]
where $\mu(t) = e^{-t}$ and $\sigma(t)=\sqrt{1-e^{-2t}}$. The integral over $T$ is discretized over $100$ equally spaces times.
The batch size is set to $5,000$. For each batch element, the time $t$ is sampled among the $100$ discretization points, with a probability proportional to $\sigma(t)$. We use a set of learning rate in $\{2., 0.05\}$ and number of epochs to be $\{2\times10^4, 10^6\}$. Note that the risk differs from the one we analyze (c.f.~\eqref{eq:score-matching-objective}) by an affine transform. This is standard in practice for numerical stability reasons. Accordingly, we take the score to be $s_\theta(t,x) = -\frac{1}{\sigma(t)} r_\theta(t,x)$.
We then generate new samples using the backward Ornstein-Uhlenbeck process starting from $T=1$ and ending at $\delta =.01$. The MMD distance is calculated with the Gaussian kernel with bandwidth equal to $1$. We also checked that other metrics give qualitatively similar conclusions (MMD with other bandwidth, Wasserstein distance).
\paragraph{Figure \ref{fig:exp-dimension}.} We fix the learning rate to be $.5$ and the number of epochs to be $50,000$. We use a set of dimensions $\{2, 5, 10, 50, 80, 100, 200, 400, 1000\}$. For each dimension, we train the model with $3$ simulations and in each simulation generate the training data by sampling the isotropic Gaussian of standard deviation $2$. The score matching objective and generation procedure are the same as previously. We generate $5$ sets of data with each simulations, and the MMD is computed with the Gaussian kernel with bandwidth set to $1$. 
\begin{figure}
    \centering
    \hfill
    \includegraphics[width=0.32\linewidth]{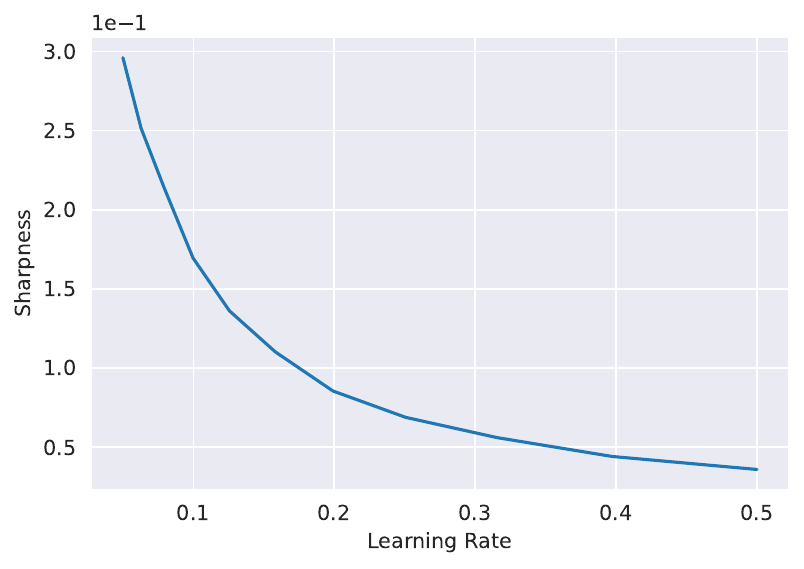}
    \hfill
    \includegraphics[width=0.32\linewidth]{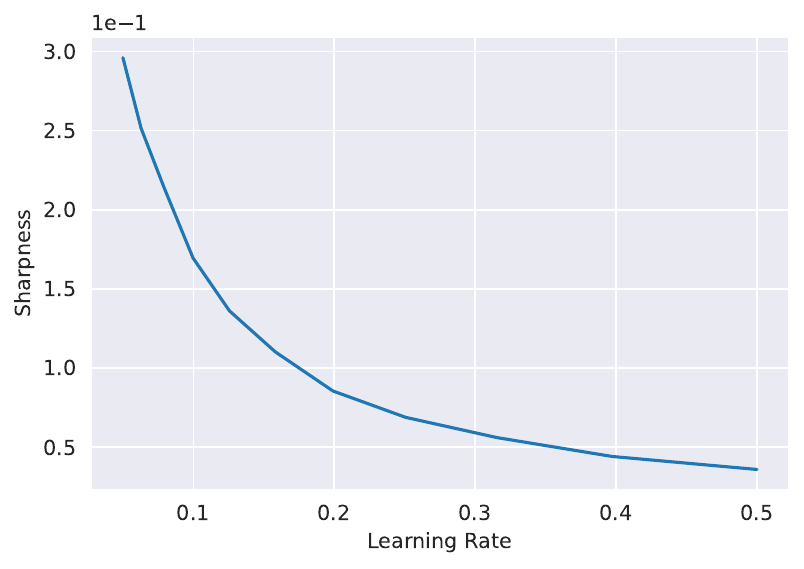}
    \includegraphics[width=0.32\linewidth]{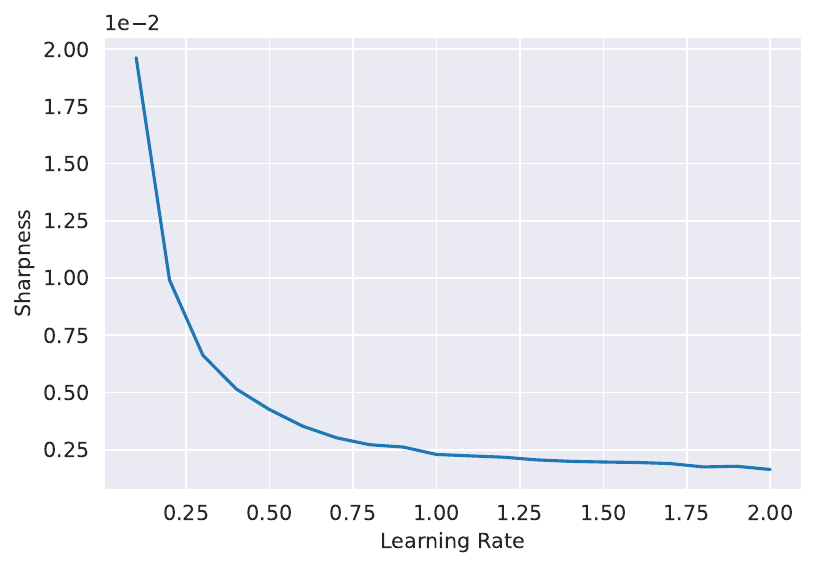}
    \caption{Largest eigenvalue of the loss Hessian (or sharpness) at the end of training, as a function of the learning rate. (left) and (middle) for the experiment of Figures \ref{fig:exp-plot-function} and \ref{fig:excess-loss-1d}, for $d=1$ and $d=10$ respectively, with $(\mu, \sigma) = (0.81, 0.57)$. (right) for the experiment of Figure \ref{fig:exp-2d-memorization}.}
    \label{fig:sharpness}
\end{figure}
\end{document}